\newcommand{\R}{\mathbb{R}} 
\newcommand{\N}{\mathbb{N}} 
\newcommand{\Parents}{\mathbf{\operatorname{PA}}} 
\newcommand{\Child}{\mathbf{\operatorname{CH}}} 
\newcommand{\Anc}{\operatorname{AN}} 
\newcommand{\Desc}{\operatorname{DE}} 
\newcommand{\Nondesc}{\operatorname{ND}} 
\newcommand{\indep}{\,\rotatebox[origin=c]{90}{$\models$}\,} 
\newcommand{\set}[1]{\{#1\}}
\newcommand{\notindep}{\not\hspace{-1.8mm}\indep}
\newcommand{\dindep}[1]{\indep^d_{#1}}
\newcommand{\notdindep}[1]{\not\hspace{-1.8mm}\indep^d_{#1}}
\newcommand{\mindep}[1]{\indep^m_{#1}}
\newcommand{\district}{\operatorname{D}}
\newcommand{\mean}{\mathbb{E}}
\newcommand{\cmark}{\ding{51}}%
\newcommand{\xmark}{\ding{55}}%
\newcommand{\cM}{{\cal M}}
\newcommand{\cG}{{\cal G}}
\newcommand{\cP}{{\cal P}}
\newcommand{\bK}{{\textbf K}}
\newcommand{\bG}{{\textbf G}}
\newcommand{\bv}{{\textbf v}}
\newcommand{\lchead}{\circ\hspace{-1.4mm}}
\newcommand{\rchead}{\hspace{-1.4mm}\circ}
\newcommand{\Gx}{{\cal G}}
\newtheorem{assumption}{Assumption}
\newtheorem{condition}{Condition}
\newcommand{\FM}[1]{\textbf{\color{magenta}FM:}{~\color{purple}#1}}
\newcommand{\FL}[1]{}
\newcommand{\PF}[1]{}
\title{Score matching through the roof: linear, nonlinear, and latent variables causal discovery}
\thanks{Shared first co-author} \Email{francesco.montagna@edu.unige.it}\\
\thanks{Part of this work was done while Philipp M. Faller was an intern at Amazon Research Tübingen.}$ \Email{philipp.faller@partner.kit.edu}\\
\begin{document}

\maketitle

\begin{abstract}%
Causal discovery from observational data holds great promise, but existing methods rely on strong assumptions about the underlying causal structure, often requiring full observability of all relevant variables. 
We tackle these challenges by leveraging the score function $\nabla \log p(X)$ of observed variables for causal discovery and propose the following contributions. 
First, we fine-tune the existing identifiability results with the score on additive noise models, showing that their assumption of nonlinearity of the causal mechanisms is not necessary. 
Second, we establish conditions for inferring causal relations from the score even in the presence of hidden variables; 
this result is two-faced: we demonstrate the score’s potential to infer the equivalence class of causal graphs with hidden
variables (while previous results are restricted to the fully observable setting), and we provide sufficient conditions for identifying direct causes in latent variable models. Building on these insights, we propose a flexible algorithm suited for causal discovery on linear, nonlinear, and latent variable models, which we empirically validate.%
\end{abstract}

\begin{keywords}%
  Causal Discovery, Causality, Score-matching%
\end{keywords}

\section{Introduction}\label{sec:intro}
The inference of causal effects from observations holds the potential for great impact arguably in any domain of science, where it is crucial to be able to answer interventional and counterfactual queries from observational data \citep{peters2017elements, pearl2009causality, spirtes10_intro}. Existing causal discovery methods can be categorized based on the information they can extract from the data and the assumptions they rely on \citep{glymour2019_review}. Traditional causal discovery methods (e.g. PC, GES \citep{spirtes2000_cauastion, chickering03_ges}) are general in their applicability but limited to the inference of an equivalence class. Additional assumptions on the structural equations generating effects from the cause are, in fact, imposed to ensure the identifiability of a causal order \citep{ shimizu2006_icalingam, hoyer08_anm, peters_2014_identifiability, zhang2009PNL}. As a consequence, existing methods for causal discovery require specialized and often untestable assumptions, preventing their application to real-world scenarios.

\looseness-1Further, the majority of existing approaches are hindered by the assumption that all relevant causes of the measured data are observed, which facilitates the interpretation of associations in the data as causal relationships. Despite the convenience of this hypothesis, it is often not met in practice, and the solutions relaxing this requirement face substantial limitations. The FCI algorithm \citep{spirtes2001_fci} can only return an equivalence class from the data. Appealing to additional restrictions ensures the identifiability of some direct causal effects in the presence of latent variables: RCD \citep{maeda2020_rcd} relies on the linear non-Gaussian additive noise model, whereas CAM-UV \citep{maeda21_causal} requires nonlinear additive mechanisms. Nevertheless, the strict conditions on the structural equations hold back their applicability to more general settings. 

\looseness-1Our paper tackles these challenges and can be put in the context of a recent line of academic research that derives a connection between the score function $\nabla \log p(X)$ and the causal graph underlying the data-generating process \citep{ghoshal2018learning, rolland22_score, montagna2023shortcuts, montagna23_das, montagna23_nogam, sanchez23diffan}. The use of the score for causal discovery is practically appealing, as it yields advantages in terms of scalability to high dimensional graphs \citep{montagna23_das} and guarantees of finite sample complexity bounds \citep{zhu2024sample}. Instead of imposing assumptions that ensure strong, though often impractical, theoretical guarantees, we organically demonstrate different levels of identifiability based on the strength of the modeling hypotheses, always relying on the score function to encode all the causal information in the data. Starting from results of \citet{spantini2018inference} and \citet{lin1997_factorizing}, we show how constraints on the Jacobian of the score $\nabla^2 \log p(X)$ can be used to characterize the Markov equivalence class of causal models with hidden variables. 
Previous works exploit this connection in case of fully observable causal models \citep{montagna23_das, liu2024mixed}. Further, we prove that the score function identifies the causal direction of additive noise models, with minimal assumptions on the causal mechanisms. This extends the previous findings of \citet{montagna23_nogam}, limited by the assumption of nonlinearity of the causal effects. On these results, we build the main contributions of our work, enabling the identification of direct causal effects in hidden variable models. 

\textbf{Our main contributions} are as follows: \textit{(i)} We present conditions for the identifiability of direct causal effects with the score in the case of latent variables models. \textit{(ii)} We propose AdaScore (Adaptive Score-matching-based causal discovery), a flexible algorithm for causal discovery based on score matching estimation of $\nabla \log p(X)$ \citep{hyvrinen2005_scorematching}. Based on the user's belief about the plausibility of several modeling assumptions on the data, AdaScore can output a Markov equivalence class, a directed acyclic graph, or a mixed graph, accounting for the presence of unobserved variables. To the best of our knowledge, the broad class of causal models handled by our method is unmatched by other approaches in the literature. Our main contributions are presented in \cref{sec:unobservable_identifiability,sec:algo,sec:experiments}. The preliminary \cref{sec:score_ci,sec:observable_identifiability} summarise and generalize the existing theory connecting causal discovery and the score function, and Section \ref{sec:model} introduces the formalism of structural causal models, with and without latent variables.

\section{Model definition}\label{sec:model}
In this section, we introduce the formalism of structural causal models (SCMs), for the cases with and without hidden variables.

\subsection{Causal model with observed variables}
Let $X$ be a set of random variables in $\R$ defined according to the set of structural equations
\begin{equation}\label{eq:scm}
    X_i \coloneqq f_i(X_{\Parents^\cG_i}, N_i), \hspace{2mm} \forall i=1,\ldots,k.
\end{equation}
$N_i \in \R$ are mutually independent random variables with strictly positive density, known as \textit{noise} or error terms. The function $f_i$ is the \textit{causal mechanism} mapping the set of \textit{direct causes} $X_{\Parents^\cG_i}$ of $X_i$ and the noise term $N_i$, to $X_i$'s value. A structural causal model (SCM) is defined as the tuple $(X, N, \mathcal{F}, \mathbb P_N)$, where $\mathcal{F} = (f_i)_{i=1}^k$ is the set of causal mechanisms, and $\mathbb P_N$ is the joint distribution with the density $p_N$ over the noise terms $N \in \R^k$.
 We define the \textit{causal graph} $\Gx$ as a directed acyclic graph (DAG) with nodes $ X = \{X_1,\ldots,X_k\}$, and the set of edges defined as $\{X_j \rightarrow X_i: X_j \in X_{\Parents^\cG_i} \}$, such that $\Parents^\cG_i$ are the indices of the parent nodes of $X_i$ in the graph $\Gx$. (In the remainder of the paper, we adopt the following notation: given a set of random variables $Y=\set{Y_1, \ldots, Y_n}$ and a set of indices $Z \subset \N$, then $Y_Z = \set{Y_i | i \in Z, Y_i \in Y}$.)

Under this model, the probability density of $X$ satisfies the \emph{Markov factorization} (e.g. \citet{peters2017elements} Proposition 6.31):
\begin{equation}\label{eq:markov_factorization}
    p(x) = \prod_{i=1}^k p(x_i | x_{\Parents^\cG_i}),
\end{equation}
where we adopt the convention of lowercase letters referring to realized random variables, and use $p$ to denote the density of different random objects, when the distinction is clear from the argument.
In the remainder of the paper, we assume that \textit{faithfulness} is satisfied \citep{pearl2009causality, uhler12_faithfulness} (Definition \ref{def:faithfulness} in the appendix). Together with the \textit{global Markov condition} (implied by \cref{eq:markov_factorization}, see e.g. \citet{peters2017elements} Proposition 6.22), this means that probabilistic and graphical statements of conditional independence are equivalent, such that for $\{X_i, X_j\}\subseteq X$ and $X_Z\subseteq X\setminus\{X_i, X_j\}$
\begin{equation}\label{eq:dsep_criterion}
X_i \indep X_j | X_Z \Longleftrightarrow X_i \dindep{\Gx} X_j | X_Z,
\end{equation} 
where $(\cdot \indep \cdot | \cdot)$ denotes probabilistic conditional independence of $X_i, X_j$ given $X_Z$, and $(\cdot \indep^d_{\mathcal{G}} \cdot | \cdot)$ is the notation for \textit{d-separation}, a criterion of conditional independence defined on the graph $\Gx$ (Definition \ref{def:msep} of the appendix).

The above model assumes that there aren't any unobserved causes of variables in $X$, other than the noise terms in $N$. As we are interested in distributions with potential hidden variables, we will now generalize our model to represent data-generating processes that may involve latent causes. 


\subsection{Causal model with unobserved variables}
Under the model \eqref{eq:scm}, we consider the case where the set of variables $X$ is partitioned into the disjoint subsets of \textit{observed} random variables $V = \set{V_1, \ldots, V_d}$ and \textit{unobserved} (or \textit{latent}) random variables $U = \set{U_{d+1}, \ldots, U_p}$. 
We assume that the following set of structural equations is satisfied:

\begin{align}
    V_i &\coloneqq f_i(V_{\Parents^\cG_i}, U^i, N_i), \hspace{2mm} &\forall i=1,\ldots,d, \label{eq:latent_scm_v}\\
    U_i &\coloneqq f_i(X_{\Parents^\cG_i}, N_i), \hspace{2mm} &\forall i=d+1,\ldots,p, \label{eq:latent_scm_u}
\end{align}
where $U^i$ stands for the set of unobserved parents of $V_i$, and $V_{\Parents^\cG_i} = \set{V_k| k \in \Parents_i^\Gx, V_k \in V}$ are the observed parents of $V_i$.
Some of the causal relations and the conditional independencies implied by the set of equations \eqref{eq:latent_scm_v} can be summarized in a graph obtained as a \textit{marginalization} of the DAG $\Gx$ onto the observable nodes $V$. For the next definition, we advise the reader to be comfortable with the notions of \textit{ancestors} (Definition \ref{def:ancestor}) and \textit{inducing paths} (Definition \ref{def:inducing_path}) in DAGs. 
\begin{definition}[Marginal graph, \citet{zhang2008causal}]\label{def:marginal_g}
    Let $X=V \cup U$, $V$ and $U$ disjoint, and $\Gx$ be a DAG over $X$. The following construction gives the \textit{marginal} graph $\cM^\Gx_V$, with nodes $V$ and edges found as follows:
    \begin{itemize}
        \item pair of nodes $V_i ,V_j$ are adjacent in the graph $\cM^\Gx_V$ if and only if there is an inducing path between them relative to $U$ in $\Gx$;
        \item for each pair of adjacent nodes $V_i, V_j$ in $\cM^\Gx_V$, orient the edge as $V_i \rightarrow V_j$ if $V_i$ is an ancestor of $V_j$ in $\Gx$, else orient it as $V_i \leftrightarrow V_j$.
    \end{itemize}
\looseness-1We define the map $\Gx \mapsto \cM^\Gx_V$  as the \textit{marginalization} of the DAG $\Gx$ onto $V$, the observable nodes.
\end{definition}
The graph resulting from the above construction is a maximal ancestral graph (MAG, Definition \ref{def:mag}), hence we will often refer to it as the \textit{marginal MAG} of $\cG$. Intuitively,  edges denote dependencies that cannot be removed by conditioning on any of the observed variables; in particular, if the edge is directed it denotes an ancestorship relation.

In the case of DAGs, d-separation encodes the probabilistic conditional independence relations between the variables of $X$ in the graph $\Gx$, as explicit by Equation \eqref{eq:dsep_criterion}. Such notion of graphical separation has a natural generalization to maximal ancestral graphs, known as \textit{m-separation} (Definition \ref{def:msep} of the appendix). \citet{zhang2008causal} shows that \textit{m-separation} and \textit{d-separation} are in fact equivalent (see Lemma \ref{lem:msep_dsep_equivalence} of the appendix), such that given $\set{V_i, V_j} \subset V$ and $V_Z \subset V\setminus\{V_i, V_j\}$ , the following holds:
\begin{equation}\label{eq:msep_dsep_equivalence}
    V_i \indep_{\Gx}^d V_j | V_Z \iff V_i \indep_{\cM^\Gx_V}^m V_j | V_Z,
\end{equation}
where $(\cdot \indep_{\cM^\Gx_V}^m \cdot \mid \cdot)$ denotes \textit{m-separation} relative to the graph $\cM^\Gx_V$. Just like with DAGs, MAGs that imply the same set of conditional independencies define an equivalence class.
Usually, the common structure of these graphs is represented by partial ancestral graphs (PAGs, Definition \ref{def:pag} of the appendix). We use $\cP_{\cM^\Gx_V}$ to denote the PAG relative to $\cM^\Gx_V$.
%
%
%

\begin{emptyroundbox}
\paragraph{Problem definition.}
In this work, our goal is to provide theoretical guarantees for the identifiability of the Markov equivalence class of the marginal graph $\cM^{\Gx}_V$ and its direct causes with the score, where variables $V_i$ are defined according to Equation \eqref{eq:latent_scm_v}.
\end{emptyroundbox}
 Without further assumptions on the data-generating process, we can identify the graph $\cM_V^\Gx$ only up to its partial ancestral graph. This information is encoded in the Jacobian of the score, as discussed in the next section.

\section{A score-matching-based criterion for m-separation}\label{sec:score_ci}
In this section, we show that for $V \subseteq X$ generated according to Equation \eqref{eq:latent_scm_v} the Hessian matrix of $\log p(V)$  identifies the equivalence class of the marginal MAG $\cM^\Gx_V$. It has already been proven that cross-partial derivatives of the log-likelihood are informative about a set of conditional independence relationships between random variables: \citet{spantini2018inference} (Lemma 4.1) and previously \citet{lin1997_factorizing} show that, given $V_Z \subseteq X$ such that $\set{V_i, V_j} \subseteq V_Z$, then
\begin{equation}\label{eq:lem_spantini}
    \frac{\partial^2}{\partial V_i \partial V_j} \log p(V_Z) = 0 \iff V_i \indep V_j | V_{Z} \setminus \set{V_i, V_j}.
\end{equation}
\looseness-1Equation \eqref{eq:dsep_criterion} resulting from faithfulness and the directed global Markov property implies that this expression can be used as a test of graphical separation to identify the Markov equivalence class of the graph $\cM_{V}^\Gx$, as commonly done in constraint-based causal discovery (for reference, see e.g. Section 3 in \citet{glymour2019_review}). This result generalizes Lemma 1 of \citet{montagna23_das}, where it is used to define constraints to infer edges in the causal structure without latent variables, under the assumption of nonlinear models with additive Gaussian noise.

\begin{proposition}[Corollary of \citet{spantini2018inference}]\label{prop:score_msep}\hspace{-1.5mm}\footnote{In their Lemma 4.1 \citet{spantini2018inference} provides the connection between vanishing cross-partial derivatives of the log-likelihood and conditional independence of random variables. Note that this result does not depend on the assumption of a generative model, thus holding beyond the set of structural equations \eqref{eq:latent_scm_v} and \eqref{eq:latent_scm_u}. Our result exploits their finding to the case when observations are generated according to a causal model with potentially latent variables.}
    Let $V$ be a set of random variables with strictly positive density generated according to the structural equations \eqref{eq:latent_scm_v}. For each set $V_Z \subseteq V$ of nodes in $\cM_{V}^\Gx$ such that $\set{V_i, V_j} \subseteq V_Z$, then the following holds:
$$
    \frac{\partial^2}{\partial V_i \partial V_j} \log p(V_Z) = 0 \iff V_i \mindep{\cM_{V}^\Gx} V_j | V_Z\setminus \set{V_i, V_j}.
$$
\end{proposition}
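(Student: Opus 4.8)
The plan is to obtain the claimed equivalence by composing three already-available links: the purely probabilistic characterization of conditional independence via vanishing cross-partial derivatives of the log-likelihood from \citet{spantini2018inference} (\cref{eq:lem_spantini}), the equivalence between probabilistic and $d$-separation statements granted by faithfulness together with the directed global Markov property (\cref{eq:dsep_criterion}), and the equivalence between $d$-separation in the DAG $\Gx$ and $m$-separation in the marginal MAG $\cM^\Gx_V$ due to \citet{zhang2008causal} (\cref{lem:msep_dsep_equivalence}, i.e. \cref{eq:msep_dsep_equivalence}).

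First I would verify that the hypotheses of \cref{eq:lem_spantini} are met for the set $V_Z$. Since $X = V\cup U$ is generated by the SCM \eqref{eq:latent_scm_v}--\eqref{eq:latent_scm_u} with noise terms of strictly positive density, the joint density of $X$ is strictly positive, hence so is every marginal $p(v_Z)$ with $V_Z\subseteq V$; moreover $\{V_i,V_j\}\subseteq V_Z$ by assumption. Applying \cref{eq:lem_spantini} to $V_Z$ gives
\[
\frac{\partial^2}{\partial V_i \partial V_j}\log p(V_Z) = 0 \iff V_i \indep V_j \mid V_Z\setminus\{V_i,V_j\},
\]
where the right-hand side is probabilistic conditional independence in the observational distribution over $V$. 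Next, since $V_i,V_j\in X$ and $V_Z\setminus\{V_i,V_j\}\subseteq V\subseteq X$, faithfulness and the directed global Markov property (\cref{eq:dsep_criterion}) yield
\[
V_i \indep V_j \mid V_Z\setminus\{V_i,V_j\} \iff V_i \dindep{\Gx} V_j \mid V_Z\setminus\{V_i,V_j\}.
\]
Finally, because $V_Z\setminus\{V_i,V_j\}\subseteq V$ is disjoint from $\{V_i,V_j\}$, \cref{eq:msep_dsep_equivalence} applies and gives
\[
V_i \dindep{\Gx} V_j \mid V_Z\setminus\{V_i,V_j\} \iff V_i \mindep{\cM^\Gx_V} V_j \mid V_Z\setminus\{V_i,V_j\}.
\]
Chaining the three displayed equivalences proves the proposition.

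The argument is essentially bookkeeping, so the only points requiring genuine care — and what I would treat as the ``main obstacle'' — are: (a) confirming that strict positivity of the latent-augmented joint density of $X$ descends to the observed marginals $p(v_Z)$, so that \cref{eq:lem_spantini} is legitimately applicable in the latent-variable setting (the cited lemma is stated for a generic strictly positive density and does not itself assume any generative model, so this is the step that connects it to our SCM); and (b) checking that the conditioning set entering each invoked lemma has exactly the form that lemma requires — in particular that \cref{eq:msep_dsep_equivalence} is applied only with a conditioning set contained in $V$ and disjoint from $\{V_i,V_j\}$, which holds automatically here. I expect no substantive difficulty beyond these verifications.
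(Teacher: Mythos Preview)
Your proposal is correct and follows essentially the same route as the paper: chain \cref{eq:lem_spantini}, then \cref{eq:dsep_criterion}, then \cref{lem:msep_dsep_equivalence}. The paper's proof compresses the first two links into a single equivalence and omits your hypothesis checks (a), (b), but the logical structure is identical.
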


The result of Proposition \ref{prop:score_msep} presents an alternative way to assess graphical separation in \textit{constraint-based} approaches to causal discovery: the equivalence class of the graph $\cM_V^\Gx$ can be identified using the cross-partial derivatives of the log-likelihood to characterize conditional independencies between variables, much in the spirit of the Fast Causal Inference (FCI) algorithm \citep{spirtes2001_fci}.
Identifying the Markov equivalence class is the most we can hope to achieve without further restrictions on the hypothesis class.
As we will see in the next section, the score function can also help leverage additional restrictive assumptions on the causal mechanisms of Equation \eqref{eq:latent_scm_v} to identify direct causal effects.

\section{A theory of identifiability from the score}\label{sec:identifiability}

In this section, we show that, under additional assumptions on the data-generating process, we can identify some direct causal relations with the score even in the case of latent variable models.

As a preliminary step before diving into causal discovery with hidden nodes, we show how the properties of the score function identify edges in directed acyclic graphs, that is in the absence of unmeasured variables (when $U = \emptyset$ and $\Gx = \cM^\Gx_V$). The goal of the next section is two-sided: first, it introduces the fundamental ideas connecting the score function to causal discovery that also apply to hidden variable models, second, it extends the existing theory of causal discovery with score matching to additive noise models with both linear and nonlinear mechanisms. 

\subsection{Warm up: identifiability without latent confounders}\label{sec:observable_identifiability}
\looseness-1In this section, we summarise and extend the theoretical findings presented in \citet{montagna23_nogam}, where the authors show how to derive constraints on the score function that identify the causal order of the DAG $\Gx$ where all the variables in the set $X$ are observed. Define the structural relations of \eqref{eq:scm} as:
\begin{equation}\label{eq:anm}
    X_i \coloneqq h_i(X_{\Parents^\Gx_i}) + N_i, i = 1, \ldots, k,
\end{equation}
with three times continuously differentiable mechanisms $h_i$, noise terms centered at zero, and strictly positive density $p_X$. Further, we assume that the SCM on $X$ is a \textit{restricted} additive noise model (Definition \ref{def:restricted_scm}), which is necessary to ensure identifiability. Given the Markov factorization of Equation \eqref{eq:markov_factorization}, and the change of variable formula for densities, the components of the score function $\nabla \log p(x)$ are:
\begin{equation}\label{eq:score_general}
    \begin{split}
    \partial_{X_i} \log p(x) 
    = \partial_{N_i} \log p(n_i) - \sum_{j \in \Child^\Gx_i} \partial_{X_i}h_j(x_{\Parents^\Gx_j})\partial_{N_j} \log p(n_j),
    \end{split}
\end{equation}
where $\Child^\Gx_i$ denotes the set of children of node $X_i$. We observe that if a node $X_s$ is a \textit{sink}, i.e. a node satisfying $\Child^\Gx_s = \emptyset$, then the summation over the children vanishes, implying $\partial_{X_s} \log p(x) = \partial_{N_s} \log p(n_s)$.
\looseness-1The key point is that the score component of a sink node is a function of its structural equation noise term, such that one could learn a consistent estimator of $\partial_{X_s} \log p_X$
from a set of observations of the noise term $N_s$. Given that, in general, one has access to $X$ samples rather than observations of the noise random variables, authors in \citet{montagna23_nogam} show that $N_s$ of a sink node can be consistently estimated from i.i.d. realizations of  $X$. For each node  $X_1, \ldots, X_k$, we define the quantity:
\begin{equation}\label{eq:observed_residual}
    R_i \coloneqq X_i - \mean[X_i|X_{\setminus {X_i}}],
\end{equation}
where $X_{\setminus {X_i}}$ are the random variables in the set $X \setminus \set{X_i}$. $\mean[X_i|X_{\setminus {X_i}}]$ is the optimal least squares predictor of $X_i$ from all the remaining nodes in the graph, and $R_i$ is the regression residual. For a sink node $X_s$, the residual satisfies:
\begin{equation}\label{eq:observed_sink_residual}
R_s = N_s,
\end{equation}
which can be seen by rewriting $\mean[X_s|X_{\setminus {X_s}}] = h_s(X_{\Parents^\Gx_s}) +\mean[N_s|X_{\operatorname{DE}^\Gx_s}, X_{\operatorname{ND}^\Gx_s}] = h_s(X_{\Parents^\Gx_s}) + \mean[N_s]$, where $X_{\operatorname{DE}^\Gx_s}$ and $X_{\operatorname{ND}^\Gx_s}$ denotes the descendants and non-descendants of $X_s$, respectively. Equations \eqref{eq:score_general} and \eqref{eq:observed_sink_residual} together imply that the score $\partial_{N_s} \log p(N_s)$ is a function of $R_s$, such that it is possible to find a consistent approximator of the score of a sink from observations of $R_s$.
\begin{proposition}[Generalization of Lemma 1 in \citet{montagna23_nogam}]\label{prop:nogam}
    Let $X$ be a set of random variables, generated by a restricted additive noise model (Definition \ref{def:restricted_scm}) with structural equations \eqref{eq:anm}, and let $X_j \in X$. Then:
\begin{equation}
X_j \textnormal{ is a sink} \Longleftrightarrow \mean\left[\left(\mean\left[\partial_{X_j} \log p(X) \mid R_j\right] - \partial_{X_j} \log p(X)\right)^2\right] = 0.
\label{eq:lem_nogam}
\end{equation}  
\end{proposition}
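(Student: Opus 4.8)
The plan is to prove the two implications of the biconditional separately, exploiting the characterization of the sink-node score already developed in the text.

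For the ($\Rightarrow$) direction, suppose $X_j$ is a sink. Then by Equation \eqref{eq:observed_sink_residual} we have $R_j = N_j$, and by Equation \eqref{eq:score_general} the score component collapses to $\partial_{X_j}\log p(X) = \partial_{N_j}\log p(N_j)$, which is a deterministic function of $N_j = R_j$. Hence the conditional expectation $\mean[\partial_{X_j}\log p(X)\mid R_j]$ equals $\partial_{X_j}\log p(X)$ almost surely, and the mean-squared difference in \eqref{eq:lem_nogam} is exactly zero. This direction is essentially a bookkeeping exercise assembling facts already established.

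For the ($\Leftarrow$) direction — which I expect to be the main obstacle — I would argue by contraposition: assuming $X_j$ is \emph{not} a sink, I must show the residual cannot capture the full score, i.e. $\partial_{X_j}\log p(X)$ is not a function of $R_j$ alone. First I would identify what $R_j$ is when $X_j$ is a non-sink. Writing $\mean[X_j\mid X_{\setminus X_j}] = h_j(X_{\Parents_j^\Gx}) + \mean[N_j\mid X_{\setminus X_j}]$, and noting that conditioning on the descendants of $X_j$ (in particular its children) makes $N_j$ informative, $R_j = N_j - \mean[N_j \mid X_{\setminus X_j}]$ is in general a nontrivial mixture of $N_j$ and the downstream noises. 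Meanwhile, from \eqref{eq:score_general}, $\partial_{X_j}\log p(X) = \partial_{N_j}\log p(N_j) - \sum_{\ell\in\Child_j^\Gx}\partial_{X_j}h_\ell(x_{\Parents_\ell^\Gx})\,\partial_{N_\ell}\log p(N_\ell)$ genuinely depends on the children's values and noise terms. The crux is to show these two objects are measurably incompatible: $R_j$ and $\partial_{X_j}\log p(X)$ depend on the underlying noise variables through different functional combinations, so no deterministic $g$ with $\partial_{X_j}\log p(X) = g(R_j)$ a.s. can exist. I would make this precise using the \emph{restricted} additive noise model assumption (Definition \ref{def:restricted_scm}), which rules out the degenerate parameter choices (linear mechanisms with Gaussian noise in the wrong configuration, etc.) under which the dependence on children could accidentally cancel; this is presumably exactly the role that restriction plays in the analogous Lemma 1 of \citet{montagna23_nogam}. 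Concretely, one shows that the map from $(N_j, N_{\Child_j^\Gx}, \ldots)$ to the pair $(R_j, \partial_{X_j}\log p(X))$ has image not contained in the graph of any function, so the best $R_j$-measurable predictor leaves a strictly positive residual variance.

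The remaining care is to check measurability and integrability so that the conditional expectation $\mean[\,\cdot\mid R_j]$ and the outer expectation in \eqref{eq:lem_nogam} are well-defined — this follows from the smoothness of the mechanisms $h_i$, strict positivity of $p_X$, and the standing regularity assumptions, and I would dispatch it briefly rather than dwell on it. I would also point out that the argument reduces cleanly to the original nonlinear case of \citet{montagna23_nogam} on the subgraph where mechanisms are nonlinear, so the genuinely new content is verifying that allowing linear mechanisms does not break the non-sink direction, which is where the restricted-model hypothesis earns its keep.
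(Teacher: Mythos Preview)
Your forward direction matches the paper's and is complete. The backward direction has the right overall shape (contraposition: if $X_j$ has a child, the score cannot be $\sigma(R_j)$-measurable), but you have not identified the concrete mechanism by which the restricted-model hypothesis is cashed in, and that mechanism is really the whole proof.

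The paper proceeds via two ingredients you are missing. First, it argues that fixing $R_j = r_j$ does not restrict the support of $X$: since $R_j = N_j - \mean[N_j\mid X_{\setminus X_j}]$ and $N_j$ has strictly positive density on $\R$, for every value of $x_{\setminus X_j}$ there is an $n_j$ making the constraint hold. This lets one freely vary the remaining coordinates while staying on the level set $\{R_j = r_j\}$. Second, and crucially, it picks a child $X_i$ of $X_j$, conditions on a set $X_S$ with $X_{\Parents_i^\Gx}\setminus\{X_j\}\subseteq X_S\subseteq X_{\Nondesc_i^\Gx}\setminus\{X_i,X_j\}$ to reduce to a bivariate problem in $(X_j, X_i)$, and then invokes Lemma~\ref{lem:score_variability}: under Condition~\ref{cond:peters} (which is exactly what Definition~\ref{def:restricted_scm} guarantees for the triple $(g_i, p_{X_j\mid X_S}, p_{N_i})$), the derivative $\partial_{X_j}\bigl(\xi'(x_j) - g_i'(x_j)\,\nu'(x_i - g_i(x_j))\bigr)$ is nonzero, so one can exhibit two points $x, x^*$ differing only in the $j$-th coordinate with $s_j(x)\neq s_j(x^*)$, both compatible with the same $R_j = r_j$. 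This contradicts $s_j$ being constant on the level set.

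Your sketch (``the map to $(R_j, \partial_{X_j}\log p(X))$ has image not contained in the graph of any function'') names the right target, but without the support argument, the bivariate reduction, and Lemma~\ref{lem:score_variability} you have no tool to establish it. Saying the restriction ``rules out degenerate parameter choices'' is not yet a proof step: the restriction is a differential inequality (Condition~\ref{cond:peters}), and the way it enters is precisely through the non-vanishing derivative of the conditional score in Lemma~\ref{lem:score_variability}. That lemma is where the generalization over \citet{montagna23_nogam} actually lives, since it applies to any restricted ANM rather than only nonlinear ones.
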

Our result generalizes Lemma 1 in \citet{montagna23_nogam}, as they assume  $X$ generated by an identifiable additive noise model with nonlinear mechanisms, which we can replace by the weaker hypothesis of \textit{restricted} additive noise model. These findings are not surprising, in light of previous literature connecting the precision matrix and the causal structure of linear SCMs \citep{ghoshal2018learning}, as we discuss in Appendix \ref{app:nogam_linear}. 
%
Inspired by these results, in the next section we demonstrate the identifiability via the score of direct causal effects between a pair of variables in the marginal MAG $\cM^\Gx_V$ when $U \neq \emptyset$.

\subsection{Identifiability in the presence of latent confounders}\label{sec:unobservable_identifiability}
\looseness-1We now introduce our main theoretical result, that is: given a pair of nodes $V_i$, $V_j$ that are adjacent in the graph $\cM^\Gx_V$ with $U \neq \emptyset$, we can use the score function to identify the presence of a direct causal effect between $V_i$ and $V_j$ which can not be detected when inference is limited to the PAG $\mathcal{P}_{\cM^\Gx_V}$ (representing the Markov equivalence class). Given that the causal model of Equation \eqref{eq:latent_scm_v} ensures identifiability only up to the equivalence class, we need additional restrictive assumptions. In particular, we enforce an additive noise model with respect to both the observed and unobserved noise variables. This corresponds to an additive noise model on the observed variables with the noise terms recentered by the latent causal effects.

\begin{assumption}[SCM assumptions]\label{hp:scm}
    The set of structural equations of the observable variables specified in \eqref{eq:latent_scm_v} is now defined as:
\begin{equation}\label{eq:latent_identifiable_scm}
    V_i \coloneqq f_i(V_{\Parents^\Gx_i}) + g_i(U^i) + N_i, \forall i = 1, \ldots, d,
\end{equation}
assuming the mechanisms $f_i$ to be of class $\mathcal{C}^3(\R^{|V_{\Parents^\Gx_i}|})$, and mutually independent noise terms with strictly positive density function. The $N_i$'s are assumed to be non-Gaussian when $f_i$ is linear in some of its arguments.
\end{assumption}


\looseness-1Crucially, our hypothesis is weaker than those required by two state-of-the-art approaches, CAM-UV \citep{maeda21_causal} and RCD \citep{maeda2020_rcd}: CAM-UV assumes a Causal Additive Model (CAM) with structural equations with nonlinear mechanisms in the form $V_i \coloneqq \sum_{k \in \Parents^\Gx_i} f_{ik}(V_k) + \sum_{U^i_k}g_{ik}(U^i_k) + N_i$, and RCD requires an additive noise model with linear effects of both the latent and observed causes. Thus, our model encompasses and extends the nonlinear and linear settings of CAM-UV and RCD, such that the theory developed in the remainder of the section is valid for a broader class of causal models.

Our first step is rewriting the structural relations in \eqref{eq:latent_identifiable_scm} as:
\begin{equation}\label{eq:tilde_anm}
    \begin{split}
            &V_i \coloneqq f_i(V_{\Parents^\Gx_i}) + \tilde{N}_i, \\
            & \tilde{N}_i \coloneqq g_i(U^i) + N_i,
            \forall i = 1, \ldots, d,
    \end{split}
\end{equation}
which provides an additive noise model in the form of \eqref{eq:anm}. 
Next, we define the following regression residuals for any node $V_k$ in the graph $\cM_V^\Gx$:
\begin{equation}\label{eq:latent_residual}
        R_k(V_Z) \coloneqq V_k -\mean[V_k \mid V_{Z \setminus \set{k}}],
\end{equation}
where $V_{Z \setminus \set{k}}$ denotes the set of random variables $V_Z \setminus \set{V_k}$.

Given these definitions, we are ready to show that the score can identify the presence of direct causal effects between pairs of observed variables in $V$ which can not be detected by conditional independence testing (e.g., as in FCI). In particular, if the mechanisms are nonlinear, direct parents in the DAG can be identified, while in the linear case identifiability is limited to ancestorship relations. 

\subsubsection{Identifiability of directed edges}
\looseness-1Let $V_i, V_j$ be adjacent nodes in the PAG $\mathcal{P}_{\cM^\Gx_V}$: we want to investigate when a direct causal effect $V_i \in V_{\Parents^\Gx_j}$ can be identified from the score. Consider the set of variables $V_Z = V_{\Parents^\Gx_j} \cup \set{V_j}$ and the graph $\mathcal M_{V_Z}^\Gx$ resulting from the marginalization of $\Gx$ on $V_Z$. By Equation \eqref{eq:tilde_anm}, we have that:
\begin{equation*}
    \begin{split}
        V_j \coloneqq f_j(V_{\Parents^\Gx_j}) + \tilde{N}_j, \hspace{1em} \tilde{N}_j \coloneqq g_j(U^j) + N_j.
    \end{split}
\end{equation*}
The key observation is that for $V_{\Parents^\Gx_j} \indep_d^\Gx U^j$, then $V_{\Parents^\Gx_j} \indep \tilde N_j$ (e.g. \citet{blitzstein2019probability}, Theorem 3.8.5): this allows considering $\tilde N_j$ as an \textit{exogenous} noise term independent of other variables in the structural equation of $V_j$. By Equation \eqref{eq:latent_residual} this implies that 
\begin{equation}\label{eq:latent_residual_sink}
R_j(V_Z) = \tilde{N_j} - \mean[\tilde{N_j}],
\end{equation}
where we use $V_{\Parents^\Gx_j} \indep \tilde N_j$ to write $\mean[\tilde N_j | V_{\Parents_j^\Gx}] = \mean[\tilde N_j]$. Moreover, $V_{\Parents^\Gx_j} \indep \tilde N_j$ implies that $p(V_j| V_{\Parents_j^\Gx}) = p(\tilde N_j)$, and from simple manipulations we can show that $\partial_{V_j} \log p(V_Z) = \partial_{N_j}\log p(\tilde N_j)$. We conclude that in analogy to the case without latent variables the score $\partial_{V_j} \log p(V_Z)$ is a function of $\tilde{N_j}$, the error term in the additive noise model of Equation \eqref{eq:tilde_anm}. Then the score of $V_j$ can be consistently predicted from observations of the residual $R_j(V_Z)$, which is the sufficient and necessary condition to discover the direct causes (parents and ancestors) of $V_j$ with the score.


\begin{proposition}\label{prop:causal_dir_1}
Let $X= V \cup U$, $V$ and $U$ disjoint, be generated by a restricted additive noise model with causal graph $\mathcal{G}$. Let $V$ satisfying the set of structural equations \eqref{eq:latent_identifiable_scm}, and $f_i$ nonlinear for each $i = 1, \ldots, d$. Consider $V_i, V_j$ adjacent in $\cP_{\cM^\Gx_V}$, the PAG relative to marginalization $\cM^\Gx_V$. Further, assume that for each subset $V_Z \subseteq V$ the score component $\partial_{V_j} \log p(V_Z)$ is non-constant for uncountable values of $V_Z$. Then, 
\begin{equation}\label{eq:main_prop_nonlinear}
\begin{split}
    &\exists V_Z \subseteq V, \set{V_i, V_j} \in V_Z, \textnormal{s.t. } \mean\left[\left(\partial_{V_j} \log p(V_Z) - \mean[\partial_{V_j} \log p(V_Z) | R_j(V_Z)]\right)^2\right] = 0 \\
    &\Longleftrightarrow
    V_{\Parents^\Gx_j} \indep^d_{\Gx} U^j  \land V_i \in V_{\Parents^\Gx_j}.
\end{split}
\end{equation}
If $f_i$ linear for each $i = 1, \ldots, d$ we have
\begin{equation}\label{eq:main_prop_linear}
\begin{split}
   &\exists V_Z \subseteq V, \set{V_i, V_j} \in V_Z, \textnormal{s.t. } \mean\left[\left(\partial_{V_j} \log p(V_Z) - \mean[\partial_{V_j} \log p(V_Z) | R_j(V_Z)]\right)^2\right] = 0 \\ 
   &\Longleftrightarrow
   V_{\Parents^\Gx_j} \indep^d_{\Gx} U^j \land V_i \in V_{\Anc^\Gx_j}.
\end{split}
\end{equation}
\end{proposition}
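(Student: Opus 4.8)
The plan is to prove both biconditionals by reducing to Proposition~\ref{prop:nogam} applied to the marginal model on a suitable set $V_Z$, and then translating the resulting ``sink'' condition into the stated graphical statements. I would first settle the forward construction: assume the right-hand side holds, i.e. $V_{\Parents^\Gx_j} \indep^d_\Gx U^j$ and (in the nonlinear case) $V_i \in V_{\Parents^\Gx_j}$. Take $V_Z = V_{\Parents^\Gx_j} \cup \set{V_j}$; by the independence of $U^j$ from $V_{\Parents^\Gx_j}$ we may, exactly as in the text preceding the statement, treat $\tilde N_j = g_j(U^j) + N_j$ as an exogenous noise term, so that $V$ restricted to $V_Z$ satisfies an additive noise model in the form of~\eqref{eq:anm} with $V_j$ a sink of $\cM^\Gx_{V_Z}$ and $R_j(V_Z) = \tilde N_j - \mean[\tilde N_j]$. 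Then $\partial_{V_j}\log p(V_Z) = \partial_{N_j}\log p(\tilde N_j)$ is a (measurable) function of $R_j(V_Z)$, so $\mean[\partial_{V_j}\log p(V_Z)\mid R_j(V_Z)] = \partial_{V_j}\log p(V_Z)$ and the mean-squared term vanishes. I would verify that Proposition~\ref{prop:nogam} (the generalized NoGAM lemma) applies to this marginal model — here I should check that the marginal SCM on $V_Z$ is a \emph{restricted} additive noise model, which should follow from restrictedness of the full SCM on $X$ together with Assumption~\ref{hp:scm}, and note that non-Gaussianity of $\tilde N_j$ in the linear case is what ``restricted + linear'' buys us (since a sum with $N_j$ non-Gaussian is non-Gaussian).

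For the converse I would argue contrapositively. Suppose the right-hand side fails. There are two ways: either $V_{\Parents^\Gx_j} \notindep^d_\Gx U^j$, or $V_{\Parents^\Gx_j} \indep^d_\Gx U^j$ but $V_i$ is not a parent (resp. ancestor) of $V_j$. I would show that in either case, for \emph{every} $V_Z$ with $\set{V_i,V_j}\subseteq V_Z$, the mean-squared quantity is strictly positive. The cleanest route: the vanishing of $\mean[(\partial_{V_j}\log p(V_Z) - \mean[\,\cdot\mid R_j(V_Z)])^2]$ is, by Proposition~\ref{prop:nogam} applied to the marginal model $\cM^\Gx_{V_Z}$, equivalent to $V_j$ being a sink in $\cM^\Gx_{V_Z}$ — \emph{provided} that marginal model is again a valid restricted ANM in the sense of~\eqref{eq:anm}. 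The subtlety is that this ANM structure on $V_Z$ is \emph{only} available when $V_{\Parents^\Gx_j}\subseteq V_Z$ and $V_{\Parents^\Gx_j}\indep^d_\Gx U^j$, because only then does $\tilde N_j$ act as exogenous noise relative to $V_Z$; otherwise one must argue that the residual $R_j(V_Z)$ cannot recover the score because the score of $V_j$ in $p(V_Z)$ depends on variables (confounders, or descendants of $V_j$) that are not functions of $R_j(V_Z)$. I would handle the ``$V_j$ is a sink in $\cM^\Gx_{V_Z}$ but $V_i$ not a parent/ancestor'' branch by noting that if $V_i\to V_j$ is not present in $\cM^\Gx_{V_Z}$ then either $V_i,V_j$ are non-adjacent in that marginal graph (contradicting the premise that they are adjacent in $\cP_{\cM^\Gx_V}$, via an inducing-path / stability argument under marginalization), or the edge is $V_i \leftrightarrow V_j$, i.e. a confounder between them, which — combined with $V_j$ being a sink — means the confounder $U$ feeds into $V_j$, so $\partial_{V_j}\log p(V_Z)$ picks up the latent score term and is not recoverable from $R_j(V_Z)$ alone. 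The distinction between the nonlinear and linear conclusions ($V_i \in V_{\Parents^\Gx_j}$ versus $V_i \in V_{\Anc^\Gx_j}$) should mirror exactly the corresponding distinction in the fully observed identifiability theory of~\citet{montagna23_nogam, ghoshal2018learning}: in the linear-Gaussian-residual-free (i.e. non-Gaussian) regime, partial correlations/regression residuals only pin down the topological order, hence ancestorship, whereas nonlinearity lets the functional form localize the immediate parent.

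I expect the main obstacle to be the converse direction, specifically making rigorous the claim that when $V_{\Parents^\Gx_j} \not\subseteq V_Z$ or $V_{\Parents^\Gx_j}\notindep^d_\Gx U^j$ the regression residual $R_j(V_Z)$ provably fails to be a sufficient statistic for $\partial_{V_j}\log p(V_Z)$. Here one cannot directly invoke Proposition~\ref{prop:nogam} because its hypothesis (an ANM in the form~\eqref{eq:anm} on the variable set in question) may be violated; I would need a separate argument, presumably exhibiting an explicit dependence: compute $\partial_{V_j}\log p(V_Z)$ via the change-of-variables identity~\eqref{eq:score_general} adapted to the marginal model, identify a ``residual'' contribution (from a latent confounder of $V_j$, or from a child of $V_j$ present in $V_Z$, or from an unregressed-out parent) that is independent of, or at least not $\sigma(R_j(V_Z))$-measurable, and conclude the conditional expectation cannot equal the score a.s. The hypothesis that $\partial_{V_j}\log p(V_Z)$ is non-constant for uncountably many values of $V_Z$ is presumably exactly what rules out degenerate cancellations in this step, so I would lean on it there. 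A secondary, more bookkeeping-level obstacle is carefully checking that all the marginal SCMs $\cM^\Gx_{V_Z}$ inherit the ``restricted ANM'' property needed to cite Proposition~\ref{prop:nogam}, and that adjacency of $V_i,V_j$ in $\cP_{\cM^\Gx_V}$ is preserved under passing to the smaller marginal $\cM^\Gx_{V_Z}$ when $V_Z$ contains the relevant parents — an inducing-path stability fact that I would isolate as a small lemma.
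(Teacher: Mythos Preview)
Your backward direction is essentially the paper's: choose $V_Z = V_{\Parents^\Gx_j}\cup\{V_j\}$, use $V_{\Parents^\Gx_j}\indep\tilde N_j$ to get $\partial_{V_j}\log p(V_Z)=\partial_{\tilde N_j}\log p(\tilde N_j)$ and $R_j(V_Z)=\tilde N_j-\mean[\tilde N_j]$, hence zero MSE. The paper does this directly via Lemma~\ref{lem:score_lucky_leaf} rather than by citing Proposition~\ref{prop:nogam}, but the content is the same.

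The forward (contrapositive) direction is where you diverge, and your plan has a real gap. You want to reduce to Proposition~\ref{prop:nogam} on the marginal model over $V_Z$, and you correctly flag that this only works when that marginal is again a restricted ANM --- which is precisely the case you need to \emph{exclude}. For a generic $V_Z$ (parents missing, or $U^j$ d-connected to the observed parents), there is no closed form for $\partial_{V_j}\log p(V_Z)$ in the style of~\eqref{eq:score_general}: the marginal density involves integrating out latents and the score does not decompose into ``noise term minus children-sum''. So your fallback ``separate argument'' --- compute the score and exhibit a component not $\sigma(R_j(V_Z))$-measurable --- has no tractable starting point, and your inducing-path/sink bookkeeping never reaches the analytic conclusion.

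The paper sidesteps this entirely: its forward argument never computes or decomposes the marginal score. Instead it shows that, under the contrapositive hypothesis, for every $V_Z\ni V_i,V_j$ one has $\operatorname{supp}(V_Z\mid R_j(V_Z)=r_j)=\operatorname{supp}(V_Z)$, i.e.\ conditioning on the residual does not shrink the support. This is a direct computation on $R_j(V_Z)=\tilde N_j + f_j(V_{\Parents^\Gx_j}) - \mean[V_j\mid V_{Z\setminus\{j\}}]$, using positivity of $p_{\tilde N_j}$ and a case split ($V_{\Parents^\Gx_j}\not\subset V_Z$; $V_{\Parents^\Gx_j}\subset V_Z$ with $U^j\notdindep{\Gx}V_{\Parents^\Gx_j}$; $V_{\Parents^\Gx_j}\subset V_Z$ with $V_i\notin V_{\Parents^\Gx_j}$, where adjacency in the MAG forces $V_i\notdindep{\Gx} U^j\mid V_{Z\setminus\{i,j\}}$ and hence $\mean[\tilde N_j\mid V_{Z\setminus\{j\}}]\neq\mean[\tilde N_j]$). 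Once the support is full, the non-constancy hypothesis on $\partial_{V_j}\log p(V_Z)$ is invoked \emph{directly}: there are $v_Z\neq v_Z^*$ in the conditional support with different score values, so the score is not a function of $R_j(V_Z)$ and the MSE is strictly positive. That is the actual role of the non-constancy assumption --- not to rule out cancellations in a score formula you never get to write down, but to supply the two witness points once the support equality is established.
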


The proof is found in \cref{app:main_prop_proof}. Here, we present the intuition about the content of the proposition. Given two adjacent nodes $V_i, V_j$ in the PAG, they must be graphically connected by one edge between $\circ \hspace{-1.9mm}\rightarrow, \circ \hspace{-1.60mm}- \hspace{-1.60mm}\circ, \leftrightarrow$ or $\rightarrow$, the latter denoting ancestral relation in the DAG (which may not be a direct parent). \cref{eq:main_prop_nonlinear} provides the condition under which a parent-child relation can be identified in place of the less informative PAG edges: in particular, for nonlinear additive noise models, that is when
the independence $V_{\Parents^\Gx_j} \indep \tilde N_j$ implied by $V_{\Parents^\Gx_j} \dindep{\Gx} U^j$ allows to interpret $\tilde N_j$ as an \textit{exogenous} noise term independent of other variables in the structural \cref{eq:tilde_anm} of $V_j$.
This condition is necessary: given an active path such that $V_{\Parents^\Gx_j} \not\hspace{-2.345mm}\dindep{\Gx} U^j$,  the score could not identify a direct causal effect $V_i \in  V_{\Parents_j^\Gx}$.
We remark on the novelty of our theory compared to \citet{maeda21_causal}, the nearest neighbor to our work in the literature: they demonstrate identifiability of direct parents under the assumptions of additive and nonlinear mechanisms (i.e. $ V_i \coloneqq \sum_{k \in \Parents^\Gx_i} f_{ik}(V_k) + \sum_{U^i_k}g_{ik}(U^i_k) + N_i$), which is more restrictive than our modeling hypothesis. In the case of linear mechanisms, we can only identify $V_i \in V_{\Anc_j^\cG}$, which subsumes the findings of \citet{maeda2020_rcd}, as they do not consider the case of unobserved mediators. See Appendix \ref{sec:difference_adascore_camuv} for more details on the relation of these results.

\paragraph{Connections to FCI.} In \cref{app:difference_adascore_fci} we present a thorough analysis of the relation between the identifiability guarantees of Proposition \ref{prop:causal_dir_1} and those provided by the FCI algorithm. Intuitively, under standard assumptions, FCI discovers all aspects of the causal structure that are uniquely determined by facts of probabilistic dependence and independence \citep{zhang2008onthecompleteness}. Adding the assumption of an additive noise model, as in our case, allows further identifying causal directions that are not distinguishable purely by probabilistic (in)dependence facts.

\paragraph{Connections to local causal discovery.} \looseness-1We note that the result of Proposition \ref{prop:causal_dir_1} has an important application in the domain of \textit{local causal discovery}, which is based on the idea that the identification of causal effects of the covariates on the response under interventions only requires knowledge of the local causal structure around the treatment \citep{maathuis2009localcd}. Then, the discovery of parental relations between a targeted subset of nodes, rather than on the entire set of observed variables, may be sufficient for some downstream tasks in causal inference. The benefits of this approach are clear in terms of computational efficiency. Proposition \ref{prop:causal_dir_1} shows that for any pair of nodes, the score function can discover a parent-child (or ancestral) relationship while being agnostic of the structure of the other nodes in the graph. This is in contrast with existing approaches to causal discovery with score matching that are concerned with the inference of the global topological order, which might be unnecessarily computationally expensive. While we do not elaborate on this connection in the following algorithmic section, we believe this to be an important potential application building on our results.

\vspace{1em}
\looseness-1We have established theoretical guarantees of identifiability for additive noise models, even in the presence of hidden variables: we find that, for nonlinear models, the score function is a means for the identifiability of all direct parental relations that are not influenced by unobserved variables; all the remaining arrowheads of the edges in the graph $\cM_{V}^\Gx$ are identified no better than in the equivalence class. For linear models, identifiability is guaranteed for ancestral relations. Based on these insights, we propose AdaScore, a score matching-based algorithm for the inference of Markov equivalence classes and direct causal effects, in the presence of latent variables.

\subsection{A score-matching-based algorithm for causal discovery}\label{sec:algo}
Building on our theory, we propose AdaScore, a generalization of NoGAM to linear and nonlinear additive noise models with latent variables. The main strength of our approach is the adaptivity of its theoretical guarantees for
     this broad class of structural assumptions, as illustrated in Table \ref{tab:assumptions}. In practice, we design our method to be flexible in its output: based on the user's belief about the plausibility of several modeling assumptions on the data, AdaScore can output an equivalence class (using the condition of Proposition \ref{prop:score_msep} to infer conditional independence in an FCI-like algorithm), a directed acyclic graph (as in NoGAM), or a mixed graph, accounting for the presence of unobserved variables. We now describe the version of our algorithm whose output is a mixed graph, where we rely on score matching estimation of the score and its Jacobian (\cref{app:finite_adascore}). 
     
     At an intuitive level, we find unoriented edges using Proposition \ref{prop:score_msep}, i.e. checking for dependencies in the form of non-zero entries in the Jacobian of the score via hypothesis testing on the mean, and find the edges' directions via the condition of Proposition \ref{prop:causal_dir_1}, i.e. by estimating residuals of each node $X_i$ (via kernel ridge-regression, as we motivate in Appendix \ref{app:krr_residuals}) and checking whether they can correctly predict the $i$-th score entry (the vanishing mean squared errors are verified by hypothesis test of independent residuals, see \cref{app:finite_adascore}). It would be tempting to simply find the skeleton (i.e. the graphical representation of the constraints of an equivalence class) first via the well-known adjacency search of the FCI algorithm and then iterate through all neighborhoods of all nodes to orient edges using Proposition \ref{prop:causal_dir_1}.
This would be prohibitively expensive. 

Instead, we propose an alternative solution: exploiting the fact that some nodes may not be influenced by latent variables, we first use Proposition \ref{prop:nogam} to find sink nodes that are not affected by latents (using hypothesis testing to find vanishing mean squared error in the score predictions from the residuals), in the spirit of the NoGAM algorithm. If there is such a sink, we search all its adjacent nodes via Proposition \ref{prop:score_msep} (plus an optional pruning step for better accuracy, \cref{app:finite_adascore}),
and orient the inferred edges towards the sink. Else, if no sink can be found, we pick a node in the graph and find its neighbors by Proposition \ref{prop:score_msep}, orienting its edges using the condition in Proposition \ref{prop:causal_dir_1} (score estimation by residuals under latent effects). This way, we get an algorithm that is polynomial in the best case (\cref{app:complexity}). Details on AdaScore are provided in \cref{app:algorithm}, while a pseudo-code summary is provided in the \cref{algo:simplified_scam} box.

\begin{algorithm}[ht]
\caption{Simplified pseudo-code of AdaScore}
\label{algo:simplified_scam}
\While{nodes remain}{
    Find sink candidate using Proposition \ref{prop:causal_dir_1}\;\\
    \eIf{Proposition \ref{prop:causal_dir_1} finds a sink \textbf{\emph{and}} output is mixed graph\\
    \textbf{\emph{or}} output is DAG}{
        Add edges from adjacent nodes to sink
    }{
        Pick some remaining node $V_i \in V$\;\\
        Prune neighbourhood of $V_i$ using Proposition \ref{prop:score_msep}\;\\
        \If{output is not PAG}{
            Orient edges adjacent to $V_i$ using Proposition \ref{prop:causal_dir_1}\;\\
        }
        \eIf{$V_i$ has outgoing directed edge to some $V_j \in V$}{
            \textbf{continue with} $V_j$\;
        }{
            Remove $V_i$ from remaining nodes\;
        }
    }
}
Prune remaining potential edges using Proposition \ref{prop:score_msep}\;\\
\If{output is PAG}{Do PAG orientations using Proposition \ref{prop:score_msep}\;}
\end{algorithm}

\section{Experiments}
\label{sec:experiments}
\looseness-1The code for all experiments is available under \url{https://github.com/amazon-science/causal-score-matching}. We use the \texttt{causally}\footnote{\url{https://causally.readthedocs.io/en/latest/}} Python library \citet{montagna2023_assumptions}  to generate synthetic data with known ground truths, created as Erd\"os-R\'enyi sparse and dense graphs, respectively with probability of edge between pair of nodes equals $0.3$ and $0.5$.
We sample the data according to linear and nonlinear mechanisms with additive noise,
where the nonlinear functions are parametrized by a neural network with random weights, a common approach in the literature \citep{montagna2023shortcuts, montagna2023_assumptions, lippe2022efficient, ke2023learning, brouillard2020intervention}. Noise terms are sampled from a uniform distribution in the $[-2, 2]$ range. We introduce hidden variables by randomly picking two nodes and dropping the corresponding column from the data matrix (datasets with no confounding effect are discarded and re-sampled to ensure that experiments are meaningful). See \cref{app:data} for further details on the data generation.
As metric, we consider the structural Hamming distance (SHD) \citep{tsamardinos2006max,triantafillou2016score}, a simple count of the number of incorrect edges, where missing and wrongly directed edges count as one error.
We fix the level of the hypothesis tests of AdaScore to $0.05$, which is a common choice in the absence of prior knowledge. We compare AdaScore to NoGAM, CAM-UV, RCD, and DirectLiNGAM, whose assumptions are detailed in \cref{tab:assumptions}. We also adopt a random baseline, described in Appendix \ref{app:random_baseline}. In the main manuscript, we consider inference of \textit{sparse} graphs, where each dataset contains $1000$ observations (boxplots are obtained sampling datasets with $20$ different random seeds). Additional experiments including those on dense networks are presented in \cref{app:experiments}. Our synthetic data are standardized by their empirical variance to remove shortcuts in the data \citep{montagna2023shortcuts, reisach2021beware}. We limit our synthetic experiments to graphs with $9$ nodes, as we empirically observed that AdaScore and CAM-UV struggle to scale with the number of variables. A thorough analysis of the elapsed computational time is provided in Appendix \ref{app:experiments_time}. 

\looseness-1Further, we show results for three real and pseudo-real benchmark datasets. Namely, a biological dataset on cell signaling \citep{sachs2005causal}, the \texttt{AutoMPG} dataset\footnote{Some of the features in this dataset are not continuous. For simplicity, we still treat them as if they were.} concerning fuel consumption in cars \citep{bache2013uci}, where we use the causal ground truth given by \citep{wang2017visual}, and the synthetic FMRI dataset \texttt{Sim2} \citep{smith2011network}. For each dataset, we randomly pick two variables and drop them to introduce hidden variables. For \texttt{Sachs} and \texttt{Sim2} we also randomly\footnote{\texttt{Sim2} is subdivided into samples from different (synthetic) subjects. We picked our subsample across all subjects.} select 1000 samples.
The experiments are repeated 20 times.
\begin{figure}
\centering
    \includegraphics[width=0.65\textwidth]{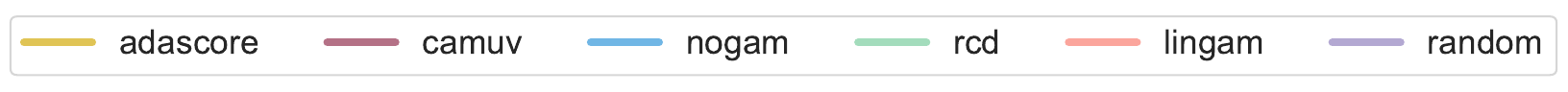}\\
\vspace{1.2em}
\subfigure[Fully observable model\label{fig:sparse_observable}]{%
    \includegraphics[width=0.8\textwidth]{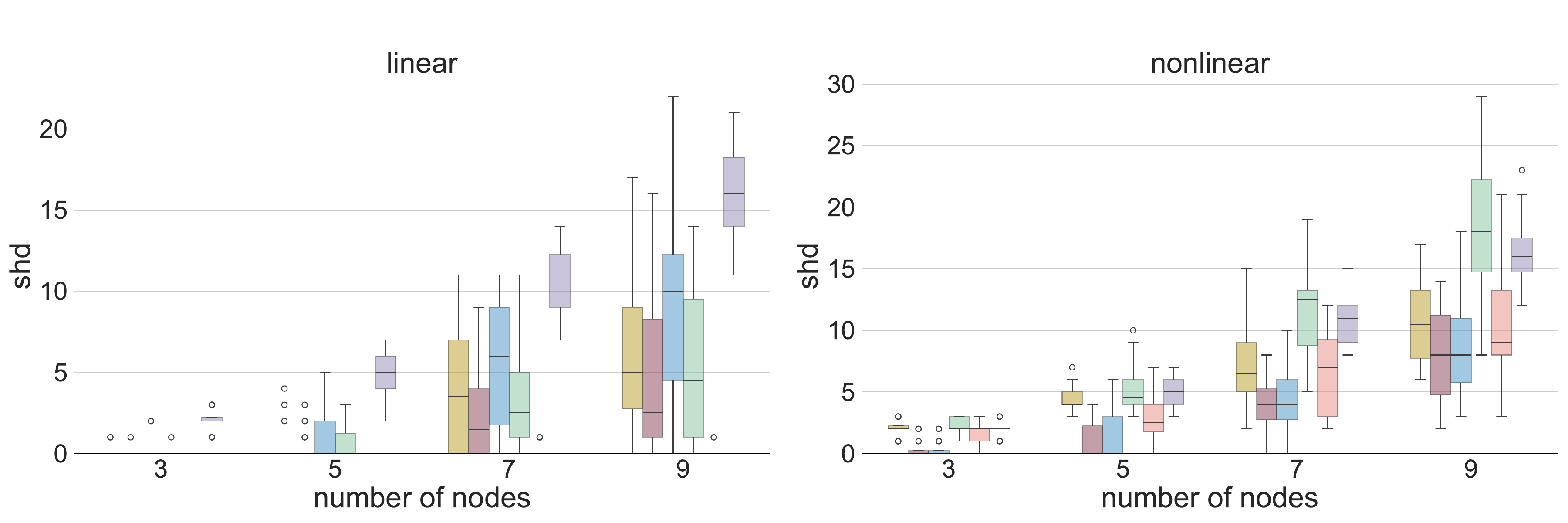}
}
\vspace{1.2em}
\subfigure[Latent variables model\label{fig:sparse_latent}]{%
    \includegraphics[width=0.8\textwidth]{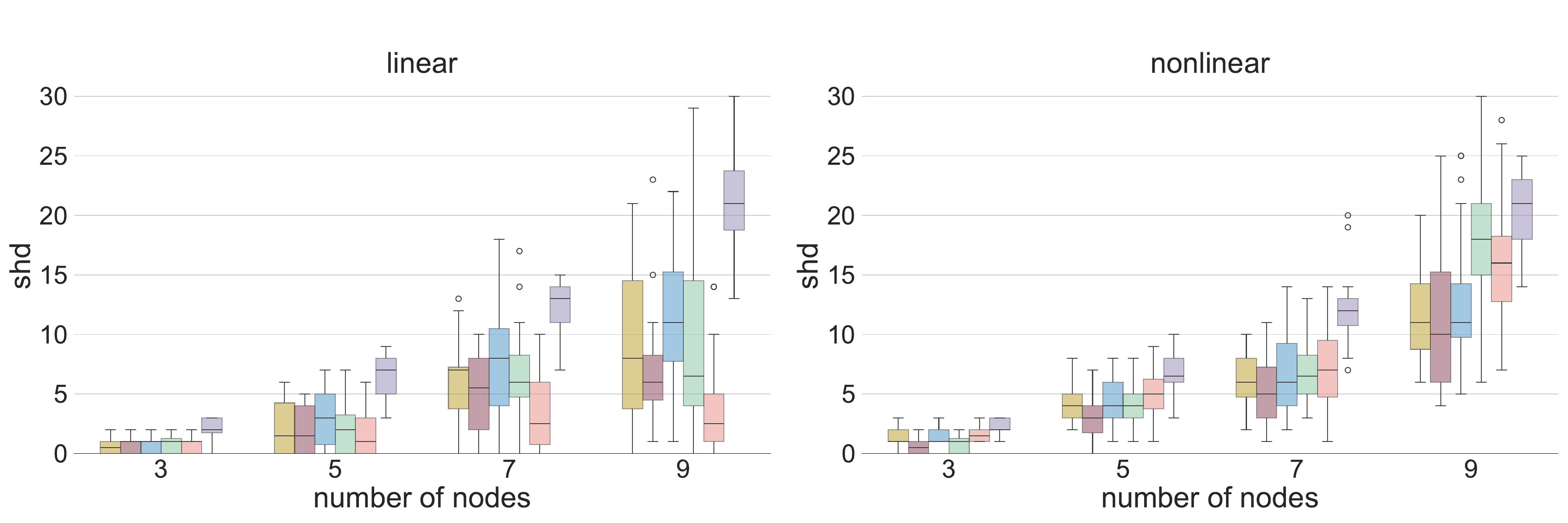}
}
\vspace{-1.2em}
\caption{\looseness-1\looseness-1Empirical results on sparse graphs with different numbers of nodes, on fully observable (no hidden variables) and latent variable models. We report the SHD accuracy (lower is better). We note that Adascore is comparable to the other methods in all settings (except for DirectLiNGAM on linear data), and always significantly better than random.}
\label{fig:experiments-sparse}
\end{figure}

\begin{figure}
\centering
    \includegraphics[width=0.55\textwidth]{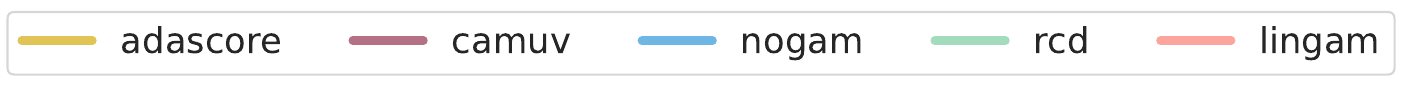}\\
    \vspace{1.2em} 
    \subfigure[Cell signaling data\label{fig:sachs}]{%
        \includegraphics[width=0.3\textwidth]{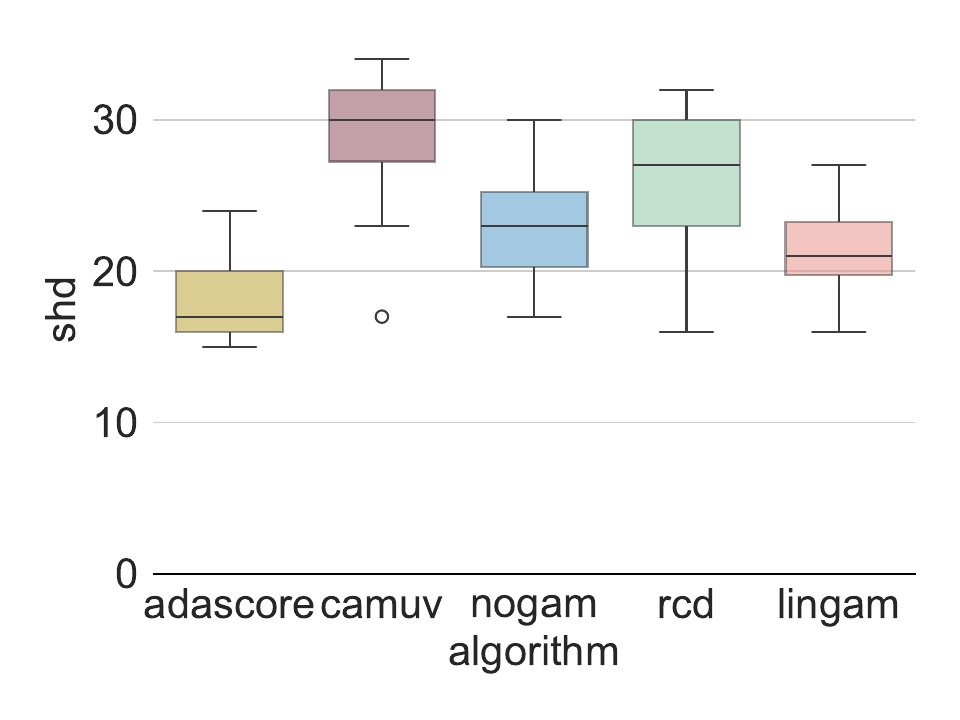}
    }
    \vspace{1.2em}
    \subfigure[Fuel consumption data\label{fig:auto_mpg}]{%
        \includegraphics[width=0.3\textwidth]{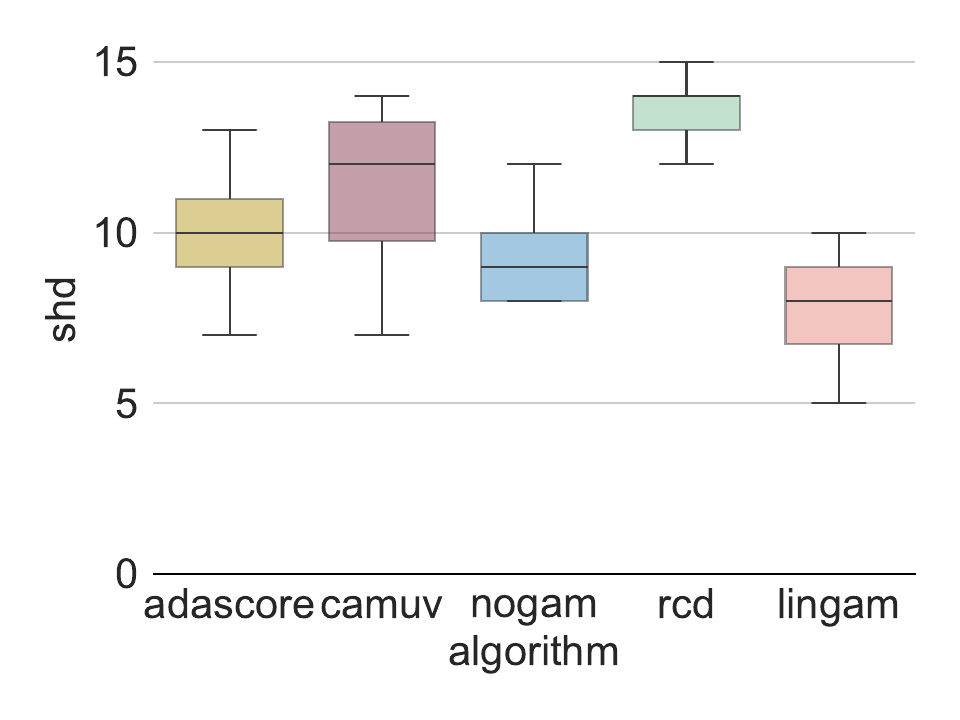}
    }
    \vspace{1.2em}
    \subfigure[FMRI data\label{fig:fmri}]{%
        \includegraphics[width=0.3\textwidth]{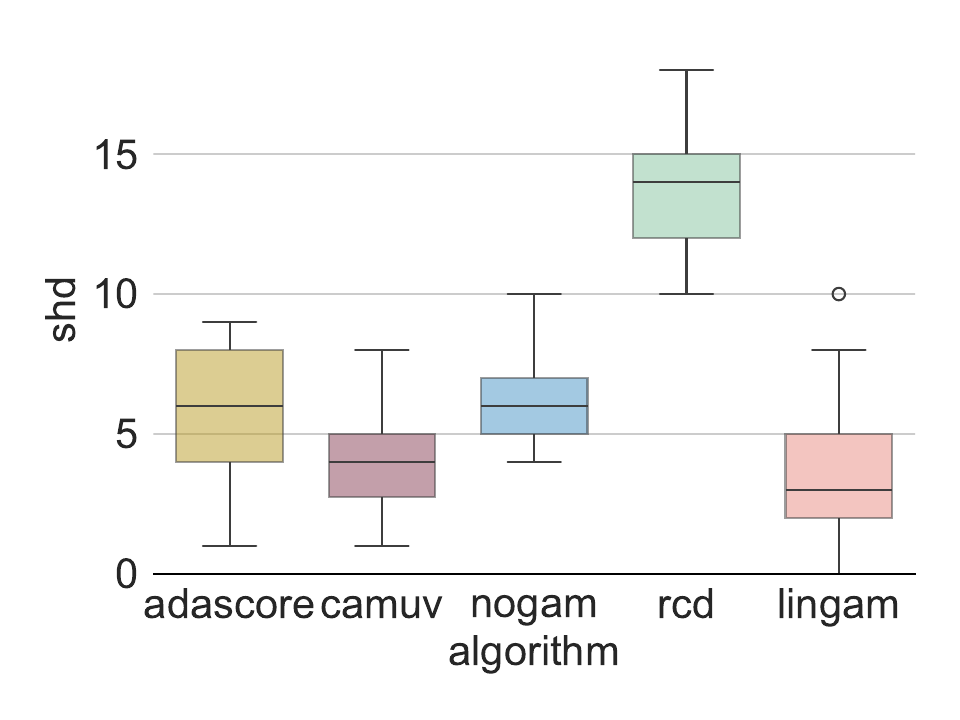}
    }
    \vspace{-2.4em}
\caption{\looseness-1Empirical results on real and pseudo-real datasets from \citet{sachs2005causal}, \citet{bache2013uci} , and \citet{smith2011network}. We report the SHD accuracy (lower is better). AdaScore has the lowest SHD among all tested methods on the gene dataset and appears to be competitive compared to other methods on the fuel consumption and FMRI data.}
\label{fig:experiments-real}
\end{figure}
\paragraph{Discussion.} Our experimental results on models without latent variables of \cref{fig:sparse_observable} show that when causal relations are linear, AdaScore can recover the causal graph with accuracy that is comparable with all the other benchmarks, with the exception of DirectLiNGAM. 
On nonlinear data, AdaScore outperforms RCD accuracy, while being slightly worse than CAM-UV and NoGAM. 
When inferring under latent causal effects, \cref{fig:sparse_latent}, our method is comparable to CAM-UV and NoGAM, and appears to be preferrable to RCD, which degrade its performance with scale and is no better than random with $9$ nodes. 
Finally, on the real benchmarks of  Figure \ref{fig:experiments-real}, AdaScore presents promising results compared to the other methods. On cell signaling data, our algorithm emerges as the best option, while it retains competitive performance on fuel consumption and FMRI data. 
Overall, we observe that our method is robust to a variety of structural assumptions, with accuracy that is comparable and sometimes better than competitors. We remark that although AdaScore does not clearly outperform the other baselines, its broad theoretical guarantees of identifiability are not matched by any available method in the literature; this makes it an appealing option for inference in realistic scenarios that are hard to investigate with synthetic data, where the structural assumptions of the causal model underlying the observations are unknown. 

\begin{table}
\caption{\looseness-1Algorithms of our experiments. Each cell denotes
denotes whether the method has (\cmark) or has not (\xmark) guarantees of identifiability under the condition specified in the related row.}
\label{tab:assumptions}
\centering
    \setlength{\tabcolsep}{5.pt}
    \begin{tabular}{lccccc}
    \toprule
         & CAM-UV & RCD & 
         NoGAM & DirectLiNGAM & AdaScore \\
         \midrule
         \scalebox{0.92}{Linear additive noise model}  & \xmark & \cmark & \xmark & \cmark & \cmark \\
         \scalebox{0.92}{Nonlinear additive noise model}  & \xmark & \xmark & \cmark & \xmark & \cmark\\
         \scalebox{0.92}{Nonlinear CAM}  & \cmark & \xmark & \cmark & \xmark & \cmark\\
         \scalebox{0.92}{Latent variables effects}  & \cmark & \cmark & \xmark & \xmark & \cmark\\
         \midrule
         \scalebox{0.92}{Output} & Mixed & Mixed & DAG & DAG & Mixed \\
         \bottomrule 
    \end{tabular}
\end{table}
\section{Conclusion}
\looseness-1The existing literature on causal discovery shows a connection between score matching and structure learning in the context of nonlinear ANMs: in this paper, (i) we formalize and extend these results to linear SCMs, and (ii) we show that the score retains information on the causal structure even in the presence of unobserved variables. While previous works posit the accent on finding the causal order through the score, we study its potential to identify the Markov equivalence class with a \textit{constraint-based} strategy, as well as to identify direct causal effects. Our theoretical insights result in AdaScore: unlike existing approaches for the estimation of causal directions, our algorithm provides theoretical guarantees for a broad class of identifiable models, namely linear and nonlinear, with additive noise, in the presence of latent variables. Even though AdaScore does not clearly outperform the existing baselines on our synthetic benchmark, it appears promising on realistic datasets, and its adaptivity to different structural hypotheses is a step towards causal discovery that is less reliant on prior assumptions, which are often untestable and thus hindering reliable inference in real-world problems. A promising research direction in relation to our work involves extending and applying our theory to algorithms for local causal discovery.

\acks{Philipp M.
Faller was supported by a doctoral scholarship of the
Studienstiftung
des deutschen Volkes (German Academic Scholarship
Foundation). This work
has been supported by AFOSR, grant n. FA8655-20-1-7035. FM is supported by Programma
Operativo Nazionale ricerca e innovazione 2014-2020. We thank Atalanti A. Mastakouri, Kun Zhang and Haoyue Dai for the insightful discussions.}

\bibliography{biblio}

\newpage
\appendix

\section{Related works}
In this section we discuss works closely related to ours, in the context of observational causal discovery with and without latent variables. 
\paragraph{Causal discovery with score-matching.} Several methods for the causal discovery of fully observable models using the score have been recently proposed. \citet{ghoshal2018learning} demonstrates the identifiability of the linear non-Gaussian model from the score, and it is complemented by \citet{rolland22_score}, which shows the connection between score matching estimation of $\nabla \log p(X)$ and the inference of causal graphs underlying nonlinear additive noise models with Gaussian noise terms, also allowing for sample complexity bounds~\citep{zhu2024sample}. \citet{montagna23_nogam} provides identifiability results in the nonlinear setting, without posing any restriction on the distribution of the noise terms. \citet{montagna23_das} is the first to show that the Jacobian of the score provides information equivalent to conditional independence in the context of causal discovery, limited to the case of additive noise models. All of these studies make specialized assumptions to find theoretical guarantees of identifiability, whereas our paper provides a unifying view of causal discovery with the score function, which generalizes and expands the existing results.

\paragraph{Causal discovery with latent variables.} Causal discovery with latent variables has been studied first in the context of \textit{constraint-based} approaches with the FCI algorithm \citep{spirtes2001_fci}, which shows the identifiability of the equivalence class of a marginalized graph via conditional independence testing. 
There are several methods that have been proposed for specific structural or functional assumptions.
E.g. assuming linearity and restrictions on possible graphs \citet{silva2006learning} present a method based on Tetrad-constraints, while \citet{chandrasekaran2010latent} use a maximum-likelihood-based approach for linear models and sparse graphs.
A wide class of approaches builds on the assumption of non-Gaussian additive noise, going back to the work of \citet{shimizu2006_icalingam} and \citet{hoyer08_anm} on cases without latent variables.
Some examples include \citet{janzing2009identifying}, who show how confounders can be identified in a bivariate setting with non-linear causal relationship, \citet{adams2021identification}, who use conditions on the structural coefficients, \citet{wang2023causal}, who recover the causal structure from statistical moments or \citet{dong2024a}, who impose constraints on the rank of  cross-covariance matrices.
The RCD and CAM-UV \citep{maeda2020_rcd, maeda21_causal} approaches  demonstrate the inferrability of causal edges via testing for independent regression residuals. Like the aforementioned methods, both rely on strong assumptions on the causal mechanisms: their theoretical guarantees apply to models where the effects are generated by a linear (RCD) or nonlinear (CAM-UV) additive contribution of each cause. 

Our work demonstrates that using the score function for causal discovery, one can unify and generalize several of these results, 
while being agnostic about the class of causal mechanisms of the observed variables, under the weaker requirement of additivity of the noise terms.
Further, we show how the score can be utilized for causal discovery with latent variables in a non-parametric setting.

\section{Useful results}\label{app:useful}
In this section, we provide a collection of results and definitions relevant to the theory of this paper.

\subsection{Definitions over graphs}\label{app:useful_graph}
Let $X = {X_1, \ldots, X_d}$ a set of random variables. A graph $\cG = (X, E)$ consists of finitely many nodes or vertices $X$ and edges $E$. We now provide additional definitions, separately for directed acyclic and mixed graphs.

\paragraph{Directed acyclic graph.}
In a \textit{directed graph}, nodes can be connected by a \textit{directed edge} ($\to$), and between each pair of nodes there is at most one directed edge. We say that $X_1$ is a \textit{parent} of $X_j$ if $X_i \rightarrow X_j \in E$, in which case we also say that $X_j$ is a \textit{child} of $X_i$. Two nodes are \textit{adjacent} if they are connected by an edge. A \textit{path} in $\cG$ is a sequence of at least two distinct vertices $\pi = X_{i_1}, \ldots, X_{i_m}$ such that there is an edge between $X_{i_k}$ and $X_{i_{k+1}}$ for each $k = 1, \ldots, m$. If $X_{i_k} \to X_{i_{k+1}}$ for every node in the path, we speak of a \textit{directed path}, and call $X_{i_1}$ an \textit{ancestor} of $X_{i_{m}}$, $X_{i_{m}}$ a \textit{descendant} of $X_{i_1}$. Given the set $\operatorname{DE}^\Gx_i$ of descendants of a node $X_i$, we define the set of \textit{non-descendants} of $X_i$ as $\Nondesc^\cG_i = X \setminus (\operatorname{DE}^\Gx_i \cup \set {X_i})$. Given the path $\pi = X_{i_1}, \ldots, X_{i_m}$, we say that $X_{i_k}$, $k = 2, \ldots, m-1$, is a \textit{collider on} $\pi$ if $X_{i_{k-1}}, X_{i_{k+1}}$ are both parents of $X_{i_k}$, and we call the triplet $X_{i_{k-1}} \rightarrow X_{i_k} \leftarrow X_{i_{k+1}}$ a \textit{v-structure}. A node without parents is called a \textit{source node}. A node without children is called a \textit{sink node}. A \textit{directed acyclic graph} is a directed graph with no cycles.

\paragraph{Mixed graph.} In a  \textit{mixed graph}  nodes can be connected by a \textit{directed edge} ($\to$) or a \textit{bidirected edge} ($\leftrightarrow$), and between each pair of nodes there is at most one directed edge. Two vertices are said to be \textit{adjacent} in a graph if there is an edge (of any kind) between them. The definitions of \textit{parent}, \textit{child}, \textit{ancestor}, \textit{descendant}, \textit{path} provided for directed acyclic graph also apply in the case of mixed graphs. Additionally, $X_i$ is a \textit{spouse} of $X_j$ (and vice-versa) if $X_i \leftrightarrow X_j \in E$. An \textit{almost directed cycle} occurs
when $X_i \leftrightarrow X_j \in E$ and $X_i$ is an ancestor of $X_j$ in $\cG$. In the context of mixed graphs,  given the path $\pi = X_{i_1}, \ldots, X_{i_m}$, we say that $X_{i_k}$, $k = 2, \ldots, m-1$, is a \textit{collider on} $\pi$ if the edges between $X_{i_{k-1}}, X_{i_{k}}$ and  $X_{i_{k}}, X_{i_{k+1}}$ both have an arrowhead towards $X_{i_k}$. The triplet $X_{i_{k-1}}, X_{i_k}, X_{i_{k+1}}$ is a \textit{v-structure}

For ease of reference from the main text, we separately define inducing paths and ancestors in directed acyclic graphs.
\begin{definition}[Ancestor]\label{def:ancestor}
Consider a DAG $\cG$ with set of nodes $X$, and $X_i, X_j$ elements of $X$. We say that $X_i$ is an \textit{ancestor} of $X_j$ if there is a directed path from $X_i$ to $X_j$ in the graph, as in $X_i \rightarrow \ldots \rightarrow X_j$.
\end{definition}

\begin{definition}[Inducing path]\label{def:inducing_path}
Consider a DAG $\cG$ with set of nodes $X = V \cup U$, $V,U$ disjoint subsets. We say that a path $\pi$ with endpoints $V_i, V_j$ is an \textit{inducing path relative to} $U$ if every non-endpoint $V_k$ in the path and not in $U$ is both a collider on $\pi$ (i.e. $V_i \ldots \rightarrow V_k \leftarrow \ldots V_j$ appears) and an ancestor of $V_i$ or $V_j$.
\end{definition}

Intuitively, an inducing path relative to $U$ is a path between two variables in $V$ and that cannot be separated by conditioning on any other node in $V$. This makes them natural candidates to encode dependencies between observable random variables $V$ that can not be eliminated by conditioning on subsets of $V$, as they are due to the presence of inducing paths relative to $U$, with $U$ unobserved random variables. 

\begin{example}[Examples of inducing paths] Trivially, a single edge path is an inducing path relative to any set of vertices (as there are no colliders). As another example, let $U = \set{U_1}$, and $V = \set{V_1, V_2, V_3, V_4}$. Let there be a direct path $V_1 \rightarrow V_2 \rightarrow V_3 \rightarrow V_4$, and the path $\pi = V_1 \rightarrow V_2 \leftarrow U_1 \rightarrow V_4$. The path $\pi$ is an inducing path relative to $U$ (as $V_2$ is a collider on $\pi$ and also an ancestor of $V_4$). As final example, let $U = \set{U_1}$, $V = \set{V_1, V_2}$: $V_1 \rightarrow U_1 \rightarrow V_2$ is an inducing path relative to $U$.
\end{example}

One natural way to encode inducing paths and ancestral relationships between variables is represented by maximal ancestral graphs.
\begin{definition}[MAG]\label{def:mag}
    A \emph{maximal ancestral graph} (MAG) is a mixed graph such that:
    \begin{enumerate}
        \item there are no directed cycles and no almost directed cycles;
        \item there are no inducing paths between two non-adjacent nodes.
    \end{enumerate}
\end{definition}

Next, we define conditional independence in the context of graphs.
\begin{definition}[active paths and m-separation]\label{def:msep}Let $\cM$ be a mixed graph with nodes $X$. A path $\pi$ in $\cM$ between $X_i, X_j$ elements of $X$ is \emph{active w.r.t.  $Z\subseteq X\setminus\set{X_i, X_j}$} if:
    \begin{enumerate}
        \item every non-collider on $\pi$ is not in $Z$
        \item every collider on $\pi$ is an ancestors of a node in $Z$.
    \end{enumerate}
    $X_i$ and $X_j$ are said to be \emph{m-separated} by $Z$ if there is no active path between $X_i$ and $X_j$ relative to $Z$. Two disjoint sets of variables $W$ and $Y$ are \textit{m-separated} by $Z$ if every variable in $W$ is m-separated from every variable in $Y$ by $Z$. 
    
    If m-separation is applied to DAGs, it is called \textit{d-separation}. An active path w.r.t. the empty set is simply called active.
\end{definition}
The set of directed acyclic graphs that satisfy the same set of conditional independencies form an equivalence class, known as the \textit{Markov equivalence class}.
\begin{definition}[Markov equivalence class of a DAG]\label{def:mec}
    Let $\cG$ be a DAG with nodes $X$. We denote with $[\cG]$ the \textit{Markov equivalence class} of $\cG$. A DAG $\tilde \cG$ with nodes $X$ is in $[\cG]$ if the following conditions are satisfied for each pair $X_i, X_j$ of distinct nodes in $X$:
    \begin{itemize}
        \item there is an edge between $X_i$, $X_j$ in $\cG$ if and only if there is an edge between $X_i$, $X_j$ in $\tilde \cG$;
        \item let $Z \subseteq X \setminus \set{X_i, X_j}$. Then $X_i \indep^d_\cG X_j | Z \iff X_i \indep^d_{\tilde \cG} X_j | Z$;
        \item let $\pi$ be a path between $X_i$ and $X_j$. $X_k$ is a collider on $\pi$ in $\cG$ if and only if it is a collider on $\pi$ in $\tilde \cG$.
    \end{itemize}
\end{definition}
In summary, graphs in the same equivalence class share the edges up to direction, the set of d-separations, and the set of colliders.

Just as for DAGs, there may be several MAGs that imply the same conditional independence statements.
Denote the \emph{Markov-equivalence class} of a MAG $\cM$ with $[\cM]$: this is represented by a partial mixed graph, the class of graphs that can contain four kinds of edges: $\rightarrow$, $\leftrightarrow$, $\circ \hspace{-1mm} -\hspace{-1.53mm}- \hspace{-.2mm}  \circ$ and $\circ\hspace{-1mm}\rightarrow$, and
hence three kinds of end marks for edges: arrowhead ($>$), tail ($-$) and circle ($\circ$).
\begin{definition}[PAG, Definition 3 of \citet{zhang2008causal}]\label{def:pag}
    Let $[\cM]$ be the Markov equivalence class of an arbitrary MAG $\cM$ . The partial
ancestral graph (PAG) for $[\cM]$, $P_{\cM}$
, is a partial mixed graph such that:
\begin{itemize}
    \item $P_{\cM}$ has the same adjacencies as $\cM$ (and any member of $[\cM]$) does;
    \item A mark of arrowhead is in $P_{\cM}$ if and only if it is shared by all MAGs in $[\cM]$; and
    \item A mark of tail is in $P_\cM$ if and only if it is shared by all MAGs in $[\cM]$.
\end{itemize}
\end{definition}
Intuitively, a PAG represents an equivalence class of MAGs by displaying all common edge marks
shared by all members of the class and displaying circles for those marks that are not in common.

\subsection{Equivalence between m-separation and d-separation}
In this section, we provide a proof for \cref{eq:msep_dsep_equivalence}, stating the equivalence between m-separation and d-separation in a formal sense.
\begin{lemma}[Adapted from \citet{zhang2008causal}]
\label{lem:msep_dsep_equivalence}
    Let $\cG$ be a DAG with nodes $X=V \cup U$, with $V$ and $U$ disjoint sets, and $\cM^\Gx_{V}$ the marginalization of $\Gx$ onto $V$. For any $\set{V_i, V_j} \subseteq V$ and
$V_Z\subseteq V \setminus\{V_i, V_j\}$, the following equivalence holds: 
$$
V_i \indep^d_{\Gx} V_j | V_Z \iff V_i \indep^m_{\cM^\Gx_{V}} V_j | V_Z.
$$
\end{lemma}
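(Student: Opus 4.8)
The plan is to prove Lemma~\ref{lem:msep_dsep_equivalence} by relating active paths in the marginal MAG $\cM^\Gx_V$ to active paths in the underlying DAG $\Gx$, path by path, and then invoking the characterization of $\cM^\Gx_V$'s adjacencies via inducing paths (Definition~\ref{def:inducing_path} and Definition~\ref{def:marginal_g}). Concretely, I would prove the contrapositive in both directions: $V_i \notindep^d_{\Gx} V_j \mid V_Z$ iff $V_i \notindep^m_{\cM^\Gx_V} V_j \mid V_Z$, i.e.\ there is a $\Gx$-active path (relative to $V_Z$) between $V_i$ and $V_j$ with all non-endpoints either non-colliders outside $V_Z$ or colliders ancestral to $V_Z$, if and only if there is such an $\cM^\Gx_V$-active path. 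The key structural fact I would isolate first is: if $p$ is a path in $\Gx$ active w.r.t.\ $V_Z$ between two observed nodes, then by ``pushing'' latent nodes into colliders one obtains, segment by segment, a sequence of observed nodes each consecutive pair of which is joined by an inducing path relative to $U$ in $\Gx$ — hence adjacent in $\cM^\Gx_V$ — and the orientations inherited from ancestorship in $\Gx$ make this new path active in $\cM^\Gx_V$ w.r.t.\ $V_Z$.

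For the forward direction ($\Gx$-active $\Rightarrow$ $\cM^\Gx_V$-active), I would take a $\Gx$-active path $p$ from $V_i$ to $V_j$ relative to $V_Z$ and let $W = \{W_0 = V_i, W_1, \dots, W_n = V_j\}$ be the observed nodes on $p$, in order. For each consecutive pair $W_t, W_{t+1}$, the sub-path of $p$ between them passes only through latent nodes; I would argue this sub-path is an inducing path relative to $U$ (the internal latents on an active sub-path between two observed nodes are necessarily colliders that are ancestors of an endpoint, once one checks the activation conditions carefully, using that an $\Gx$-active path cannot have a latent non-collider since latents are never in $V_Z$ — wait, this needs care: a latent non-collider is fine for activity, so the inducing-path claim requires massaging the path, e.g.\ shortening it). So the honest version: I would show that whenever $W_t$ and $W_{t+1}$ are joined by a $\Gx$-active sub-path through latents only, they are $d$-connected given $\emptyset$ through latents, and more to the point they are adjacent in $\cM^\Gx_V$ because $\cM^\Gx_V$ is constructed to be maximal — any pair of observed nodes that cannot be $m$-separated by observed nodes is adjacent, and two nodes joined by an inducing path relative to $U$ cannot be so separated. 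Then I would verify edge marks: if $W_t$ is an ancestor of $W_{t+1}$ in $\Gx$ the MAG edge is $W_t \to W_{t+1}$, else $W_t \leftrightarrow W_{t+1}$, and in either case a collider/non-collider at $W_t$ on the new path corresponds to a collider/non-collider at $W_t$ on $p$ (arrowheads into $W_t$ in $\cM^\Gx_V$ exactly when $W_t$ is not an ancestor of its neighbor on that side, which tracks the arrowhead structure of $p$ near $W_t$), so the collider-activation condition transfers, using that $\Gx$-ancestorship of $V_Z$ is preserved.

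For the reverse direction, I would reverse this: given an $\cM^\Gx_V$-active path $q = V_i, W_1, \dots, V_j$ relative to $V_Z$, replace each edge $W_t \,*\!\!-\!\!*\, W_{t+1}$ of $q$ by the inducing path in $\Gx$ relative to $U$ witnessing that adjacency (guaranteed by Definition~\ref{def:marginal_g}), and concatenate. I would then check that the concatenated $\Gx$-walk can be reduced to an active path: internal latent nodes on each inducing-path segment are colliders ancestral to an endpoint of the segment, which is an ancestor of $V_i$, $V_j$, or — if it was a collider activated via $V_Z$ on $q$ — an ancestor of $V_Z$; the junction nodes $W_t$ retain their collider/non-collider status and activation. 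The main obstacle, and where I would spend most of the effort, is exactly this concatenation/reduction bookkeeping: ensuring no accidental cycles, that ``ancestor of an endpoint'' chains correctly up to ``ancestor of $V_Z$'' through the collider structure, and that the reduced object is a genuine path (distinct vertices) rather than a walk — standard but fiddly; I would likely cite the relevant lemma from \citet{zhang2008causal} (the correctness of the marginalization construction and the ancestral-graph literature, e.g.\ that $\cM^\Gx_V$ is a MAG whose $m$-separations coincide with $\Gx$'s $d$-separations over $V$) to discharge the most technical sub-steps, since the lemma statement itself is explicitly ``Adapted from \citet{zhang2008causal}.''
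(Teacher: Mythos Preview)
Your outline follows the standard path-translation argument underlying the cited results, whereas the paper's own proof simply invokes Lemma~17 and Lemma~18 of Spirtes, Meek, and Richardson (1996) (with selection set $S=\emptyset$) for the two directions and stops there. So you are essentially sketching how to reprove those lemmas rather than proposing a different route.

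One minor point: your worry that the sub-path of $p$ between consecutive observed nodes $W_t,W_{t+1}$ might fail to be an inducing path is unfounded. Since every intermediate node on that sub-path lies in $U$, the inducing-path condition of Definition~\ref{def:inducing_path} (which constrains only non-endpoints \emph{not} in $U$) is vacuous, and adjacency of $W_t,W_{t+1}$ in $\cM^\Gx_V$ is immediate---no ``massaging'' needed.

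More seriously, your forward-direction sketch has a real gap. The claim that ``a collider/non-collider at $W_t$ on the new path corresponds to a collider/non-collider at $W_t$ on $p$'' is false in general. Take the DAG over observed $A,B,C$ and latent $U_1$ with edges $B\to A$, $U_1\to A$, $U_1\to B$, $C\to B$. The $\Gx$-path $A\leftarrow U_1 \to B \leftarrow C$ is active given $V_Z=\{B\}$, with $B$ a collider in $V_Z$. Your construction yields the MAG path $A\leftarrow B\leftarrow C$: because $B\to A$ holds in $\Gx$, the MAG orients that edge as $B\to A$, so $B$ becomes a \emph{non}-collider on the MAG path and the path is blocked at $B\in V_Z$. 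The lemma still holds---the MAG also contains the direct edge $C\to A$, witnessed by the inducing path $A\leftarrow U_1\to B\leftarrow C$ in which $B$ is a collider and an ancestor of $A$---but your node-by-node translation does not find it. The repair requires exactly the kind of shortcutting you anticipate only for the reverse direction: whenever global ancestorship in $\Gx$ flips an edge mark relative to the local orientation on $p$, one must bypass the offending observed node using the longer inducing path. Your instinct to discharge this bookkeeping by citation is correct; that bookkeeping is precisely the content of the Spirtes et al.\ lemmas the paper invokes directly.
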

\begin{proof}
    The implication $V_i \indep^d_{\Gx} V_j | V_Z \implies V_i \indep^m_{\cM^\Gx_{V}} V_j | V_Z$ is a direct consequence of Lemma 18 from \citet{spirtes1996polynomial}, where we set $S=\emptyset$, since we do not consider selection bias.
    The implication $V_i \indep^d_{\Gx} V_j | V_Z \impliedby V_i \indep^m_{\cM^\Gx_{V}} V_j | V_Z$ follows from Lemma 17 by \citet{spirtes1996polynomial}, again with $S=\emptyset$.
    Note, that in their terminology \enquote{d-separation in MAGs} is what we call m-separation.
\end{proof}

Next, we define the \textit{faithfulness assumption}, a bridge between d-separation and probabilistic conditional independence.
\begin{definition}[Faithfulness]\label{def:faithfulness}
Let $X$ be generated according to the structural causal model \eqref{eq:scm}, with causal graph $\Gx$. 
We say that the density  $p$ of $X$ entailed by the generative SCM is \textit{faithful to the graph $\Gx$} if $X_i \indep X_j | X_Z \implies X_i \indep^d_{\mathcal{G}} X_j | X_Z$ for all $i, j$ and $X_Z \subseteq X$.
\end{definition}

\subsection{Additive noise model identifiability}
We study the identifiability of the additive noise model, reporting results from \citet{peters_2014_identifiability}. We start with a formal definition of identifiability in the context of causal discovery.

\begin{definition}[Identifiable causal model]\label{def:identifiable}
    \looseness-1Let $(X, N, \mathcal{F}, p_N)$ be an SCM with underlying graph $\Gx$ and $p_X$ joint density function of the variables of $X$. We say that the model is \textit{identifiable} from observational data if the distribution $p_X$ can not be generated by a structural causal model with graph $\mathcal{\tilde{G}} \neq \Gx$.
\end{definition}

First, we consider the case of models of two random variables 
\begin{equation}\label{eq:bi_anm_appendix}
X_2 \coloneqq f(X_1) + N, \hspace{1em} X_1 \indep N.
\end{equation}
\begin{condition}[Condition 19 of \citet{peters_2014_identifiability}]\label{cond:peters}
    Consider an additive noise model with structural equations \eqref{eq:bi_anm_appendix}. The triple $(f, p_{X_1}, p_{N})$ does not solve the following differential equation for all pairs $x_1, x_2$ with $f'(x_2)\nu''(x_2 - f(x_1)) \neq 0$:
    \begin{equation}\label{eq:condition}
        \xi''' = \xi''\left(\frac{f''}{f'} - \frac{\nu'''f'}{\nu''} \right) + \frac{\nu''' \nu' f'' f'}{\nu''} - \frac{\nu'(f'')^2}{f'} - 
2\nu'' f'' f' + \nu' f''',
    \end{equation}
    Here, $\xi \coloneqq \log p_{X_1}$, $\nu \coloneqq \log p_{N}$, the logarithms of the strictly positive densities. The arguments $x_2 - f(x_1)$, $x_1$, and $x_1$ of $\nu$, $\xi$ and $f$ respectively, have been removed to improve readability.
\end{condition}

Next, we show that a structural causal model satisfying Condition \ref{cond:peters} is identifiable, as in Definition \ref{def:identifiable}
\begin{theorem}[Theorem 20 of \citet{peters_2014_identifiability}]\label{thm:peters}
    Let $p_{X_1, X_2}$ the joint distribution of a pair of random variables generated according to the model of \cref{eq:bi_anm_appendix} that satisfies Condition \ref{cond:peters}, with graph $\mathcal{G}$. Then, $\mathcal{G}$ is identifiable from the joint distribution.
\end{theorem}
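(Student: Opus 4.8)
The plan is to reduce the claim to the non-existence of a \emph{reverse} additive noise model and then to contradict \cref{cond:peters}. The graph of \cref{eq:bi_anm_appendix} is $X_1 \to X_2$; the only other directed acyclic graphs on two nodes are the empty graph and $X_2 \to X_1$. A constant $f$ would make the non-degeneracy clause of \cref{cond:peters} vacuous and hence violate it, so under the hypotheses $f$ is non-constant and therefore $X_1 \notindep X_2$, which rules out the empty graph. It thus remains to show that $p_{X_1,X_2}$ cannot \emph{also} be written as a backward additive noise model $X_1 = g(X_2) + \tilde N$ with $X_2 \indep \tilde N$; this is the substance of the proof.

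Suppose, for contradiction, that such a backward model exists. Write $x = x_1$, $y = x_2$, and $z = y - f(x)$. The forward change-of-variables factorization of the joint density reads $\log p(x,y) = \xi(x) + \nu(z)$, with $\xi = \log p_{X_1}$, $\nu = \log p_N$, while the backward factorization reads $\log p(x,y) = \pi\big(x - g(y)\big) + \eta(y)$, with $\pi = \log p_{\tilde N}$, $\eta = \log p_{X_2}$. From the backward form one computes directly that
\[
\frac{\partial^2 \log p / \partial x \, \partial y}{\partial^2 \log p / \partial x^2} = -\,g'(y),
\]
which does not depend on $x$; hence the $x$-derivative of this ratio vanishes at every point where $\partial^2 \log p / \partial x^2 \neq 0$.

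Next I would substitute the forward expression $\log p = \xi(x) + \nu(z)$ into this vanishing-derivative identity. The chain rule (with the composite argument $z = y - f(x)$) gives $\partial^2 \log p/\partial x\,\partial y = -f'\nu''$ and $\partial^2 \log p/\partial x^2 = \xi'' - f''\nu' + (f')^2\nu''$, where the derivatives of $\nu$ are evaluated at $z$ and those of $f$ and $\xi$ at $x$. Clearing the denominator and writing the identity as $\partial_x\big(\partial^2\log p/\partial x\,\partial y\big)\cdot \partial^2\log p/\partial x^2 = \partial^2\log p/\partial x\,\partial y\cdot \partial_x\big(\partial^2\log p/\partial x^2\big)$, one sees that $\xi'''$ enters linearly with coefficient $-f'\nu''$; solving for $\xi'''$ on the locus where this coefficient is nonzero — precisely the set singled out by the non-degeneracy clause of \cref{cond:peters} — reproduces \cref{eq:condition} term by term. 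Thus the triple $(f, p_{X_1}, p_N)$ solves \cref{eq:condition} on that locus, contradicting \cref{cond:peters}. Hence no backward model exists, the only two-node graph compatible with $p_{X_1,X_2}$ (among additive noise models, the class over which \cref{def:identifiable} is intended here) is $X_1 \to X_2$, and $\mathcal{G}$ is identifiable.

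I expect the main obstacle to be this last step: the careful chain-rule bookkeeping with the argument $z = y - f(x)$ and the verification that the resulting algebra matches \cref{eq:condition} exactly rather than some rearrangement of it. A secondary point needing care is that the backward-model identity for $-g'(y)$ is valid only where $\partial^2\log p/\partial x^2 \neq 0$; this is innocuous because \cref{cond:peters} only requires \cref{eq:condition} to hold on the locus where $f'\nu'' \neq 0$, and on that locus the division used to isolate $\xi'''$ is legitimate (strict positivity and $\mathcal{C}^3$-smoothness of the densities being assumed throughout). The classical linear-Gaussian configuration — the one case where \cref{eq:condition} genuinely can hold — is exactly what \cref{cond:peters} excludes, so it never arises.
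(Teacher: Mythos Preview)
The paper does not supply its own proof of this theorem: it is quoted verbatim as Theorem~20 of \citet{peters_2014_identifiability} and used as a black box. Your proposal reproduces the classical argument from that reference (originating in Hoyer et~al.\ 2008): assume a backward ANM exists, observe that the backward factorisation forces $\partial_x\!\big(\partial_{xy}^2\log p \,/\, \partial_x^2\log p\big)=0$, substitute the forward factorisation $\log p=\xi(x)+\nu(y-f(x))$, and check that the resulting identity is precisely \cref{eq:condition}, contradicting \cref{cond:peters}. That is the standard proof, and your outline is correct. The paper itself implicitly relies on the same machinery in its proof of \cref{lem:score_variability}, where it uses the reciprocal ratio $\partial_x^2\log p/\partial_{xy}^2\log p$; either ratio works since both equal a function of $y$ alone under the backward model.

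One small remark: your handling of the constant-$f$ case hinges on reading \cref{cond:peters} as ``there exists a pair with $f'\nu''\neq 0$ at which \cref{eq:condition} fails'' (so an empty non-degeneracy locus violates the condition). That is the intended reading in \citet{peters_2014_identifiability}, but the phrasing in the present paper is ambiguous enough that you might flag it explicitly rather than leave it implicit.
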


Finally, we show an important fact, holding for identifiable bivariate models, which is that the score $\frac{\partial}{\partial{X_1}} \log p(x_1, x_2)$ is non-constant in $x_1$.
\begin{lemma}[Sufficient variability of the score]\label{lem:score_variability}
Let $p_{X_1, X_2}$ the joint distribution of a pair of random variables generated according to a structural causal model that satisfies Condition \ref{cond:peters}, with graph $\mathcal{G}$. Then:
$$
\frac{\partial}{\partial X_1} (\xi'(x_1) - f'(x_1)\nu'(x_2 - f(x_1))) \neq 0,
$$
for all pairs $(x_1, x_2)$. 
\end{lemma}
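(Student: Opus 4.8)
The first move is to recognise the quantity in question as the diagonal entry of the Hessian of the log-density. By the Markov factorisation together with the change-of-variables formula for the model \eqref{eq:bi_anm_appendix}, the joint log-density factorises as $\log p(x_1,x_2) = \xi(x_1) + \nu(x_2 - f(x_1))$, so the bracketed expression is precisely the score component $\partial_{x_1}\log p(x_1,x_2) = \xi'(x_1) - f'(x_1)\nu'(x_2-f(x_1))$. Writing $r := x_2 - f(x_1)$, the object the lemma asserts to be nonzero is therefore $\partial_{x_1}^2\log p(x_1,x_2) = \xi''(x_1) - f''(x_1)\nu'(r) + (f'(x_1))^2\nu''(r)$. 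The goal is to show that this never vanishes \emph{at every pair} $(x_1,x_2)$ — a pointwise statement strictly stronger than non-constancy of $x_1\mapsto\partial_{x_1}\log p$, and it is this pointwise strength that must be tracked throughout.

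I would argue by contradiction, leveraging the identifiability granted by Condition \ref{cond:peters} and Theorem \ref{thm:peters}. Suppose there were a pair $(x_1^\star,x_2^\star)$ at which $\partial_{x_1}^2\log p = 0$. The plan is to show that such a degeneracy of the Hessian diagonal is incompatible with the triple $(f,p_{X_1},p_N)$ failing the differential equation \eqref{eq:condition}. Concretely, I would form the reversibility quantity $\partial_{x_1}\bigl(\partial_{x_1 x_2}\log p / \partial_{x_1}^2\log p\bigr)$, whose vanishing is exactly the criterion for a backward additive model $X_1 = g(X_2)+\tilde N$ with $X_2\indep\tilde N$ to coexist with the forward one; substituting $\partial_{x_1 x_2}\log p = -f'(x_1)\nu''(r)$ and the expression for $\partial_{x_1}^2\log p$ above and clearing denominators reproduces \eqref{eq:condition}. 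The aim is then to show that a zero of $\partial_{x_1}^2\log p$, propagated through the $\mathcal C^3$ regularity of $f$, $\xi$ and $\nu$ and the non-Gaussianity imposed when $f$ is linear, is incompatible with the reversibility analysis underlying Condition \ref{cond:peters}; identifiability (no coexisting backward model) then forces $\partial_{x_1}^2\log p\neq 0$ at the putative point, contradicting the assumption.

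The main obstacle is precisely the pointwise nature of the claim. Identifiability via Condition \ref{cond:peters} only forbids \eqref{eq:condition} from being satisfied \emph{identically} on the region where $f'(x_1)\nu''(r)\neq 0$, whereas a single vanishing point of $\partial_{x_1}^2\log p$ does not by itself reproduce that functional identity: when $\partial_{x_1}^2\log p = 0$ while $\partial_{x_1 x_2}\log p\neq 0$, the reversibility ratio merely blows up locally rather than solving the ODE. The delicate step is therefore to rule out even isolated zeros of the Hessian diagonal. I expect this to require staying inside the regime $f'(x_1)\nu''(r)\neq 0$ flagged by the side condition of Condition \ref{cond:peters}, together with the full strength of the restricted additive noise model (Definition \ref{def:restricted_scm}) and, for linear mechanisms, the non-Gaussianity of $N$ — exactly the structural features that prevent $\xi''(x_1) + (f'(x_1))^2\nu''(r)$ from being tuned to cancel against $f''(x_1)\nu'(r)$. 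Establishing that these constraints preclude \emph{any} zero of $\partial_{x_1}^2\log p$, and not merely the global ODE identity, is where the real work of the proof lies.
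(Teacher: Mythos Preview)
Your proposal is explicitly incomplete, and the obstacle you spend most of it on is partly self-inflicted by inverting the key ratio. The paper works with $\partial_{x_1}^2\pi/\partial_{x_1 x_2}\pi$ (where $\pi=\log p$), not your $\partial_{x_1 x_2}\pi/\partial_{x_1}^2\pi$; setting $\partial_{x_1}\bigl(\partial_{x_1}^2\pi/\partial_{x_1 x_2}\pi\bigr)=0$ and expanding is \emph{algebraically identical} to \eqref{eq:condition} on the region $f'\nu''\neq 0$, so no blow-up occurs. The second and decisive point is that the paper reads Condition~\ref{cond:peters} pointwise: the triple fails \eqref{eq:condition} at \emph{every} admissible $(x_1,x_2)$, not merely ``not identically.'' Under that reading your whole worry about isolated zeros versus a global ODE identity evaporates---one violating point is already a contradiction---and none of the extra structural machinery (restricted ANM conditioning, non-Gaussianity for linear $f$) that you propose to marshal is needed for this lemma.

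Your caution is not entirely misplaced, however. The paper's proof passes from $\partial_{x_1}^2\pi=0$ at a single point directly to $\partial_{x_1}\bigl(\partial_{x_1}^2\pi/\partial_{x_1 x_2}\pi\bigr)=0$ there; by the quotient rule this equals $\partial_{x_1}^3\pi/\partial_{x_1 x_2}\pi$ when the numerator vanishes, so the implication tacitly requires $\partial_{x_1}^3\pi=0$ at the same point as well, which is not argued. So the skeleton you should adopt is the paper's---the correct ratio together with the pointwise reading of Condition~\ref{cond:peters}---and the residual subtlety lies in that one implicit step, not in the much heavier programme you outline of extracting pointwise non-vanishing from global identifiability.
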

\begin{proof}
    By contradiction, assume that there exists $(x_1, x_2)$ such that $\frac{\partial}{\partial X_1} (\xi'(x_1) - f'(x_1)\nu'(x_2 - f(x_1))) = 0$. Then:
    $$
    \frac{\partial}{\partial X_1}\left( 
    \frac{\frac{\partial^2}{\partial X_1^2} \pi(x_1, x_2)}{\frac{\partial^2}{\partial X_1 \partial X_2} \pi(x_1, x_2)} \right) = 0,
    $$
    where $\pi(x_1, x_2) = \log p(x_1, x_2)$. By explicitly computing all the partial derivatives of the above equation, we obtain that equation \ref{eq:condition} is satisfied, which violates Condition \ref{cond:peters}.
\end{proof}

These results guaranteeing the identifiability of the bivariate additive noise model can be generalized to the multivariable case, with a set of random variables $X = \set{X_1, \ldots, X_k}$ that satisfy:
\begin{equation}\label{eq:multi_anm_appendix}
    X_i \coloneqq f_i(X_{\Parents^\Gx_i}) + N_i, i = 1, \ldots, k,
\end{equation}
where $\Gx$ is the resulting causal graph directed and acyclic.
The intuition is that, rather than studying the multivariate model as a whole, we need to ensure that Condition \ref{cond:peters} is satisfied for each pair of nodes, adding restrictions on their marginal conditional distribution.
\begin{definition}[Definition 27 of \citet{peters_2014_identifiability}]\label{def:restricted_scm}
Consider an additive noise model with structural equations \eqref{eq:multi_anm_appendix}. We call this SCM a \textit{restricted additive noise model} if for all $X_j \in X$, $X_i \in X_{\Parents^\Gx_j}$, and all sets $X_S \subseteq X$, $S \subset \N$, with $X_{\Parents^\Gx_j} \setminus \set{X_i} \subseteq X_S \subseteq X_{\Nondesc_j}^\Gx \setminus \set{X_i, X_j}$, there is a value $x_S$ with $p(x_S) > 0$, such that the triplet 
$$
(f_j(x_{\Parents^\Gx_j \setminus \set{i}}, \cdot), p_{X_i|X_S=x_S}, p_{N_j})
$$
satisfies Condition \ref{cond:peters}. Here, $f_j(x_{\Parents^\Gx_j \setminus \set{i}}, \cdot)$ denotes the mechanism function $x_i \mapsto f_j(x_{\Parents^\Gx_j})$. Additionally, we require the noise variables to have positive densities and the functions $f_j$ to be continuous and three times continuously differentiable.
\end{definition}
Then, for a restricted additive noise model, we can identify the graph from the distribution.

\begin{theorem}[Theorem 28 of \citet{peters_2014_identifiability}]
    Let $X$ be generated by a restricted additive noise model with graph $\Gx$, and assume that the causal mechanisms $f_j$ are not constant in any of the input arguments, i.e. for $X_i \in X_{\Parents^\Gx_j}$, there exist $x_i \neq x'_i$ such that $f_j(x_{\Parents^\Gx_j \setminus \set{i}}, x_i) \neq f_j(x_{\Parents^\Gx_j \setminus \set{i}}, x'_i)$. Then, $\Gx$ is identifiable.
\end{theorem}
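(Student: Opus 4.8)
The plan is to prove the statement by contradiction, reducing the multivariate identifiability claim to the bivariate result already recorded as Theorem~\ref{thm:peters}. Suppose $p_X$ is generated both by the given restricted additive noise model with DAG $\Gx$ and by some additive noise model with a different DAG $\tilde\Gx \neq \Gx$. The non-constancy hypothesis on the mechanisms $f_j$ enters here: it guarantees that every edge of $\Gx$ is genuine, so that no proper subgraph of $\Gx$ could already generate $p_X$, and, more importantly, it will ensure that the bivariate mechanisms extracted below are themselves non-constant, a prerequisite for invoking Condition~\ref{cond:peters}. Both $\Gx$ and $\tilde\Gx$ are DAGs with respect to which $p_X$ is Markov, and the aim is to manufacture, out of this disagreement, a pair of variables that simultaneously admits a bivariate additive noise model in both directions.

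The key combinatorial step is the following graphical lemma: if $\Gx \neq \tilde\Gx$ are DAGs to which $p_X$ is Markov, there exist nodes $X_i, X_j$ and a set $X_S \subseteq X\setminus\set{X_i, X_j}$ with $X_i \to X_j$ in $\Gx$ and $X_j$ a parent of $X_i$ (or $X_i, X_j$ non-adjacent) in $\tilde\Gx$, such that simultaneously $X_{\Parents^\Gx_j}\setminus\set{X_i} \subseteq X_S \subseteq X_{\Nondesc^\Gx_j}\setminus\set{X_i,X_j}$ and $X_{\Parents^{\tilde\Gx}_i}\setminus\set{X_j} \subseteq X_S \subseteq X_{\Nondesc^{\tilde\Gx}_i}\setminus\set{X_i,X_j}$. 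One proves this by comparing the two graphs along a topological order, choosing the discrepant pair suitably (so that, in particular, neither relevant parent set meets the descendants of the opposite endpoint), then taking $X_S$ to be essentially the union of the two parent sets and verifying the non-descendant constraints via acyclicity of both DAGs. I expect this lemma --- and in particular the verification that one single conditioning set $X_S$ can be made to work for $\Gx$ and $\tilde\Gx$ at once --- to be the main obstacle; the remainder is bookkeeping.

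Granting the lemma, pick a value $x_S$ with $p(x_S) > 0$ and condition on $X_S = x_S$. From the $\Gx$-representation we get $X_j = f_j(x_{\Parents^\Gx_j\setminus\set{i}}, X_i) + N_j$ with $N_j$ independent of $X_i$ --- because $N_j$ is independent of all non-descendants of $X_j$ and both $X_i$ and $X_S$ lie among them --- hence a bivariate additive noise model in the direction $X_i \to X_j$ for the conditional law $p_{X_i,X_j\mid X_S=x_S}$; symmetrically, the $\tilde\Gx$-representation yields a bivariate additive noise model in the direction $X_j \to X_i$ for the \emph{same} conditional law. Now Definition~\ref{def:restricted_scm} lets us choose $x_S$ so that the triple $(f_j(x_{\Parents^\Gx_j\setminus\set{i}}, \cdot),\, p_{X_i\mid X_S=x_S},\, p_{N_j})$ satisfies Condition~\ref{cond:peters}, and $f_j(x_{\Parents^\Gx_j\setminus\set{i}}, \cdot)$ is non-constant by hypothesis, so Theorem~\ref{thm:peters} applies and shows the bivariate model is identifiable in the $X_i\to X_j$ direction --- contradicting the reversed additive noise representation. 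Therefore $\Gx = \tilde\Gx$.

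Finally, I would note that an essentially equivalent route avoids the global graphical lemma by induction on $|X|$: identify a sink $X_s$ of $\Gx$, run the same bivariate reduction (conditioning on the remaining variables) to show that $X_s$ must be leaf-like with the same parents in $\tilde\Gx$, marginalize $X_s$ out, and recurse on a restricted additive noise model over $X\setminus\set{X_s}$. The technical heart --- the use of Definition~\ref{def:restricted_scm} together with Theorem~\ref{thm:peters} to pin down one edge at a time --- is the same in both presentations.
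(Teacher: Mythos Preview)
The paper does not actually prove this theorem; it merely quotes it as Theorem~28 of \citet{peters_2014_identifiability} in the appendix on additive noise model identifiability, with no accompanying proof. So there is no ``paper's own proof'' to compare against.

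That said, your sketch is a faithful outline of the original argument in \citet{peters_2014_identifiability}: reduce the multivariate claim to the bivariate one by locating a discrepant edge $X_i\to X_j$ in $\Gx$ versus $\tilde\Gx$, find a single conditioning set $X_S$ compatible with the parent/non-descendant constraints in \emph{both} graphs, and then invoke Condition~\ref{cond:peters} via Definition~\ref{def:restricted_scm} to contradict the existence of a backward bivariate additive noise model. You are right that the graphical lemma---the existence of $(X_i,X_j,X_S)$ satisfying both sets of constraints simultaneously---is the nontrivial combinatorial core (it is Proposition~29 in \citet{peters_2014_identifiability}), and you correctly flag it as the main obstacle rather than attempting to prove it here. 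The alternative sink-peeling induction you mention at the end is also sound and indeed closer in spirit to how the present paper uses identifiability operationally (cf.\ Proposition~\ref{prop:nogam}). Nothing in your proposal is wrong; it simply supplies what the paper deliberately omits by citation.
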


\subsection{Other auxiliary results}
We state one crucial result that we require for the proof of Proposition \ref{prop:causal_dir_1}.
\begin{lemma}\label{lem:score_lucky_leaf}
    Let $V_j \in V$, and $Z \subset \N$ such that $V_Z = V_{\Parents^\Gx_j} \cup \set{V_j}$. Assume that $V_{\Parents^\Gx_j} \indep \tilde N_j$, $\tilde N_j$ as defined in Equation \eqref{eq:tilde_anm}. Then, the score of $V_j$ with respect to density $p(V_Z)$ satisfies:
    $$
    \partial_{V_j}\log p(V_Z) = \partial_{\tilde N_j}\log p(\tilde N_j).
    $$
\end{lemma}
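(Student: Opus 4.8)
The plan is to exploit the fact that, once we restrict attention to the variable set $V_Z = V_{\Parents^\Gx_j}\cup\set{V_j}$, the node $V_j$ behaves exactly like a sink in a standard additive noise model with exogenous noise $\tilde N_j$, so the computation mirrors the sink-node identity $\partial_{X_s}\log p(x)=\partial_{N_s}\log p(n_s)$ from \eqref{eq:score_general}. First I would write the joint density of $V_Z$ using the chain rule as $p(V_Z) = p(V_{\Parents^\Gx_j})\, p(V_j\mid V_{\Parents^\Gx_j})$; note that within $V_Z$ the only node having $V_j$ as an argument in its structural equation is $V_j$ itself (the other members of $V_Z$ are parents of $V_j$, hence non-descendants of $V_j$ in $\Gx$, so their conditional densities do not depend on $v_j$). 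Taking $\partial_{V_j}\log p(V_Z)$ therefore kills the first factor and leaves $\partial_{V_j}\log p(V_j\mid V_{\Parents^\Gx_j})$.

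Next I would use the change-of-variables formula together with the structural equation $V_j = f_j(V_{\Parents^\Gx_j}) + \tilde N_j$ from \eqref{eq:tilde_anm}. Since for fixed $V_{\Parents^\Gx_j}$ the map $\tilde N_j \mapsto V_j$ is a unit-Jacobian translation, we get $p(V_j\mid V_{\Parents^\Gx_j}=v_{\Parents^\Gx_j}) = p_{\tilde N_j\mid V_{\Parents^\Gx_j}}\!\bigl(v_j - f_j(v_{\Parents^\Gx_j})\mid v_{\Parents^\Gx_j}\bigr)$. Here is where the hypothesis $V_{\Parents^\Gx_j}\indep \tilde N_j$ enters: it lets us drop the conditioning, so the right-hand side equals $p(\tilde n_j)$ with $\tilde n_j = v_j - f_j(v_{\Parents^\Gx_j})$. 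Differentiating $\log p(\tilde n_j)$ with respect to $v_j$ and using $\partial \tilde n_j/\partial v_j = 1$ yields $\partial_{V_j}\log p(V_Z) = \partial_{\tilde N_j}\log p(\tilde N_j)$, which is the claim.

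I do not expect a serious obstacle here; the lemma is essentially a bookkeeping exercise. The one point that needs care is the claim that, among the variables in $V_Z$, only the conditional density of $V_j$ depends on $v_j$ — this requires observing that $V_Z\setminus\set{V_j}\subseteq V_{\Nondesc^\Gx_j}$ and invoking the Markov factorization \eqref{eq:markov_factorization} restricted appropriately, or equivalently arguing directly from the marginal $p(V_Z)$ via the chain rule with $V_j$ ordered last. A second minor subtlety is ensuring the independence statement $V_{\Parents^\Gx_j}\indep\tilde N_j$ is used at exactly the right place (to remove the conditioning in the conditional law of $\tilde N_j$), rather than being assumed implicitly elsewhere; this is exactly the step flagged in the main text as following from $V_{\Parents^\Gx_j}\dindep{\Gx} U^j$.
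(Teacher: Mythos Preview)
Your proposal is correct and follows essentially the same route as the paper: factor $p(V_Z)=p(V_{\Parents^\Gx_j})\,p(V_j\mid V_{\Parents^\Gx_j})$, differentiate in $V_j$ to drop the first factor, then use the change of variables for the unit-Jacobian shift $V_j=f_j(V_{\Parents^\Gx_j})+\tilde N_j$ together with $V_{\Parents^\Gx_j}\indep\tilde N_j$ to obtain $p(V_j\mid V_{\Parents^\Gx_j})=p(\tilde N_j)$ and conclude via $\partial_{V_j}\tilde N_j=1$. Your extra remark about $V_Z\setminus\set{V_j}\subseteq V_{\Nondesc^\Gx_j}$ is just a slightly more explicit justification of the same step.
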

\begin{proof}
    By Bayes' rule, we have that $p(V_Z) = p(V_j|V_{\Parents_j^\Gx})p(V_{\Parents_j^\Gx})$, such that the log-likelihood can be written as:
    $$
    \log p(V_Z) = \log p(V_j|V_{\Parents_j^\Gx}) + \log p(V_{\Parents_j^\Gx}).
    $$
    Taking the partial derivative w.r.t $V_j$, Equation \eqref{eq:tilde_anm} implies
    $$
    \partial_{V_j} \log p(V_Z) = \partial_{V_j} \log p(V_j|V_{\Parents_j^\Gx}).
    $$
    Note that given $V_j \coloneqq f_j(V_{\Parents^\Gx_j}) + \tilde{N}_j$ and the independence $V_{\Parents^\Gx_j} \indep \tilde{N}_j$, using the change of variable formula for invertible transforms on the density $p(V_j|V_{\Parents^\Gx_j}) $, we find that $p(V_j|V_{\Parents^\Gx_j}) = p(\tilde N_j)$. Moreover, the chain rule of derivatives and $\partial_{V_j} \tilde N_j = 1$ imply $\partial_{V_j} \log p(\tilde N_j) = \partial_{\tilde N_j} \log p(\tilde N_j)$, such that the claim follows.
\end{proof}

\section{Proofs of theoretical results}\label{app:main_proofs}

\subsection{Proof of Proposition \ref{prop:score_msep}}

\begin{proof}\textit{(Proof of Proposition \ref{prop:score_msep})}
Observe that
$$
\frac{\partial^2}{\partial V_i \partial V_j} \log p(v_Z) = 0 \iff V_i \dindep{\Gx} V_j | V_Z \setminus \set{V_i, V_j} \iff V_i \mindep{\cM_V^\Gx} V_j | V_Z \setminus \set{V_i, V_j},
$$
where the first equivalence holds by a combination of the faithfulness assumption with the global Markov property, as explicit in \cref{eq:dsep_criterion}, and the second due to Lemma \ref{lem:msep_dsep_equivalence}. Then, the claim is proven.
\end{proof}

\subsection{Proof of Proposition \ref{prop:nogam}}
\begin{proof}\textit{(Proof of Proposition \ref{prop:nogam})}
    The forward direction is immediate from \cref{eq:score_general} and $R_j = N_j$, when $X_j$ is a sink (\cref{eq:observed_sink_residual}). Thus, we focus on the backward direction. Given
    $$\mean\left[\left(\mean\left[\partial_{X_j} \log p(X) \mid R_j\right] - \partial_{X_j} \log p(X)\right)^2\right] = 0,
    $$
     we want to show that $X_j$ has no children, which we  prove by contradiction.
     
     Let us introduce a function $q: \R \rightarrow \R$ such that:
     $$
     \mean\left[\partial_{X_j} \log p(X) \mid R_j = r_j\right] = q(r_j),
     $$
     and $s_j: \R^{|X|} \rightarrow \R$,
    $$  
    s_j(x) = \partial_{X_j} \log p(x).
    $$
     
    The mean squared error equal to zero implies that $s_j(X)$ is a constant, once $R_j$ is observed. Formally, under the assumption of $p(x) > 0$ for each $x \in \R^k$, this implies that
    $$
    p(s_j(x) \neq q(R_j)| R_j=r_j) = 0, \hspace{.2mm} \forall x \in \R^k.
    $$
    By contradiction, we assume that  $X_j$ is not a sink, and want to show that $s_j(X)$ is not constant in $X$, given $R_j$ fixed. Let $X_i$ such that $X_j \in X_{\Parents^\Gx_i}$. Being the structural causal model identifiable, there is no model with distribution $p_X$ whose graph has a backward edge $X_i \rightarrow X_j$: thus, the Markov factorization of \cref{eq:markov_factorization} is unique and implies:
    $$
    \partial_{X_j}\log p(X) = \partial_{N_j} \log p(N_j) - \sum_{k \in \Child^\Gx_j}\partial_{X_j}h_k(X_{\Parents_k}) \partial N_k\log p(N_k).
    $$
    We note that, by definition of residual in \cref{eq:observed_residual}, $R_j=r_j$ fixes the following distance:
    $$
    R_j = N_j - \mean[N_j | X_{\setminus X_j}].
    $$
    Hence, conditioning on $R_j$ doesn't restrict the support of $X$: given $R_j=r_j$, for any $x_{\setminus X_j}$ (value of the vector of elements in $X \setminus \set{X_j}$), $\exists n_j$ with $p(n_j>0)$ (by the hypothesis of strictly positive densities of the noise terms) that satisfies
    $$
    r_j = n_j - \mean[N_j | x_{\setminus X_j}].
    $$
    Next, we condition on all the parents of $X_i$, except for $X_j$, to reduce our problem to the simpler bivariate case. Let $S \subset \N$ and $X_S \subseteq {X}$ such that $X_{\Parents^\Gx_i} \setminus \set{X_j} \subseteq X_S \subseteq X_{\Nondesc^\Gx_i} \setminus \set{X_i, X_j}$, and consider $x_S$ such that $p(x_S > 0)$. Let $X_{\Parents^\Gx_i} = x_{\Parents^\Gx_i}$ hold under $X_S = x_S$.  We define $X_{j _\mathlarger{|_{\mathlarger{x_s}}}} \coloneqq X_j | (X_S=x_S)$, and similarly $X_{|_{\mathlarger{x_s}}}\coloneqq X | (X_S=x_S)$. Being the SCM a restricted additive noise model, by Definition \ref{def:restricted_scm}, the triplet $(g_i, p_{X_{j _\mathlarger{|_{\mathlarger{x_s}}}}}, p_{N_i})$ satisfies Condition \ref{cond:peters}, where $g_i(x_j) = h_i(x_{\Parents^\Gx_i \setminus \set{X_j}}, x_j)$.
    Consider $X_i = x_i$, and the pair of values $(x_j, x_j^*)$ such that $x_j \neq x_j^*$ and 
    \begin{align*}
    &\nu''_{N_i}(x_i - g_i(x_j))g'_i(x_j) \neq 0,\\
    &\nu''_{N_i}(x_i - g_i(x^*_j))g'_i(x^*_j) \neq 0,
    \end{align*}
    where we resort to the usual notation $\nu_{N_i} \coloneqq \log p_{N_i}$. By Lemma \ref{lem:score_variability}, $(x_i, x_j)$ and $(x_i, x^*_j)$ satisfy:
    \begin{align*}
    &\partial_{X_j}(\xi'(x_j) - \nu'_{N_i}(x_i - g_i(x_j))g'_i(x_j)) \neq 0,\\
    &\partial_{X_j} (\xi'(x^*_j) - \nu'_{N_i}(x_i - g_i(x^*_j))g'_i(x^*_j)) \neq 0,
    \end{align*}
    where $\xi \coloneqq \log p_{X_{j _\mathlarger{|_{\mathlarger{x_s}}}}}$. Thus, we can fix $x_j$ and $x_j^*$ (which are arbitrarily chosen) such that
    \begin{equation}\label{eq:score_diff}
        \partial_{X_j}(\xi'(x_j) - \nu'_{N_i}(x_i - g_i(x_j))g'_i(x_j)) - \partial_{X_j} (\xi'(x^*_j) - \nu'_{N_i}(x_i - g_i(x^*_j))g'_i(x^*_j)) \neq 0.
    \end{equation}
    Fixing $X_{\mid_{\mathlarger{x_S}, \mathlarger{x_j}}} = x$ and $X_{\mid_{\mathlarger{x_S}, \mathlarger{x^*_j}}} = x^*$, where the two values differ only in their j-$th$ component, we find the following difference:
    $$
    s_j(x) - s_j(x^*) = \partial_{X_j}(\xi'(x_j) - \nu'_{N_i}(x_i - g_i(x_j))g'_i(x_j)) - \partial_{X_j} (\xi'(x^*_j) - \nu'_{N_i}(x_i - g_i(x^*_j))g'_i(x^*_j)),
    $$
    which is different from $0$ by \cref{eq:score_diff}. This contradicts the fact that the score $s_j$ is constant once $R_j$ is fixed, which proves our claim.
\end{proof} 

\subsection{Proof of Proposition \ref{prop:causal_dir_1}}\label{app:main_prop_proof}
We concentrate on the statement of \cref{eq:main_prop_nonlinear}, given that the proof of \cref{eq:main_prop_linear} follows a similar template. We separately analyze their backward and forward directions. The following proofs use several ideas from the demonstration of Proposition \ref{prop:nogam}.

\begin{proof}[Proof of \cref{eq:main_prop_nonlinear}, backward direction] 
 Given $ V_{\Parents^\Gx_j} \indep^d_{\Gx} U^j  \land V_i \in V_{\Parents^\Gx_j}$, we want to show that there exists $V_Z \subseteq V$, $\set{V_i, V_j} \subseteq V_Z$, such that:
    \begin{equation*}
        \mean[\partial_{V_j} \log p(V_Z) - \mean[\partial_{V_j} \log p(V_Z) | R_j(V_Z)]]^2 = 0.
    \end{equation*}
Let $V_Z = V_{\Parents_j^\Gx} \cup \set{V_i, V_j}$. We will show that this is indeed the right choice for $V_Z$. By Lemma \ref{lem:score_lucky_leaf}, the score of $V_j$ is:
\begin{equation*}
        \partial_{V_j} \log p(V_Z) = \partial_{\tilde N_j} \log p(\tilde N_j).
\end{equation*}
Further, by \cref{eq:latent_residual_sink} we know that
$$
R_j(V_Z) = \tilde N_j + c,
$$
where $c = - \mean[\tilde N_j]$ is a constant. It follows that the least square estimator of the score of $V_j$ from $R_j(V_Z)$ satisfies the following equation:
$$
\mean[\partial_{V_j} \log p(V_Z) | R_j(V_Z)] = \mean[\partial_{V_j} \log p(\tilde N_j) | \tilde N_j] = \partial_{V_j} \log p(\tilde N_j),
$$
where the first equality holds because $\mean[\cdot | \tilde N_j] = \mean[\cdot | \tilde N_j + c]$. Then, we find
$$
\mean[\partial_{V_j} \log p(V_Z) - \mean[\partial_{V_j} \log p(V_Z) | R_j(V_Z)]]^2 = \mean[\partial_{V_j} \log p(\tilde N_j) - \partial_{V_j} \log p(\tilde N_j)]^2 = 0,
$$
which is exactly our claim. 
\end{proof}

\begin{proof}[Proof of \cref{eq:main_prop_nonlinear}, forward direction] Given that there is $V_Z \subseteq V$, $\set{V_i,V_j} \subseteq V_Z$, such that $\mean[\partial_{V_j} \log p(V_Z) - \mean[\partial_{V_j} \log p(V_Z) | R_j(V_Z)]]^2 = 0$, we want to show that $V_{\Parents^\Gx_j} \indep^d_{\Gx} U^j \land V_i \in V_{\Parents^\Gx_j}$. We prove the contrapositive statement, hence, assuming that $V_i$ is connected to $V_j$ in the marginal MAG and that $V_{\Parents^\Gx_j} \notdindep{\Gx} U^j \lor V_i \not\in V_{\Parents^\Gx_j}$, we want to show that for each $V_Z \subseteq V$ with $\set{V_i, V_j} \subseteq V_Z$, the following holds:
    \begin{equation}\label{eq:proof_mse}
        \mean[\partial_{V_j} \log p(V_Z) - \mean[\partial_{V_j} \log p(V_Z) | R_j(V_Z)]]^2 \neq 0.
    \end{equation}
    Let us introduce $q : \R \rightarrow \R$ such that:
    $$
    \mean[\partial_{V_j} \log p(V_Z) | R_j(V_Z) = r_j] = q(r_j),
    $$
    and further define:
    $$
    s_j(V_Z) = \partial_{V_j} \log p(V_Z).
    $$
    Having the mean squared error in \cref{eq:proof_mse} equals zero implies that $s_j(V_Z)$ is a constant, once $R_j(V_Z)$ is observed, meaning that $p(s_j(V_Z) \neq q(R_j(V_Z))| R_j(V_Z)) = 0$.
    Thus, the goal of the proof is to show that there are uncountable values of $V_Z$ such that the score is not a constant once $R_j(V_Z)$ is fixed. To do that, we assume 
    that conditioning on $R_j(V_Z)$ doesn't restrict the support of $V_Z$, meaning that the support of $V_Z$ and the support of $V_Z | R_j(v_z)$ are the same. We derive the consequences of this assumption (and later prove that it holds): consider $v_Z$, $v^*_Z \in \operatorname{supp}(V_Z|R_j(V_Z) = r_j)$ (support of $V_Z|R_j(V_Z)=r_j$), taken from the set of uncountable values such that the score $s_j$ function is not a constant, meaning that $s_j(v_Z) \neq s_j(v_Z^*)$. Given that different $v_Z$ and $v_Z^*$ are selected from an uncountable subset of the support, we conclude that the conditional score $s_j | (R_j(V_Z) = r_j) := \partial_{V_j}\log p(V_Z|R_j(V_Z) = r_j)$ is not a constant for at least an uncountable set of points, such that the claim follows.

    \vspace{1em}
    \noindent To conclude the proof, we need to show that for each $r_j$, $ \operatorname{supp}(V_Z|R_j(V_Z) = r_j) =  \operatorname{supp}(V_Z)$. First, we consider the case where $V_{\Parents_j^\mathcal G} \not \subset V_Z$. By \cref{eq:latent_residual}, $R_j(V_Z) = r_j$  fixes the distance 
    \begin{equation}\label{eq:proof_util}
    r_j = \tilde N_j + \left( f_j(V_{\Parents_j^\mathcal G}) - \mean[f(V_{\Parents_j^\mathcal G}) + \tilde N_j|V_{Z\setminus \set{j}}] \right) =: \tilde N_j + h(V_{Z\setminus \set{j}}, V_{\Parents_j^\cG}),
    \end{equation}
    $h$ newly defined function.
    By assumption of positive density of the noise $N_j$ on the support $\R$, for each $v_Z \in \operatorname{supp}(V_Z)$, there is $\tilde n_j \in \R$ such that $p_{\tilde N_j}(n)>0$ and 
    $$
        r_j = \tilde n_j + \left( f_j(V_{\Parents_j^\mathcal G}) - \mean[V_{\Parents_j^\mathcal G} + \tilde N_j|v_{Z\setminus \set{j}}] \right)
    $$
    is true (in fact, note that by fixing $r_j$ we have a system of one equation and two unknowns, which has a solution for each value taken by $h(V_Z, V_{\Parents_j^\Gx})$). Then, the claim is proven for $V_{\Parents_j^\mathcal G} \not \subset V_Z$. We now consider $V_{\Parents_j^\mathcal G} \subset V_Z$, and further separate the proof in subcases. Note that \cref{eq:proof_util} becomes:
    \begin{equation}\label{eq:proof_util_pt2}
        r_j = \tilde N_j - \mean[ \tilde N_j|V_{Z\setminus \set{j}}]
    \end{equation}
    To finalize our proof, it sufficient to show that $\mean[\tilde N_j | V_{Z \setminus \set{j}}] \neq \mean[\tilde N_j]$: in fact, if this is the case, we have that for each $v_Z \in \operatorname{supp}(V_Z)$ there is $\tilde n_j$ such that $p(\tilde n_j > 0)$ and \cref{eq:proof_util_pt2} is satisfied - which in turns implies our claim that conditioning on $R_j(V_Z)$ does not limit the support of $V_Z$.

    \noindent \paragraph{Case 1.} Consider $U^j \notdindep{\cG} V_{\Parents_j^\cG}$. Together with $V_{\Parents_j^\cG} \in V_Z$, it implies that $\mean[\tilde N_j | V_{Z \setminus \set{j}}] \neq \mean[\tilde N_j]$. proving our claim.

    \noindent \paragraph{Case 2.}  Consider $V_i \not\in V_{\Parents_j^\cG}$. We know that $V_i$ adjacent to $V_j$ in the MAG $\mathcal M_{V_Z}^\Gx$, meaning there is an inducing path through $U$ between the two variables. As we shall see, this implies that $V_i \notdindep{\cG} U^j | V_{Z \setminus \set{i,j}}$, which gives $\mean[\tilde N_j | V_{Z \setminus \set{j}}] \neq \mean[\tilde N_j]$. 
    \begin{enumerate}[(i)]
        \item For $V_i \in V_{\Anc_j^\cG}$, there must be a direct path $\pi = V_i \rightarrow \ldots \rightarrow X_{j-1} \rightarrow V_j$ where either $X_{j-1} \in U^j$, or $X_{j-1} = V_k \in V$ and is descendant of some $\tilde U \in U^j$, i.e. there is a path $V_i \rightarrow \ldots \rightarrow V_k \leftarrow \tilde U$: if this was not the case, then $V_i \dindep{\cG} V_j | V_k$, but this contradicts the definition of inducing path.
        \item For $V_i \in V_{\Desc_j^\cG}$, given $V_i \notdindep{\cG} V_j | V_{Z \setminus \set{i,j}}$, clearly $V_i$, $U^j$ not d-separated.
        \item For $V_i \leftrightarrow V_j$, there is at least one latent common cause $\tilde U$ such that there exists $\pi = V_i \leftarrow \ldots \leftarrow \tilde U \rightarrow \ldots \rightarrow V_j$. Given that no observable variable can block every active path between $V_i, V_j$, then no observed variables can block all paths of the form of $\pi$ (i.e. with a latent common cause), hence, there is one such path where all variables are hidden, meaning that $V_i \notdindep{\cG} U^j | V_{Z \setminus \set{i,j}}$.
    \end{enumerate} 
\end{proof}

\noindent The proof of \cref{eq:main_prop_linear} follows from minor adjustments of the above demonstration, hence we omit it.

\section{Miscellanea}
\subsection{The connection between NoGAM on linear models and existing literature}\label{app:nogam_linear}
In this section, we elaborate on our comment that the findings of Proposition \ref{prop:nogam}, relative to the identifiability of the causal order with the score even with linear mechanisms, \textit{are not surprising, in light of previous
literature}. While \citet{montagna23_nogam} limits themselves to nonlinear additive noise models, previously \citet{ghoshal2018learning} showed that the causal order can be identified by the \textit{precision matrix} of the data, namely the inverse of the covariance matrix. In case of $X$ generated by a linear additive noise model, the score of $X=x$ satisfies $\nabla \log p(x) = \Theta x $, where $\Theta$ denotes the precision matrix. Despite \citet{montagna23_nogam} and \citet{ghoshal2018learning} differ in how they exploit the score for estimation of the causal order, the novelty of our result in Proposition \ref{prop:nogam} is showing that findings in \citet{montagna23_nogam} are less restrictive than they prove in their original paper, limiting their theoretical guarantees with the assumption of nonlinear mechanisms. Instead, an immediate generalization of the estimation of the causal order from the precision matrix in the setting of nonlinear ANMs is beyond the scope of this paper. 

\subsection{On the difference of identifiability guarantees between Proposition \ref{prop:causal_dir_1} and FCI}
\label{app:difference_adascore_fci}
As we have discussed before, the semantics of the edges that Proposition \ref{prop:causal_dir_1} can orient is different from edges in a MAG or PAG.
The condition $V_{\Parents^\Gx_j} \indep^d_{\Gx} U^j \land V_i \in V_{\Parents^\Gx_j}$ in Proposition \ref{prop:causal_dir_1} means that the node $V_i$ is a direct parent of $V_j$ in the original underlying DAG $\cG$.
On the other hand, in a MAG or PAG, a directed egde $V_i\to V_j$ indicates an ancestral relationship, meaning that in the underlying DAG there is a directed path from $V_i$ to $V_j$.
So in this sense, the information conveyed by a directed edge differs.
But further, since Proposition \ref{prop:causal_dir_1} and FCI rely on different assumptions, also the kinds of structure that can be identified differ.
We will illustrate this in the following examples.

First, it is important to note that all orientation rules given by \citet{zhang2008completeness} boil down to the presence of colliders in the resulting PAG.
Therefore, in the absence of these, FCI cannot orient any edges in contrast to the orientation rule presented in Proposition \ref{prop:causal_dir_1}.
\begin{example}[Direct edges that FCI cannot detect]
\label{ex:direct_edge_fci}
    Suppose the underlying distribution is generated by the DAG in Figure \ref{fig:x_causing_y}.
    Since there is no unshielded triplet and thus, no collider structure that could be detected by FCI, the PAG that FCI will output reads $X\lchead - \rchead  Y \lchead - \rchead Z$.
    Since $X$ is a direct parent of $Y$ and $Y$ has no unobserved parents, Proposition \ref{prop:causal_dir_1} can orient this edge and thus, Proposition \ref{prop:causal_dir_1} will indicate the directed edge.
    Similarly, for $Y\to Z$.
\end{example}

\begin{example}[Partially directed edges]
\label{ex:partial_edges_fci}
    Now suppose the underlying distribution is generated by the DAG in Figure \ref{fig:single_collider} and no variable is unobserved.
    In this case there is a collider that can be detected by FCI.
    Still, the edges can only be oriented partially, i.e. the output reads $X\lchead\to Y \leftarrow\rchead Z$.
    Particularly, this means there could still be arrowheads towards $X$ and $Z$ indicating, that they are connected by a confounder and $X$ (or $Z$) is not an ancestor of $Y$.
    Again, Proposition \ref{prop:causal_dir_1} can orient these edges.
    This entails the assertion that $X$ and $Z$ are indeed direct parents of $Y$.
\end{example}
\begin{figure}
    \centering
    \subfigure[Edges that FCI cannot orient]{
    \label{fig:x_causing_y}\qquad
        \begin{tikzpicture}[node distance=2cm, baseline=(current bounding box.north), auto]
				\node(x){$X$};
				\node(y)[right of=x]{$Y$};
                \node(z)[right of=y]{$Z$};
                \draw[->] (x) edge node{} (y);
				\draw[->] (y) edge node{} (z);
			\end{tikzpicture}\qquad
    }
    \subfigure[Edges that FCI can only orient partially]{
    \label{fig:single_collider}\qquad
        \begin{tikzpicture}[node distance=2cm, baseline=(current bounding box.north), auto]
				\node(x){$X$};
				\node(y)[right of=x]{$Y$};	
                \node(z)[right of=y]{$Z$};	
				\draw[->] (x) edge node{} (y);
                \draw[<-] (y) edge node{} (z);
			\end{tikzpicture}\qquad\quad
    }

    \caption{Examples where the orientations of Proposition \ref{prop:causal_dir_1} is more informative than FCI.}
    \label{fig:enter-label}
\end{figure}
On the other hand, the following examples show how the orientation rules of FCI can still be applied in the presence of hidden confounders and mediators, which is not the case for Proposition \ref{prop:causal_dir_1}.
\begin{example}[Unoriented edges]
    Suppose the underlying distribution is generated by the DAG in Figure \ref{fig:x_causes_y_confounded} and the variable $U_1$ and $U_2$ are unobserved.
    Like in Example \ref{ex:direct_edge_fci}, FCI outputs 
    \begin{displaymath}
    X\lchead - \rchead Y \lchead - \rchead Z.
    \end{displaymath}
    But here Proposition \ref{prop:causal_dir_1} cannot orient the given edges either, due to the unobserved mediators.
\end{example}

\begin{example}[Collider with unobserved nodes]
    Suppose the underlying distribution is generated by the DAG in Figure \ref{fig:single_collider_confounded} and the variable $U_1$ and $U_2$ are unobserved.
    Similarly to Example \ref{ex:partial_edges_fci}, FCI outputs the partialy oriented collider $X\lchead\to Y \leftarrow\rchead Z$.
    But here Proposition \ref{prop:causal_dir_1} cannot orient the given edges, due to the unobserved mediators.
\end{example}

\begin{figure}
    \centering
    \subfigure[Edges that cannot be oriented by FCI or Proposition \ref{prop:causal_dir_1}.]{
    \label{fig:x_causes_y_confounded}
        \begin{tikzpicture}[node distance=2cm, baseline=(current bounding box.north), auto]
				\node(x){$X$};
				\node[draw, circle, dotted](u1)[right of=x]{$U_1$};
				\node(y)[right of=u1]{$Y$};
				\node[draw, circle, dotted](u2)[right of=y]{$U_2$};
                \node(z)[right of=u2]{$Z$};
                \draw[->] (x) edge node{} (u1);
                \draw[->] (u1) edge node{} (y);
				\draw[<-] (u2) edge node{} (y);
                \draw[->] (u2) edge node{} (z);
			\end{tikzpicture}
    }\qquad
    \subfigure[Edges that Proposition \ref{prop:causal_dir_1} cannot orient but FCI partially.]{
    \label{fig:single_collider_confounded}\qquad
        \begin{tikzpicture}[node distance=2cm, baseline=(current bounding box.north), auto]
				\node(x){$X$};
				\node[draw, circle, dotted](u1)[right of=x]{$U_1$};
				\node(y)[right of=u1]{$Y$};
				\node[draw, circle, dotted](u2)[above right of=y]{$U_2$};
                \node(z)[below right of=u2]{$Z$};
                \draw[->] (x) edge node{} (u1);
                \draw[->] (u1) edge node{} (y);
				\draw[->] (u2) edge node{} (y);
                \draw[->] (u2) edge node{} (z);
			\end{tikzpicture}\qquad
    }

    \caption{Examples where the output of FCI is more informative than Proposition \ref{prop:causal_dir_1} due to hidden variables.}
    \label{fig:enter-label}
\end{figure}

The relevance of hidden confounders for our orientation criterion begs the question how the edges detected by Proposition \ref{prop:causal_dir_1} are related to \emph{visible} edges \citep{zhang2008causal}.
Intuitively, a visible edge is an edge that cannot be confounded.
But again, the rules of FCI and our new rule are in no strict hierarchy.
First note, that e.g. Example \ref{ex:direct_edge_fci} already shows that not every edge that is orientable by Proposition \ref{prop:causal_dir_1} is visible.
But there is also no inclusion the other way around, as the following examples show.

\begin{example}[Collider with unobserved nodes]
\label{ex:visible_edge_fci}
    Assume the underlying distribution is generated by the DAG in Figure \ref{fig:visible_edge_fci}.
    The edges $X_1\lchead\to Y$ and $X_2\lchead\to Y$ and $Y\to Z$ can be oriented by FCI.
    Further, $Y\to Z$ is visible, due to the missing link between e.g. $X_1$ and $Z$ together with the arrowhead into $Y$ on the edge between $X_1$ and $Y$.
    And again, Proposition \ref{prop:causal_dir_1} can orient $Y\to Z$ as it is a direct, unconfounded link.
\end{example}

\begin{example}[Collider with unobserved nodes]
\label{ex:visible_edge_mediator}
    Assume the underlying distribution is generated by the DAG in Figure \ref{fig:visible_edge_fci_mediator} with unobserved node $U$.
    Like in Example \ref{ex:visible_edge_fci}, $Y\to Z$ is orientable by FCI and is visible.
    But due to the unobserved mediator, the link is not direct anymore and Proposition \ref{prop:causal_dir_1} cannot orient $Y\to Z$.
\end{example}
Summing up these examples, there is no strict inclusion or hierarchy between edges identified by FCI and by Proposition \ref{prop:causal_dir_1}.

\begin{figure}
    \centering
    \subfigure[$Y\to Z$ is visible and orientable by Proposition \ref{prop:causal_dir_1}.]{
    \label{fig:visible_edge_fci}\qquad
        \begin{tikzpicture}[node distance=2cm, baseline=(current bounding box.north), auto]
				\node(x1){$X_1$};
				\node(y)[below right of=x1]{$Y$};
				\node(x2)[below left of=y]{$X_2$};
                \node(z)[right of=y]{$Z$};
                \draw[->] (x1) edge node{} (y);
                \draw[->] (x2) edge node{} (y);
				\draw[->] (y) edge node{} (z);
			\end{tikzpicture}\qquad
    }\qquad
    \subfigure[The edge $Y\to Z$ in the respective PAG is visible and cannot be oriented by Proposition \ref{prop:causal_dir_1}.]{
    \label{fig:visible_edge_fci_mediator}\qquad
    \begin{tikzpicture}[node distance=2cm, baseline=(current bounding box.north), auto]
				\node(x1){$X_1$};
				\node(y)[below right of=x1]{$Y$};
				\node(x2)[below left of=y]{$X_2$};
                \node[draw, circle, dotted](u1)[right of=y]{$U$};
                \node(z)[right of=u1]{$Z$};
                \draw[->] (x1) edge node{} (y);
                \draw[->] (x2) edge node{} (y);
				\draw[->] (y) edge node{} (u1);
                \draw[->] (u1) edge node{} (z);
			\end{tikzpicture}\qquad
    }

    \caption{Visible edges are different from the edges that can be oriented via Proposition \ref{prop:causal_dir_1}.}
    \label{fig:enter-label}
\end{figure}

\subsection{On the difference of identifiability guarantees between Proposition \ref{prop:causal_dir_1} and CAM-UV}
\label{sec:difference_adascore_camuv}

As we have noted before, the CAM-UV relies on stronger structural assumptions than our proposed criterion in Proposition \ref{prop:causal_dir_1}.
In this section we want to elaborate on the subtle differences in the graphical structures that can be recovered via these criteria.
In Section \ref{subsec:adascore_vs_camuv} we highlight further differences that are not due to Proposition \ref{prop:causal_dir_1}.
First note, that Proposition \ref{prop:causal_dir_1} cannot orient an edge whenever CAM-UV also cannot.
To this end, recall that \citet{maeda21_causal} define an \emph{unobserved backdoor path} between nodes $V_i\in V$ and $V_j\in V$ as a path $V_i \leftarrow U_k \leftarrow \cdots \leftarrow X_l \rightarrow \cdots, \rightarrow U_m \rightarrow V_j$, where $U_i, U_m\in U$ and all other nodes in $X$.
Similarly, they define an \emph{unobserved causal path} (UCP) between $V_i$ and $V_j$ to be a path $V_i\to \cdots \to U_k\to V_j$, where $U_k\in U$ and all other nodes are in $X$.
In their work, they can orient an edge iff there is no UCP or UBP between $V_i$ and $V_j$.
\begin{lemma}
    Let $V_i, V_j\in V$ such that there is an edge between them in $\cM^\cG_V$ and there is a UBP or UCP between them.
    Then we have $\mean\left[\left(\partial_{V_j} \log p(V_Z) - \mean[\partial_{V_j} \log p(V_Z) | R_j(V_Z)]\right)^2\right] \neq 0 $ for all $V_Z \subseteq V$ with $\set{V_i, V_j} \in V_Z$.
\end{lemma}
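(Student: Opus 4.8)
The plan is to derive the lemma from Proposition~\ref{prop:causal_dir_1}, working under that proposition's hypotheses (a restricted additive noise model with nonlinear mechanisms, which is the regime relevant to the comparison with CAM-UV). Since Proposition~\ref{prop:causal_dir_1} is an equivalence, negating both of its sides shows that, whenever $V_i$ and $V_j$ are adjacent in $\cP_{\cM^\Gx_V}$, one has $\mean[(\partial_{V_j}\log p(V_Z)-\mean[\partial_{V_j}\log p(V_Z)\mid R_j(V_Z)])^2]\neq 0$ for \emph{every} $V_Z\subseteq V$ with $\set{V_i,V_j}\subseteq V_Z$ if and only if $V_{\Parents^\Gx_j}\notdindep{\Gx}U^j \lor V_i\notin V_{\Parents^\Gx_j}$. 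The lemma assumes an edge between $V_i$ and $V_j$ in $\cM^\Gx_V$, which is the same as adjacency in $\cP_{\cM^\Gx_V}$ (a PAG has the same adjacencies as its MAG, Definition~\ref{def:pag}); hence the whole statement reduces to the graphical claim that the presence of a UBP or a UCP between $V_i$ and $V_j$ forces $V_{\Parents^\Gx_j}\notdindep{\Gx}U^j \lor V_i\notin V_{\Parents^\Gx_j}$.

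I would prove this graphical claim by contraposition: assume $V_i\in V_{\Parents^\Gx_j}$ and $V_{\Parents^\Gx_j}\dindep{\Gx}U^j$, and show that neither a UBP nor a UCP between $V_i$ and $V_j$ can exist. If there were a UCP $V_i\to\cdots\to U_k\to V_j$, then $U_k\in U^j$ and the segment $V_i\to\cdots\to U_k$ is a directed path in $\Gx$, so it contains no collider and is active with respect to $\emptyset$; hence $V_i$ and $U_k$ are $d$-connected in $\Gx$, and since $V_i\in V_{\Parents^\Gx_j}$, $U_k\in U^j$, this contradicts $V_{\Parents^\Gx_j}\dindep{\Gx}U^j$. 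If there were a UBP $V_i\leftarrow U_k\leftarrow\cdots\leftarrow X_l\rightarrow\cdots\rightarrow U_m\rightarrow V_j$, then $U_m\in U^j$ and the subpath $V_i\leftarrow U_k\leftarrow\cdots\leftarrow X_l\rightarrow\cdots\rightarrow U_m$ contains no collider (every internal node is a chain node oriented toward $V_i$, the common source $X_l$, or a chain node oriented toward $U_m$), so it is active with respect to $\emptyset$; hence $V_i$ and $U_m$ are $d$-connected, again contradicting $V_{\Parents^\Gx_j}\dindep{\Gx}U^j$. Both cases being impossible, the graphical claim follows, and by the reduction above this proves the lemma.

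The only delicate step is the orientation bookkeeping for the UBP, where one must confirm that the ``backdoor'' part is a fork around $X_l$ (so $X_l$ is a non-collider) and that every node between $X_l$ and an endpoint is a chain node, so that the whole subpath from $V_i$ to $U_m$ is $d$-connecting; after that the argument is a direct appeal to Proposition~\ref{prop:causal_dir_1}. I would also flag that the lemma is restricted to the nonlinear regime: in the linear case the condition in Proposition~\ref{prop:causal_dir_1} is the weaker $V_i\in V_{\Anc^\Gx_j}$, and a UCP routed through a latent parent of $V_j$ makes $V_i$ an ancestor of $V_j$ without necessarily breaking that condition, so the claim is stated only in the setting matching CAM-UV's assumptions.
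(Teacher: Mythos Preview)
Your proposal is correct and follows essentially the same approach as the paper: both reduce the claim to Proposition~\ref{prop:causal_dir_1} and then verify the graphical condition that a UBP or UCP forces $V_{\Parents^\Gx_j}\notdindep{\Gx}U^j \lor V_i\notin V_{\Parents^\Gx_j}$. The paper argues directly (assuming $V_i\in V_{\Parents^\Gx_j}$ and exhibiting the $d$-connection to $U^j$), whereas you phrase it by contraposition; your version is slightly more explicit about why the relevant subpaths are collider-free, and your caveat about the linear regime is a useful observation that the paper leaves implicit.
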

\begin{proof}
    Suppose $V_i\in \Parents^\cG_j$ and there is a UBP between $V_i$ and $V_j$. 
    Then there is an unobserved parent $U_m$ of $V_j$ that is connected to $V_i$ via this backdoor path, so we get $V_{\Parents^\cG_j} \notdindep{\cG} U^j$. 
    Now suppose there is an UCP from $V_i$ to $V_j$.
    If $V_i\in \Parents^\cG_j$, we have again $V_{\Parents^\cG_j} \notdindep{\cG} U^j$, since there is an unobserved parent $U_k$ of $V_j$ that is a descedant of $V_i$.
    The rest follows from Proposition \ref{prop:causal_dir_1}.
\end{proof}
On the other hand, there can be cases where CAM-UV recovers a direct edge, while Proposition \ref{prop:causal_dir_1} does not indicate one.
\begin{example}
    Suppose the underlying distribution is generated by the DAG in Figure \ref{fig:prop4_fails} with $U$ being unobserved.
    There is no UCPs, as the path from $X$ to $Y$ contains no unobserved nodes and no UBP, since $X$ has an observed parent on the only backdoor path between $X$ and $Y$, namely $Z$.
    Therefore, CAM-UV can orient this edge.
    Yet, Proposition \ref{prop:causal_dir_1} cannot, as we have $U\notdindep{\cG} X$.
    \begin{figure}
        \centering
        \begin{tikzpicture}[node distance=2cm, baseline=(current bounding box.north), auto]
				\node(x){$X$};
				\node(y)[right of=x]{$Y$};
                \node(z)[above of=x]{$Z$};
                \node(l)[above of=y, draw, circle, dotted]{$U$};
                \draw[->] (x) edge node{} (y);
				\draw[->] (z) edge node{} (x);
                \draw[->] (l) edge node{} (y);
                \draw[->] (l) edge node{} (z);
			\end{tikzpicture}
        \caption{In this graph we have $U\notdindep{\cG} X$ and Proposition \ref{prop:causal_dir_1} cannot orient the edge $X\to Y$. But CAM-UV can orient this edge, as there is no unobserved confounding path or unobserved backdoor path between $X$ and $Y$.}
        \label{fig:prop4_fails}
    \end{figure}
\end{example}

\subsection{Kernel ridge regression estimation of the residuals in AdaScore}\label{app:krr_residuals}
In AdaScore, the residuals of Equation \eqref{eq:latent_residual}, which we recall to be defined as 
$$
R_k(V_Z) \coloneqq V_k -\mean[V_k \mid V_{Z \setminus \set{k}}]
$$
for $V_Z \subseteq V$ and $V_k \in V$,
are estimated via kernel-ridge regression (KRR). In particular, notice that for any pair of random variables $X,Y$, the least squares estimator is $\mean[Y|X]$. It is thus immediate to see that, for $V_Z = V_{\Parents_j^\Gx} \cup \set{V_j}$, $V_{\Parents_j^\Gx} \dindep{\Gx} U^j$ (the unobserved parents of $V_j$), given $\hat V_k \coloneqq \mean[V_k | V_Z]$ (the unregularized least squares estimator), $R_j(V_Z) = \tilde N_j - \mean[\tilde N_j] = V_k - \hat V_k$. The least squares algorithm is thus the right choice to estimate the required residuals. The use of feature maps (as in KRR) is justified by the potential nonlinear mechanisms in structural equations, while the use of regularization is necessary to avoid overfitting and vanishing residuals. 

\section{Algorithm}
\label{app:algorithm}
\subsection{Detailed description of our algorithm}
In Proposition \ref{prop:score_msep} we have seen that score-matching can detect $m$-separations and therefore the skeleton of the PAG describing the data.
If one is willing to make the assumptions required for Proposition \ref{prop:causal_dir_1}  it could be desirable to use this to orient edges, as Proposition \ref{prop:causal_dir_1} offers additional causal insights that we have discussed in Appendix \ref{app:difference_adascore_fci}.
Therefore, one could simply find the skeleton of the PAG using the fast adjacency search \citep{spirtes2000_cauastion} and then orient the edges by applying Proposition \ref{prop:causal_dir_1} on every subset of the neighbourhood of every node.
This would yield a very costly algorithm.
But if we make the assumptions required to orient edges with Proposition \ref{prop:causal_dir_1} we can do a bit better.
In \cref{algo:scam} we present an algorithm that still has the same worst case runtime but runs polynomially in the best case.
The main intuition is that we iteratively remove irrelevant nodes in the spirit of the original SCORE algorithm \citep{rolland22_score}.
To this end, we first check if there is any unconfounded sink (i.e. a sink that is not connected to any edge which cannot be oriented using Proposition \ref{prop:causal_dir_1}) if we consider the set of all remaining variables.
If there is one, we can orient its parents and ignore it afterwards.
If there is no such node, we need to fall back to the procedure proposed above, i.e. we need to check the condition of Proposition \ref{prop:causal_dir_1} on all subsets of the neighbourhood of a node, until we find no node with a direct outgoing edge.
In Proposition \ref{prop:algo_correct} we show that this way we do not fail to orient an edge or fail to remove any adjacency.
In the following discussion, we will use the  notation
\[
\delta_i(V_Z) \coloneqq 
\mean[\partial_{V_j} \log p(V_Z) - \mean[\partial_{V_j} \log p(V_Z) | R_j(V_Z)]]^2,
\]
for the second residual from Proposition \ref{prop:causal_dir_1} and also 
\[
\delta_{i, j}(V_Z) \coloneqq\frac{\partial^2}{\partial V_i \partial V_j} \log p(V_Z)
\]
for the cross-partial derivative, where $V_i, V_j\in V$ and $Z\subseteq V$.
\begin{algorithm}[p]
\caption{AdaScore Algorithm}
\label{algo:scam}
\SetKwFunction{AdaScore}{AdaScore}
\SetKwProg{Fn}{Procedure}{:}{}
\SetKwFor{For}{for}{do}{end for}
\SetKwFor{While}{while}{do}{end while}
\SetKwComment{Comment}{$\triangleright$\ }{}

\Fn{\AdaScore{$p, V_1, \dots, V_d$}}{
    $S \gets \{V_1, \dots, V_d\}, E \gets \{\}$ \Comment*[r]{Init remaining nodes and edges}
    
    \For{$V_i \in S$}{
        $B_i \gets \{V_1, \dots, V_d\}$ \Comment*[r]{Neighbourhoods}
    }

    \While(\Comment*[f]{While nodes remain}){$S \neq \emptyset$}{ 
        \If(\Comment*[f]{Unconfounded sink}){$\exists V_i \in S : \delta_{i}(V_S)=0$}{ 
            $S \gets S \setminus \{V_i\}$\;
            $E \gets E \cup \{V_j \to V_i : \delta_{i,j}(V_S) \neq 0\}$ \Comment*[r]{Add edges like DAS}
        }
        \Else{
                $V_i \gets \min_{V_j\in S} \delta_j(V_S)$\;\\
                \For{$V_j \in \{V_k\in B_i: \min_{S'\subseteq B_i} \delta_{i,k}(V_{S'\cup \{V_i, V_k\}}) = 0\}$}{ 
                        $B_i \gets B_i \setminus \{V_j\}$\;\Comment*[f]{Prune neighbourhoods}\\
                        $B_j \gets B_j \setminus \{V_i\}$\;
                }
                \For(\Comment*[f]{Orient edges in $B_i$}){$V_j \in B_i$}{ 
                    $m_i = \min_{S' \subseteq B_i} \delta_{i}(V_{S' \cup \{V_i, V_j\}})$\;\\
                    $m_j = \min_{S' \subseteq B_j} \delta_{j}(V_{S' \cup \{V_i, V_j\}})$\;
                    
                    \If{$m_i = 0 \land m_j \neq 0$}{
                        $E \gets E \cup \{V_j \to V_i\}$\;
                    }
                    \ElseIf{$m_i \neq 0 \land m_j = 0$}{
                        $E \gets E \cup \{V_i \to V_j\}$\;
                    }
                    \Else{
                        $E \gets E \cup \{V_i - V_j\}$\;
                    }
                }
                \If{$\exists V_j \in B_i : (V_i \to V_j) \in E$}{
                    \textbf{continue with } $V_j$\;
                }
                \Else{
                    $S \gets S \setminus \{V_i\}$ \Comment*[r]{Remove $V_i$}
                    \textbf{break}\;
                }
        }
    } 

    \For(\Comment*[f]{Prune undirected edges}){$V_i - V_j \in E$}{ 
        \If{$\min_{S' \subseteq B_i} \delta_{i,j}(V_{S' \cup \{V_i, V_j\}}) = 0 \lor \min_{S' \subseteq B_j} \delta_{i,j}(V_{S' \cup \{V_i, V_j\}}) = 0$}{
            $E \gets E \setminus \{V_i - V_j\}$\;
        }
    }
    \Return{$E$}\;
}
\end{algorithm}

\begin{proposition}[Correctness of algorithm]
\label{prop:algo_correct}
Let $X=V\cup U$ with $V\cap U=\emptyset$ be generated by a restricted additive noise model of the form from \cref{eq:latent_scm_v}.
    Let $\cG$ be the causal DAG of $X$ and $\cM^\cG_{V}$ be the marginal MAG of $\cG$.
    Then \cref{algo:scam} outputs a directed edge from $V_i\in V$ to $V_j\in V$ iff there is a direct edge in $\cG_X$ between them and $\Parents_j^\cG \dindep{\cG} U^j$.
    Further, the output of \cref{algo:scam} has the same skeleton as $\cM^\cG_{V}$.
\end{proposition}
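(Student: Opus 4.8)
I would argue by induction on the iterations of the main loop, maintaining two invariants: (i) $S$ is an \emph{ancestral set}, i.e. closed under taking $\cG$-ancestors within $V$, so that the sub-SCM on $S$ is again a restricted additive noise model of the form \eqref{eq:latent_scm_v} whose marginal MAG is $\cM^\cG_V$ restricted to $S$; and (ii) every $B_i$ contains all nodes adjacent to $V_i$ in $\cM^\cG_V$, while every node ever dropped from some $B_i$ is non-adjacent to $V_i$ there. Invariant (ii) is essentially free: a node leaves $B_i$ only when some subset of the current $B_i$ m-separates the pair, which by Proposition~\ref{prop:score_msep} is impossible for a true adjacency; in particular $V_{\Parents^\cG_i}\subseteq B_i$ always, since $\cG$-parents are adjacent.

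Next I would read off the graphical content of the three quantities the algorithm tests. By Proposition~\ref{prop:score_msep} and the fact that in a MAG a non-adjacent pair is m-separated by a subset of its ancestors, $\min_{S'\subseteq B}\delta_{i,j}(V_{S'\cup\{V_i,V_j\}})=0$ iff $V_i,V_j$ are non-adjacent in $\cM^\cG_V$ — using, via invariant (ii), that such a separating set is never entirely pruned. By Proposition~\ref{prop:causal_dir_1} (nonlinear version; the linear case is analogous with ancestral in place of parental relations), and because invariant (ii) keeps its witness set $V_{\Parents^\cG_j}\cup\{V_i,V_j\}$ among those tested, $\min_{S'\subseteq B}\delta_j(V_{S'\cup\{V_i,V_j\}})=0$ iff $V_{\Parents^\cG_j}\dindep{\cG}U^j$ and $V_i\in V_{\Parents^\cG_j}$. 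And applying the argument behind Proposition~\ref{prop:nogam} and Lemma~\ref{lem:score_lucky_leaf} to the sub-SCM on $S$ — legitimate by invariant (i) — gives $\delta_i(V_S)=0$ iff $V_i$ is a sink of the marginal MAG on $S$ carrying no bidirected edge there; for such an ``unconfounded sink'' one has $\tilde N_i\indep V_{S\setminus\{i\}}$, its marginal-MAG neighbours are exactly its $\cG$-parents, and $V_{\Parents^\cG_i}\dindep{\cG}U^i$.

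The two conclusions then follow. Edges: in the sink branch the added set $\{V_j\to V_i:\delta_{i,j}(V_S)\neq 0\}$ consists precisely of the $\cG$-parent edges of an unconfounded sink; in the orientation loop $V_j\to V_i$ is added exactly when $m_i=0$ and $m_j\neq 0$, i.e. exactly when $V_j\in V_{\Parents^\cG_i}$, $V_{\Parents^\cG_i}\dindep{\cG}U^i$ and the mirror condition fails — so no directed edge violating the criterion is ever output, and every correct one is produced when its head is processed (as a sink or in an orientation loop). Skeleton: by invariant (ii) true adjacencies of $\cM^\cG_V$ are never pruned and survive into $E$, whereas the prunings (including the closing loop) delete an edge iff the pair is m-separated by a subset of some $B_i$, iff the pair is non-adjacent in $\cM^\cG_V$. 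The inductive step requires checking that removing an unconfounded sink preserves invariant (i) — a sink is a $\cG$-ancestor of nothing remaining, and marginalising a childless node from a MAG changes neither adjacencies nor m-separations among the rest — and that in the fallback branch every orientation and pruning decision is phrased through the $B_i$'s alone, so removing the current node from $S$, even with unoriented incident edges, affects only which node is examined next, not correctness.

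\textbf{Main obstacle.} The delicate point is exactly this coupling between $S$ and the score tests: the sink test $\delta_i(V_S)=0$ is faithful only while $S$ stays ancestral (so that regressing $V_i$ on $V_{S\setminus\{i\}}$ cancels $f_i(V_{\Parents^\cG_i})$ and exposes $\tilde N_i$), yet the fallback branch can remove a node that is still a $\cG$-ancestor of something in $S$. I would resolve this by showing that once the algorithm enters the fallback regime no remaining correctness-bearing decision uses the $\delta_i(V_S)$ test — all orientations and prunings are then taken against the correctly maintained $B_i$'s — so ancestral closure of $S$ is no longer needed. A secondary technical point, that the FCI-style search over subsets of the current $B_i$ really removes \emph{all} non-adjacencies of $\cM^\cG_V$, rests on the classical fact that a separating set for a non-adjacent pair can be found among nodes that are themselves never pruned; I would import that soundness argument for the fast adjacency search. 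Everything else reduces mechanically to Propositions~\ref{prop:score_msep}, \ref{prop:causal_dir_1} and \ref{prop:nogam}.
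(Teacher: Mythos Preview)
Your overall skeleton — induction over loop iterations with an invariant on the $B_i$'s — matches the paper's proof, and your invariant (ii) is exactly the paper's second invariant. The gap is in how you handle the fallback branch.

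Your invariant (i), that $S$ remains ancestral, is not what the paper maintains, and your own proposal acknowledges that it fails once a node is removed in the fallback branch. Your proposed fix — that after entering the fallback regime no correctness-bearing decision uses the test $\delta_i(V_S)=0$ — is simply not true of \cref{algo:scam}: when the else branch removes $V_i$ from $S$ and issues \texttt{break}, control returns to the top of the \texttt{while} loop, which immediately re-evaluates ``$\exists V_i\in S:\delta_i(V_S)=0$'' on the \emph{updated} $S$. If that test fired on a node that is not genuinely an unconfounded sink (which your ancestral-closure argument no longer rules out), the ensuing ``add $V_j\to V_i$ for all $V_j$ with $\delta_{i,j}(V_S)\neq 0$'' step would produce directed edges the proposition forbids. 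So the obstacle you identify is real, but your resolution does not close it.

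The paper avoids this by using a different first invariant: rather than ``$S$ is ancestral'', it tracks that \emph{unconfounded-sink status is preserved}, namely that a node is an unconfounded sink w.r.t.\ some subset of the current $S$ iff it is one w.r.t.\ some subset of the full $V$ (where ``w.r.t.\ $S$'' means treating $X\setminus S$ as the hidden set in Proposition~\ref{prop:causal_dir_1}). The inductive step for the fallback removal then argues directly: $V_i$ is removed only when none of its incident edges is orientable, so if $V_i$ is a $\cG$-parent of some $V_l\in S$, the edge $V_i\to V_l$ was already unorientable, i.e.\ $\Parents_l^\cG\notdindep{\cG}U^l$ held before the removal; moving $V_i$ into the hidden set therefore does not change whether $\Parents_l^\cG\dindep{\cG}U^l$ for any remaining $V_l$, and hence cannot create a spurious unconfounded sink or flip the outcome of any later orientation. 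That is the argument you are missing; once you replace your invariant (i) by this one, the rest of your proof goes through essentially as written.
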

\begin{proof}
    Note, that in this proof we will refer to edges that can be oriented w.r.t. a set $S\subseteq V$, where we mean applying Proposition \ref{prop:causal_dir_1} to the observed and unobserved nodes implied by the partitioning $X = S \cup (X\setminus S)$, instead of $X=V\cup U$. We also implicitly change the definitions of observed and unobserved parents $\Parents_i$ and $U^i$ for a node $V_i$ w.r.t. $S$.
    We define that a node is an \emph{unconfounded sink} w.r.t. to a set $S$ iff it has no children in $S$ and is not incident to an edge that cannot be oriented via Proposition \ref{prop:causal_dir_1} w.r.t. $S$.

    We prove the statement by induction over the steps of the algorithm.
    Let $S$ be the set of remaining nodes in an arbitrary step of the algorithm.
    Our induction hypothesis is that for $V_i, V_j\in S$ and $V_k\in B_i$ (where $B_i$ is defined as in Algorithm \ref{algo:scam}) we have
    \begin{enumerate}
        \item $V_i$ is an unconfounded sink w.r.t. to some set $S'\subseteq S$ iff $V_i$ is an unconfounded sink w.r.t. some $S''\subseteq V$
        \item if there is no $S'\subseteq V\setminus\{V_i, V_j\}$ such that $V_i\indep V_j\mid S'$ then $V_j\in B_i$
    \end{enumerate}
    Clearly, this holds in the initial step as $S=V$.
    
    Suppose we find $\delta_i(V_S) = 0$ for $V_i\in S$ and $|S| > 1$.
    By Proposition \ref{prop:causal_dir_1}, we know that all  nodes that are adjacent in the underlying MAG and are in $S$ must be parents of $V_i$.
    This means, all nodes that are not separable from $V_i$ must be direct parents of $V_i$, which are, by our induction hypothesis 2), the nodes in $B_i$.
    Since $V_i$ does not have children in $S$, it also suffices to check $V_i\indep V_j| S\setminus \{V_i, V_j\}$ for $V_j\in S$ (instead of conditioning on all subsets of $S$) to determine whether $V_j$ is in $B_i$.
    So we can already add these direct edges to the output.
    If, on the other hand, $V_i$ is not adjacent to a node in $S$, we have $V_i\indep V_j| S\setminus \{V_i, V_j\}$ for $V_j\in B_i$, so we add precisely the correct set of parents.
    Since $V_i$ is not a parent of any of the nodes in $S\setminus\{V_i\}$, $V_i$ cannot be in the set of unobserved parents of a node in $S\setminus\{V_i\}$ after it's removal and conditioning on $V_i$ cannot block an open path.
    Thus, the induction hypothesis still holds in the next step.

    Suppose now there is no unconfounded sink and we explore $V_i$.
    By our induction hypothesis 2), $B_i$ contains the parents of $V_i$ and by Proposition \ref{prop:causal_dir_1} it suffices to only look at subsets of $B_i$ to orient direct edges, as we only orient edges that also exist in the MAG.
    And also due to the induction hypothesis 2) $B_i$ contains all nodes that are not separable from $V_i$.
    So by adding undirected edges to all nodes in $B_i$ can only add too many edges but not miss some.

    Now it remains to show that the induction hypothesis holds if we set $S$ to $S\setminus\{V_i\}$ in the previous case.
    For 1) we need to show that $V_i$ cannot prevent the orientation of an edge w.r.t. $S\setminus\{V_i\}$.
    Suppose there are $V_k, V_l\in S\setminus\{V_i\}$ such that they are not separable and $\Parents_l\notdindep{\cG} U^l$.
    If $V_i\not\in U^l$, then the edge between $V_k$ and $V_l$ could not have been oriented w.r.t. to $S$ already.
    So suppose $V_i\in U^l$, i.e. there is a direct edge from $V_i$ to $V_l$ in the original DAG.
    We would not remove $V_i$ from $S$ if this edge was orientable, so there must a hidden confounder or mediator between $V_i$ and $V_l$, i.e. we have $\Parents_l\notdindep{\cG} U^l$ w.r.t. to $S$.
    But then we also have $V_k\notdindep{\cG} U^l$ w.r.t. to $S$, due to the edge between $V_i$ and $V_k$.
    So in this case, Proposition \ref{prop:causal_dir_1} wouldn't allow us to direct the edge w.r.t. $S\setminus\{V_i\}$ as we again have $\Parents_l\notdindep{\cG} U^l$.
    So by removing $V_i$ we do not render any edges unorientable.
    For 2) it suffices to note that we only remove nodes from $B_i$ if we found an independence.
    
    For $|S| < 2$, the algorithm enters the final pruning stage.
    From the discussion above it is clear, that we already have the correct result, up to potentially too many undirected edges.
    In the final step we certainly remove all these edges $V_i-  V_j$, as we check $m$-separation for all subsets of the neighbourhoods $\operatorname{Adj}(V_i)$ and $\operatorname{Adj}(V_j)$, which are supersets of the true neighbourhoods.
    
\end{proof}

\subsection{On the output of AdaScore and CAM-UV}
\label{subsec:adascore_vs_camuv}
The algorithm we described in \cref{algo:scam} outputs (if desired by the user) a mixed graph with the skeleton of the underlying MAG and direct edges that indicate a direct causal influence.
The CAM-UV algorithm similarly outputs direct parental relationships and a set of pairs of nodes with what \citet{maeda21_causal} call \emph{unobserved backdoor paths} (UBP) and \emph{unobserved causal paths} (UCP) between them.
Yet, they do not investigate whether their algorithm \emph{only} outputs said pairs.
In fact, the following example shows that CAM-UV can also add pairs to this set that have neither a UBP nor a UCP between them.

\begin{example}
\label{ex:edge_without_ucp_ubp}
    In the following we will extensively reference Algorithm 1 and Algorithm 2 from \citet{maeda21_causal}.
    Suppose we apply CAM-UV to a sufficiently large sample from a distribution that fulfills CAM-UV's assumptions w.r.t. to the graph $G$ shown in \cref{fig:instrument_variable}, where $U$ is unobserved.
    First note, that there is neither a UBP nor a UCP w.r.t. to $U$ between $I$ and $Y$.
    Further, CAM-UV will not add $X$ to the set $M_Y$ of non-descendants of $Y$, due to the UBP between them.
    Similarly, $I$ is not added to $M_Y$, as
    \begin{displaymath}
        Y - f(X) - g(I) \notindep \{X, I\} \quad \text{ and } \quad Y - h(I) \notindep I
    \end{displaymath}
    for any functions $f, g$ and $h$.
    Hence, Algorithm 2 in \citep{maeda21_causal} then finds
    \begin{displaymath}
        I\notindep Y
    \end{displaymath}
    and adds them to the set of pairs with UBP and UCP between them.

    On the other hand, the path $I\to X\leftarrow U\to Y$ is an inducing path w.r.t. to $U$. Therefore it is well-defined that AdaScore adds an (undirected) edge here.
		\begin{figure}[h] 
			\centering
					\begin{tikzpicture}[node distance=2cm, baseline=(current bounding box.north), auto]
				\node(i){$I$};
				\node(x)[right of=i]{$X$};
				\node[draw, circle, dotted](u)[above right of=x]{$U$};
				\node(y)[below right of=u]{$Y$};
				
				\draw[->] (x) edge node{} (y);
				\draw[->] (i) edge node{} (x);
				\draw[->] (u) edge node{} (x);
				\draw[->] (u) edge node{} (y);
				
			\end{tikzpicture}
    \caption{The CAM-UV algorithm will add $I$ and $Y$ to the set of nodes with unobserved backdoor path or unobserved causal path between them, though there is no such path between them.}
    \label{fig:instrument_variable}
		\end{figure}
    
\end{example}

This begs the question whether CAM-UV then also outputs the skeleton of the underlying MAG like AdaScore. 
  The answer is negative, as the following example shows.

\begin{example}
\PF{I think this also happens for RCD. Double check and add.}
    Suppose we apply CAM-UV to a sufficiently large sample from a distribution that fulfills CAM-UV's assumptions w.r.t. to the graph $G$ shown in \cref{fig:instrument_variable}, where now $U_1$ and $U_2$ are unobserved.
    Again, there is neither a UBP nor a UCP w.r.t. to $U_1, U_2$ between $Z$ and $Y$.
    Similarly to \cref{ex:edge_without_ucp_ubp}, CAM-UV will not add $X$ to the parent sets of $Z$ and $Y$ due to the UBP between $X$ and $Y$ and the UCP between $Z$ and $X$.
    Therefore, Algorithm 2 in \citep{maeda21_causal} will find
    \begin{displaymath}
        Z \notindep Y,
    \end{displaymath}
    and add the pair $(Z, Y)$ to the nodes with UBP or UCP between them.

    In contrast, $Z$ and $Y$ are not connected by an inducing path, as $X$ separates them.
    Therefore, the underlying MAG (and thus the output of AdaScore) does not contain an edge between $Z$ and $Y$.
		\begin{figure}[h] 
			\centering
					\begin{tikzpicture}[node distance=2cm, baseline=(current bounding box.north), auto]
				\node[draw, circle, dotted](i){$U_2$};
				\node(x)[right of=i]{$X$};
				\node[draw, circle, dotted](u)[above right of=x]{$U_1$};
				\node(y)[below right of=u]{$Y$};
                \node(z)[left of=i]{$Z$};
				
				\draw[->] (x) edge node{} (y);
				\draw[->] (x) edge node{} (i);
				\draw[->] (u) edge node{} (x);
				\draw[->] (u) edge node{} (y);
                \draw[->] (i) edge node{} (z);
				
			\end{tikzpicture}
    \caption{CAM-UV will add $Z$ and $Y$ to the set of nodes with unobserved backdoor path or unobserved causal path between them, though there is neither such path nor an inducing between them. }
    \label{fig:modified_insturment}
		\end{figure}
    
\end{example}

\subsection{Finite sample version of AdaScore}\label{app:finite_adascore}
All theoretical results in the paper have assumed that we know the density of our data.
Obviously, in practice, we have to deal with a finite sample instead.
Especially, in Proposition \ref{prop:score_msep} and Proposition \ref{prop:causal_dir_1} we derived criteria that compare random variables with zero.
Clearly, this condition is almost never met in practice.
Therefore, we need to find ways to reasonably set thresholds for these random quantities.

First note, that we use the Stein gradient estimator \citep{li2017gradient} to estimate the score function, given by
\begin{equation}
\hat{\bG}^{\text{Stein}} = -( \bK + \eta \textbf{I} )^{-1} \langle \nabla, \bK \rangle,
\label{eq:stein_estimator}
\end{equation}
where $\bv_Z^i$ is the $i$-th of $m\in \N$ samples of the observable variables in the set $Z$, $\bK$ is the matrix with $\bK_{ij} = \mathcal{K}(\bv^i, \bv^j)$, we use $\langle \nabla, \bK \rangle$ to denote a matrix with
$\langle \nabla, \bK \rangle_{ij} = \sum_{k=1}^{m} \nabla_{v^k_j} \mathcal{K}(\bv_Z^i, \bv_Z^k)$ and $\mathcal{K}$ is the RBF kernel in our case.
This means especially that for a node $V_i$ we get a vector of estimates
\begin{equation}
\label{eq:score_sample_vector}
\left(\widehat{\partial V_i \log p(\bv^k_Z)}\right)_{k=1, \dots, m}
\end{equation}
i.e. an estimate of the score for every one of the $m$ samples.
Analogously, we get a $m\times d\times d$ tensor for the estimates of $\frac{\partial^2}{\partial V_i \partial V_j} \log p(v)$ and $m$ empirical estimates of the residual $\delta_i(V_Z)$.

In Proposition \ref{prop:score_msep} we showed that 
\[
    \frac{\partial^2}{\partial V_i \partial V_j} \log p(v_Z) = 0 \iff X_i \mindep{\cM_{V}^\Gx} V_j | V_Z\setminus \set{V_i, V_j}.
\]
In the finite sample version, we use a one-sample t-test on the vector of estimated cross-partial derivatives with the null hypothesis that the means is zero.
Due to the central limit theorem, the sample mean follows approximately a Gaussian distribution, regardless of the true distribution of the observations. 
In the pruning steps, we do not conduct a test for every possible subset of the neighbourhood.
Instead, we pick the subset with the minimal mean absolute value of the estimated cross-partial derivatives and conduct the t-test for this set.
I.e. if we consider pruning the edge $V_i - V_j)$ we pick the subset
\begin{displaymath}
    Z =\min_{Z'\subseteq B_j \cup B_i } \operatorname{mean}|\widehat{\delta_{i,j}(V_{Z'})}|.
\end{displaymath}
We then conduct the t-test with the empirical estimates of $\delta_{i,j}(V_{Z})$.

For Proposition \ref{prop:causal_dir_1}, we first prove the following connection to the independence of fitted additive noise models.
\begin{proposition}[Zero MSE implies ANM]
\label{prop:causal_dir_ind_noise}
    Let $Z$ be a set of variable indices such that $V_j\in V_Z$. Then with $R_j(V_Z) := V_j - \mean[V_j | V_{\setminus V_j}]$ we get
    \begin{displaymath}
        \mean[\partial_{V_j} \log p(V_Z) - \mean[\partial_{V_j} \log p(V_Z) | R_j(V_Z)]]^2 = 0 \implies V_j = \mean[V_j | V_{\setminus V_j}] +  R_j(V_Z),
    \end{displaymath}
    with $V_{\setminus V_j}\indep  R_j(V_Z)$. 
    In other words, if the score can be estimated with zero MSE, then $V_j$ can be described by an additive noise model in $V_{\setminus V_j}$ with independent noise. 
\end{proposition}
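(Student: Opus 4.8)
The plan is to separate the trivial part of the claim from its real content. The identity $V_j = \mean[V_j \mid V_{\setminus V_j}] + R_j(V_Z)$ is just the definition of the regression residual in \eqref{eq:latent_residual} (read with the regressor set $V_{Z}\setminus\{V_j\}$), and $\mean[R_j(V_Z)\mid V_{Z}\setminus\{V_j\}]=0$ holds automatically for a least-squares residual; hence everything reduces to establishing the independence $R_j(V_Z)\indep V_{Z}\setminus\{V_j\}$. Throughout, write $W := V_{Z}\setminus\{V_j\}$ and $g(W):=\mean[V_j\mid W]$, so $R_j(V_Z)=V_j-g(W)$. The hypothesis $\mean\big[(\partial_{V_j}\log p(V_Z)-\mean[\partial_{V_j}\log p(V_Z)\mid R_j(V_Z)])^2\big]=0$ says precisely that $\partial_{V_j}\log p(V_Z)$ agrees, almost surely, with a measurable function $\phi(R_j(V_Z))$, where $\phi$ is the corresponding conditional expectation.

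The second step is to rewrite the score in terms of the conditional law of $V_j$. Since $p(V_Z)=p(v_j\mid w)\,p(w)$ and $p(w)$ does not involve $v_j$, we have $\partial_{V_j}\log p(V_Z)=\partial_{v_j}\log p(v_j\mid w)$. Conditionally on $W=w$, the residual $R_j(V_Z)$ is $V_j$ shifted by the constant $g(w)$, so $p(v_j\mid w)=p_{R\mid W}(v_j-g(w)\mid w)$, and the chain rule (the same change of variables behind Lemma~\ref{lem:score_lucky_leaf}) gives
\[
\partial_{V_j}\log p(V_Z)=\partial_r\log p_{R\mid W}(r\mid w)\big|_{r=v_j-g(w)}.
\]
Combining with the first step, the zero-MSE hypothesis becomes: the map $(r,w)\mapsto \partial_r\log p_{R\mid W}(r\mid w)$ is almost surely independent of $w$ (equal to $\phi(r)$). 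Under the paper's standing assumptions the joint density of $(R_j(V_Z),W)$ is strictly positive and continuous (the mechanisms are $\mathcal C^3$ and the noise densities are positive), so this ``a.s.'' identity upgrades to a pointwise one: $\partial_r\log p_{R\mid W}(r\mid w)=\phi(r)$ for every $(r,w)$.

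The third step is integration and normalisation. Fixing $w$ and integrating the previous identity in $r$ yields $\log p_{R\mid W}(r\mid w)=\Phi(r)+c(w)$ with $\Phi'=\phi$, i.e. $p_{R\mid W}(r\mid w)=e^{\Phi(r)}e^{c(w)}$. Since the conditional support of $R_j(V_Z)$ given $W=w$ is all of $\R$ (again by strict positivity) and $\int_{\R}p_{R\mid W}(r\mid w)\,dr=1$ for every $w$, we must have $e^{c(w)}=\big(\int_{\R}e^{\Phi(r)}\,dr\big)^{-1}$, a constant independent of $w$. Hence $p_{R\mid W}(r\mid w)$ does not depend on $w$; it therefore equals the marginal density $p_{R}(r)$, which is exactly $R_j(V_Z)\indep W$, and the stated decomposition follows.

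I expect the only point needing genuine care — rather than a hard obstacle — to be the measure-theoretic bookkeeping: passing from the ``a.s.'' equality that zero MSE literally provides to a pointwise statement about $\partial_r\log p_{R\mid W}$, and running the normalisation step, both require the conditional law $R_j(V_Z)\mid W=w$ to have full, $w$-independent support on $\R$; this is supplied by the model's strict positivity of noise densities together with the smoothness of the mechanisms, but it must be invoked explicitly. A secondary, cosmetic point is to fix, once and for all, that the regressor set defining $R_j(V_Z)$ is $V_Z\setminus\{V_j\}$ (equivalently, take $V_Z=V$), so that the key identity $\partial_{V_j}\log p(V_Z)=\partial_{v_j}\log p(v_j\mid W)$ is literal.
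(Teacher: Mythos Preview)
Your proposal is correct and follows essentially the same route as the paper: both use the zero-MSE condition to conclude that $\partial_{V_j}\log p(V_Z)$ is a function of $R_j(V_Z)$ alone, integrate in $v_j$ (using $\partial R_j/\partial v_j=1$) to obtain a separable form $\exp(Q(R_j))\cdot\exp(c(w))$, and deduce independence. The only cosmetic difference is that you pass to the conditional density $p_{R\mid W}$ and close via the normalisation constraint, whereas the paper stays with the joint density and invokes the factorisation-implies-independence lemma directly; your explicit handling of the a.s.-to-pointwise upgrade and full-support issue is more careful than the paper's.
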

\begin{proof}
    If the MSE of the final regression is zero, we have by definition
    \begin{align}
        0 &= \mean[\partial_{V_j} \log p(V_Z) - \mean[\partial_{V_j} \log p(V_Z) | R_j(V_Z)]]^2 \\
        &= \int \left(\partial_{V_j} \log p(v_Z)  - \mean[\partial_{V_j} \log p(V_Z) | R_j(v_Z) ]\right)^2  dv_Z,
    \end{align} 
    which implies that $\partial_{V_j} \log p(v_Z) = \mean[\partial_{V_j} \log p(V_Z) | R_j(v_Z)]$ almost everywhere, since this is an integral over non-negative values.
    Denote $q(r) \:= \mean[\partial_{V_j} \log p(V_Z) | r]$.
    By the fundamental theorem of calculus we have
    \begin{align*}
        \log p(v_Z) &= \int \partial_{V_j} \log p(v_Z) dv_j + c(v_{\setminus V_j})\\
        p(v_Z) &= \exp\left(\int \partial_{V_j} \log p(v_Z) dv_j + c(v_{\setminus V_j})\right)\\
        &= \exp\left(\int q(R_j(v_Z)) dv_j + c(v_{\setminus V_j})\right)\\
        &= \exp\left(\int q(v_j - \mean[V_j | v_{\setminus V_j}]) dv_j + c(v_{\setminus V_j})\right)\\
        &= \exp\left(Q(R_j(v_Z)) + c(v_{\setminus V_j})\right)\\
        &= \exp\left(Q(R_j(v_Z)) \right) \cdot \exp\left( c(v_{\setminus V_j})\right),
    \end{align*}
    for some function $Q$ and some $c(v_{\setminus V_j})$ which is constant in $V_j$ (but may depend on $V_{\setminus V_j}$).
    Since 
    $V_j$ is deterministic in $V_{\setminus V_j}$ and $R_j(V_Z)$ by definition, we get
    \begin{displaymath}
       p( R_j(v_Z), v_{\setminus V_j}) = p(v_Z) = \exp\left(Q(R_j(v_Z)) \right) \cdot \exp\left( c(v_{\setminus V_j})\right)
    \end{displaymath}
    and further, since this joint distribution can be factorized into two functions of $V_{\setminus V_j}$ and $R_j(V_Z)$ respectively, we get $R_j(V_Z) \indep V_{\setminus V_j}$ 
    Now we have established that we can write
    \begin{displaymath}
        V_j = \mean[V_j | V_{\setminus V_j}] +  R_j(V_Z) ,
    \end{displaymath}
    and $V_{\setminus V_j}\indep R_j(V_Z)$, or in other words, that $V_j$ can be described by an additive noise model in $V_{\setminus V_j}$.
    
\end{proof}
This means, we can use a classical independence test on the regression residual $R_j(V_Z)$ (that we had to estimate anyway) and the regressors $V_{\setminus V_j}$.
If such a test rejects, this also rejects that $\delta_j(V_Z) = 0$.
We used the kernel independence test proposed by \citep{zhang2012kernel} to this end.

As candidate sink for set $Z\subseteq V$, we pick the node $V_i=\min_i \operatorname{mean}(\delta_i(V_Z))$.

In the case where we use Proposition \ref{prop:causal_dir_1} to \emph{orient} edges, we only need to decide whether a previously undirected edge $V_i - V_j$ needs to be oriented one way, the other way, or not at all.
Again, here the issue lies in the fact that we need to iterate over possible sets of parents of the nodes.
We pick the subset 
\begin{displaymath}
    Z_j =\min_{Z'\subseteq B_j} \operatorname{mean}(\widehat{\delta_j(V_{Z'}}),
\end{displaymath}
i.e. the set with the lowest MSE.
We then conduct the test with the estimates of $\delta_i(V_{Z_j})$ and $\delta_j(V_{Z_j})$ to check if the edge is pointing to $V_j$.
If there is a directed edge between them, one of the residuals will be independent from the regressors and the other one won't.

Just like \citet{montagna23_nogam} we use a cross-validation scheme to generate the residuals, in order to prevent overfitting.
We split the dataset into several equally sized, disjoint subsamples. For every residual we fit the regression on all subsamples that don't contain the respective target.


\subsection{Complexity}\label{app:complexity}
\begin{proposition}{Complexity}
    Let $n$ be the number of samples and $d$ the number of observable nodes.
    \cref{algo:scam} runs in
    \[
    \Omega\left((d^2-d) \cdot (r(n, d) + s(n, d))\right) \quad \text{ and } \quad \mathcal{O}\left(d^2\cdot 2^d (r(n, d) + s(n, d))\right),
    \]
    where $r(n, d)$ is the time required to solve a regression problem and $s(n, d)$ is the time for calculating the score.
    With e.g. kernel-ridge regression and the Stein-estimator, both run  in $\mathcal{O}(n^3)$.
\end{proposition}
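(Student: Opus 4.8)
The plan is to treat this as a routine amortized analysis of \cref{algo:scam}: bound the number of times the main \texttt{while} loop ``processes'' a node, bound the worst‑case cost of one processing, multiply for the $\mathcal{O}$ bound, and isolate the unavoidable quadratic work for the $\Omega$ bound; the $n^{3}$ statement is then immediate from the linear‑algebra primitives.

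For the upper bound I would first argue that the loop processes each node essentially once. The \texttt{if} branch deletes a node from $S$, so it is entered at most $d$ times. In the \texttt{else} branch, once $V_i$ is picked, its entire neighbourhood $B_i$ is pruned and every incident edge is oriented; the only way the algorithm does further work on previously seen material is through ``continue with $V_j$'', which follows an edge $V_i\to V_j$ just added to $E$. By \cref{prop:algo_correct} the directed edges of $E$ form a DAG, so a chain of such ``continue'' steps traces a directed path of length $<d$ and then ends in a node deletion; moreover a node whose incident edges are all already oriented is not reprocessed. Hence the total number of node‑processings is $O(d)$. Within one processing the dominant term is the enumeration $\min_{S'\subseteq B_i}(\cdot)$ that occurs in the pruning step and in the computation of $m_i,m_j$: it ranges over $O(2^{|B_i|})=O(2^{d})$ subsets, and for each subset one solves a regression and evaluates the score/Hessian, i.e.\ $O\!\left(r(n,d)+s(n,d)\right)$ work; iterating this over the (at most $d$) choices of $V_j\in B_i$ and over both $B_i$ and $B_j$ gives $O\!\left(d\cdot 2^{d}(r(n,d)+s(n,d))\right)$ per processing. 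Multiplying by the $O(d)$ processings, and noting that the neighbourhood initialisation is $O(d^{2})$ and that the final pruning loop ranges over $O(d^{2})$ undirected edges with an $O(2^{d})$ subset search each (all absorbed into the same bound), yields the claimed $\mathcal{O}\!\left(d^{2}\cdot 2^{d}(r(n,d)+s(n,d))\right)$.

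For the lower bound it suffices to exhibit work that cannot be avoided. In every execution the loop removes the $d$ nodes one at a time, and each removal requires, for the set $S$ of currently remaining nodes, deciding which node to delete and which of the incident edges to keep; concretely the edge set $\{V_j\to V_i:\delta_{i,j}(V_S)\neq 0\}$ forces a test of $\delta_{i,j}(V_S)$ for every candidate $V_j$, and on dense instances the final pruning loop forces $\Theta(d^{2})$ further pairwise tests. Either route produces $\Theta(d^{2})$ evaluations of $\delta$‑type statistics, each costing $\Omega\!\left(r(n,d)+s(n,d)\right)$ (one regression for the residual, one score evaluation), so the running time is $\Omega\!\left(d^{2}(r+s)\right)=\Omega\!\left((d^{2}-d)(r+s)\right)$. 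Finally, kernel ridge regression solves an $n\times n$ linear system, giving $r(n,d)=\mathcal{O}(n^{3})$, and the Stein estimator of \eqref{eq:stein_estimator} inverts the $n\times n$ matrix $\bK+\eta\textbf{I}$, giving $s(n,d)=\mathcal{O}(n^{3})$; substituting these produces the bounds in the stated $n$‑dependent form.

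\textbf{Main obstacle.} The delicate point is the count of \texttt{while}‑loop iterations in the presence of the ``continue with $V_j$'' construct, which does \emph{not} shrink $S$: one must invoke the acyclicity of the oriented edge set (\cref{prop:algo_correct}) to exclude cycles of ``continue'' steps and to argue that an already‑oriented node is not reprocessed, so that the per‑node cost is incurred $O(d)$ and not $O(d^{2})$ times. A secondary subtlety is verifying that the $O(2^{|B_i|})$ subset enumeration — rather than, say, the score estimation over node subsets — is genuinely the dominant operation, and that the exponential factor is $2^{|B_i|}$ with $|B_i|\le d$, so that it degenerates to a polynomial precisely when all neighbourhoods stay small, which is the favourable (best‑case) regime matching the $\Omega$ bound.
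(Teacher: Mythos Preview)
Your proposal is correct and takes essentially the same approach as the paper: bound the number of node processings in the main loop (the paper simply asserts that the continue step ``can happen $d-1$ times''; you are more explicit in invoking acyclicity of the oriented edges via \cref{prop:algo_correct}), multiply by the $O(d\cdot 2^{d})$ subset-enumeration cost per processing for the upper bound, and for the lower bound observe that the best case, where an unconfounded sink is always found, collapses to NoGAM with its $\Theta(d^{2})$ regressions and score evaluations. The point you single out as the main obstacle---that an already-oriented node is not reprocessed, so that the processing count is $O(d)$ rather than $O(d^{2})$---is treated no more carefully in the paper's own argument, so your level of rigour matches.
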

\begin{proof}
    \cref{algo:scam} runs its main loop $d$ times.
    It first checks for the existence of an unconfounded sink, which involves solving $2d$ regression problems (including cross-validation prediction) and calculating the score, adding up to $(2d^2 - d)$ regressions and $d$ score evaluations.
    In the worst case, we detect no unconfounded sink and iterate through all subsets of the neighbourhood of a node (which is in the worst case of size $d-1$) and for all other nodes in the neighbourhood we solve $2d$ regression problems and evaluate the score.
    For each subset we calculate two regression functions, the score and calculate the entries in the Hessian of the log-density, i.e. $d\cdot 2^{d}$ regressions, $d\cdot 2^{d-1}$ scores and additionally $2^{d-1}$ Hessians.
    If we are unlucky, this node has a directed outgoing edge and we continue with this node (with the same size of nodes).
    This can happen $d-1$ times.
    So we get $(d^2 - d)\cdot 2^{d}$ regressions and $(d^2 - d)\cdot 2^{d-1}$ scores and Hessians.
    In the final pruning step we calculate for every previously undirected edge (of which there can be $(d^2-d)/2$) a Hessian for all subsets of the neighbourhoods, which can again be $2^{d-1}$ subsets.
    Using the pruning procedure from CAM for the directed edges we also spend at most $\mathcal{O}(nd^3)$ steps.

    In the best case, we always find an unconfounded sink.
    Then our algorithm reduces to NoGAM.

\end{proof}

\section{Experimental details}\label{app:exp_details}
In this section, we present the details of our experiments in terms of synthetic data generation and algorithms hyperparameters.

\subsection{Synthetic data generation}\label{app:data}
In this work, we rely on synthetic data to benchmark AdaScore's finite samples performance. For each dataset, we first sample the ground truth graph and then generate the observations according to the causal graph.

\paragraph{Erd\"{o}s-Renyi graphs.} The ground truth graphs are generated according to the Erd\"{o}s-Renyi model. It allows specifying the number of nodes and the probability of connecting each pair of nodes). In ER graphs, a pair of nodes has the same probability of being connected.
To introduce hidden variables we randomly drop columns from the data matrices.
Whenever the resulting graph contains no hidden confounder (or in the case of nonlinear data also hidden mediators) we reject the choice of hidden variables and sample again.
We similarly reject choices without hidden variables or hidden mediators for the experiments in \cref{fig:experiments-real}.
We always compare against the graph that AdaScore would output in the limit of infinite data.

\paragraph{Nonlinear causal mechanisms.} Nonlinear causal mechanisms are parametrized by a neural network with random weights. We create a fully connected neural network with one hidden layer with $10$ units, Parametric ReLU activation function, followed by one normalizing layer before the final fully connected layer. The weights of the neural network are sampled from a standard Gaussian distribution. This strategy for synthetic data generation is commonly adopted in the literature \citep{montagna2023_assumptions, montagna2023shortcuts, ke2023learning, brouillard2020intervention, lippe2022efficient}.

\paragraph{Linear causal mechanisms.} \looseness-1For the linear mechanisms, we define a simple linear regression model predicting the effects from their causes and noise terms, weighted by randomly sampled coefficients. Coefficients are generated as samples from a Uniform distribution supported in the range $[-3, -0.5] \cup [0.5, 3]$. We don't use too small coefficients to avoid trivial cases of \textit{close to unfaithful} datasets \citep{uhler12_faithfulness}.

\paragraph{Noise terms distribution.} The noise terms are sampled from a Uniform distribution supported between $-2$ and $2$.

Finally, we remark that we standardize the data by their empirical standard deviation. This is known to remove shortcuts that allow finding a correct causal order sorting variables by their marginal variance, as in \textit{varsortability}, described in \citet{reisach2021beware}, or sorting variables by the magnitude of their score $|\partial_{X_i} \log p(X)|$, a phenomenon known as \textit{scoresortability} analyzed by \citet{montagna2023shortcuts}.

\subsection{AdaScore hyperparameters}\label{app:hyperparams}
For AdaScore, we set the $\alpha$ level for the required hypothesis testing at $0.05$. For the CAM-pruning step, the level is instead set at $0.001$, the default value of the \href{https://www.pywhy.org/dodiscover/dev/index.html}{dodidscover} Python implementation of the method, and commonly found in all papers using CAM-pruning for edge selection \citep{rolland22_score, montagna23_das, montagna23_nogam, buhlmann14_cam}. For the remaining parameters. The regression hyperparameters for the estimation of the residuals are found via cross-validation during inference: tuning is done minimizing the generalization error on
the estimated residuals, without using the performance on the causal graph ground truth. Finally, for the score matching estimation, the regularization coefficients are set to $0.001$.
In the respective experiments, we provided the fact that there are no latent confounders to AdaScore.

\subsection{Random baseline}\label{app:random_baseline}
In our synthetic experiments we also considered a random baseline.
Since we did not want the sparsity of the ground truth graph to influence the performance of the random baseline, we chose the following approach:
we use the Erd\"{o}s-Renyi model, described above, to generate a new graph with the same edge probability and the same number of nodes (including hidden variables) as the ground truth graph. 
We then project this graph onto the actually observed nodes by generating the PAG skeleton over these nodes and directing edges iff they are identifiable by Proposition \ref{prop:causal_dir_1} (using graphical criteria).

\subsection{Compute resources}\label{app:compute}
All experiments have been run on an AWS EC2 instance of type \texttt{m5.12xlarge}.
These machines contain Intel Xeon Platinum 8000 processors with 3.1 GHz and 48 virtual cores as well as 192 GB RAM.
All experiments can be run within a day.


\section{Additional Experiments}\label{app:experiments}
In this section, we provide additional experimental results.
All synthetic data has been generated as described in \cref{app:data}.

\subsection{Hypothesis Tests}
Additionally to the plots in \cref{fig:experiments-sparse} (and also \cref{fig:experiments-dense}) we conducted hypothesis tests to see whether the results of AdaScore are stochastically greater or less than the results of the other algorithms.
I.e. we tested the null-hypothesis that $P(A > B) = P(B < A)$, where $A$ denotes the SHD of AdaScore w.r.t. the ground truth graph and $B$ the SHD of one of the baseline algorithms.
In \cref{tab:linear_obs_dense,tab:nonlinear_obs_dense,tab:linear_latent_dense,tab:nonlinear_latent_dense,tab:linear_obs_sparse,tab:nonlinear_obs_sparse,tab:linear_latent_sparse,tab:nonlinear_latent_sparse} we provided $p$-values for the exact Mann-Whitney $U$ test with alternatives that AdaScore is less than the given baseline or greater, respectively.

\begin{table}
\centering
\caption{$p$-values for stochastic ordering: sparse linear fully observable model}
\label{tab:linear_obs_sparse}
\begin{tabular}{l|llll|llll}
\toprule
{} & \multicolumn{4}{c}{less} & \multicolumn{4}{c}{greater} \\
{} &       3 &       5 &       7 &       9 &       3 &       5 &       7 &       9 \\
\midrule
camuv  & 0.61060 & 0.44153 & 0.76942 & 0.79324 & 0.39970 & 0.56901 & 0.23885 & 0.22248 \\
nogam  & 0.60030 & 0.12113 & 0.08691 & 0.03375 & 0.41007 & 0.88947 & 0.92142 & 0.96828 \\
rcd    & 0.61060 & 0.08267 & 0.34901 & 0.68977 & 0.39970 & 0.92142 & 0.66085 & 0.32940 \\
lingam & 0.70844 & 0.79324 & 0.99854 & 1.00000 & 0.30083 & 0.21454 & 0.00177 & 0.00000 \\
random & 0.00000 & 0.00000 & 0.00000 & 0.00000 & 1.00000 & 1.00000 & 1.00000 & 1.00000 \\
\bottomrule
\end{tabular}
\end{table}

\begin{table}
\centering

\caption{$p$-values for stochastic ordering: sparse nonlinear fully observable model}

\label{tab:nonlinear_obs_sparse}
\begin{tabular}{l|llll|llll}
\toprule
{} & \multicolumn{4}{c}{less} & \multicolumn{4}{c}{greater} \\
{} &       3 &       5 &       7 &       9 &       3 &       5 &       7 &       9 \\
\midrule
camuv  & 1.00000 & 1.00000 & 0.99953 & 0.98249 & 0.00000 & 0.00000 & 0.00052 & 0.01876 \\
nogam  & 1.00000 & 0.99995 & 0.99892 & 0.95951 & 0.00000 & 0.00006 & 0.00120 & 0.04554 \\
rcd    & 0.07464 & 0.13831 & 0.00019 & 0.00001 & 0.93279 & 0.87331 & 0.99985 & 0.99999 \\
lingam & 0.97206 & 0.99564 & 0.70844 & 0.65099 & 0.03172 & 0.00516 & 0.30083 & 0.35898 \\
random & 0.60030 & 0.02009 & 0.00005 & 0.00001 & 0.42051 & 0.98249 & 0.99995 & 0.99999 \\
\bottomrule
\end{tabular}
\end{table}

\begin{table}
\centering
\caption{$p$-values for stochastic ordering: sparse linear latent variables model}
\label{tab:linear_latent_sparse}
\begin{tabular}{l|llll|llll}
\toprule
{} & \multicolumn{4}{c}{less} & \multicolumn{4}{c}{greater} \\
{} &       3 &       5 &       7 &       9 &       3 &       5 &       7 &       9 \\
\midrule
camuv  & 0.18443 & 0.60030 & 0.74417 & 0.79324 & 0.82267 & 0.41007 & 0.26455 & 0.22248 \\
nogam  & 0.18443 & 0.23058 & 0.10060 & 0.07858 & 0.82267 & 0.77752 & 0.90412 & 0.92536 \\
rcd    & 0.31976 & 0.62082 & 0.41007 & 0.50533 & 0.69917 & 0.38940 & 0.60030 & 0.50533 \\
lingam & 0.25583 & 0.76942 & 0.97702 & 0.99600 & 0.76115 & 0.23885 & 0.02620 & 0.00436 \\
random & 0.00001 & 0.00000 & 0.00000 & 0.00000 & 0.99999 & 1.00000 & 1.00000 & 1.00000 \\
\bottomrule
\end{tabular}
\end{table}

\begin{table}
\centering

\caption{$p$-values for stochastic ordering: sparse nonlinear latent variables model}
\label{tab:nonlinear_latent_sparse}
\begin{tabular}{l|llll|llll}
\toprule
{} & \multicolumn{4}{c}{less} & \multicolumn{4}{c}{greater} \\
{} &       3 &       5 &       7 &       9 &       3 &       5 &       7 &       9 \\
\midrule
camuv  & 0.99854 & 0.97206 & 0.87887 & 0.76115 & 0.00161 & 0.03172 & 0.13242 & 0.25583 \\
nogam  & 0.34901 & 0.52665 & 0.42051 & 0.28242 & 0.67060 & 0.48400 & 0.58993 & 0.72659 \\
rcd    & 0.96625 & 0.51600 & 0.22248 & 0.00146 & 0.03589 & 0.50533 & 0.79324 & 0.99868 \\
lingam & 0.12113 & 0.13831 & 0.33915 & 0.01054 & 0.88425 & 0.87331 & 0.68024 & 0.99095 \\
random & 0.00010 & 0.00001 & 0.00000 & 0.00000 & 0.99991 & 0.99999 & 1.00000 & 1.00000 \\
\bottomrule
\end{tabular}
\end{table}

\begin{table}
\centering
\caption{$p$-values for stochastic ordering: dense linear fully observable model}
\label{tab:linear_obs_dense}
\begin{tabular}{l|llll|llll}
\toprule
{} & \multicolumn{4}{c}{less} & \multicolumn{4}{c}{greater} \\
{} &       3 &       5 &       7 &       9 &       3 &       5 &       7 &       9 \\
\midrule
camuv  & 0.91309 & 0.28242 & 0.55847 & 0.99823 & 0.09132 & 0.73545 & 0.46271 & 0.00213 \\
nogam  & 0.87331 & 0.12113 & 0.01632 & 0.00367 & 0.13831 & 0.88947 & 0.98479 & 0.99693 \\
rcd    & 0.81557 & 0.07464 & 0.82961 & 0.96828 & 0.19171 & 0.92915 & 0.17733 & 0.03375 \\
lingam & 0.91309 & 0.98585 & 1.00000 & 1.00000 & 0.09132 & 0.01521 & 0.00000 & 0.00000 \\
random & 0.00000 & 0.00000 & 0.00108 & 0.00234 & 1.00000 & 1.00000 & 0.99912 & 0.99806 \\
\bottomrule
\end{tabular}
\end{table}

\begin{table}
\centering
\caption{$p$-values for stochastic ordering: dense nonlinear fully observable model}

\label{tab:nonlinear_obs_dense}
\begin{tabular}{l|llll|llll}
\toprule
{} & \multicolumn{4}{c}{less} & \multicolumn{4}{c}{greater} \\
{} &       3 &       5 &       7 &       9 &       3 &       5 &       7 &       9 \\
\midrule
camuv  & 1.00000 & 0.99999 & 0.99095 & 0.79324 & 0.00000 & 0.00001 & 0.01054 & 0.21454 \\
nogam  & 1.00000 & 1.00000 & 1.00000 & 0.96625 & 0.00000 & 0.00000 & 0.00000 & 0.03813 \\
rcd    & 0.16363 & 0.02298 & 0.00000 & 0.00000 & 0.84937 & 0.97991 & 1.00000 & 1.00000 \\
lingam & 0.94892 & 0.81557 & 0.76115 & 0.48400 & 0.05713 & 0.19171 & 0.24726 & 0.53729 \\
random & 0.97545 & 0.28242 & 0.00010 & 0.00000 & 0.02620 & 0.73545 & 0.99992 & 1.00000 \\
\bottomrule
\end{tabular}
\end{table}

\begin{table}
\centering
\caption{$p$-values for stochastic ordering: dense linear latent variables model}

\label{tab:linear_latent_dense}
\begin{tabular}{l|llll|llll}
\toprule
{} & \multicolumn{4}{c}{less} & \multicolumn{4}{c}{greater} \\
{} &       3 &       5 &       7 &       9 &       3 &       5 &       7 &       9 \\
\midrule
camuv  & 0.24726 & 0.11053 & 0.90868 & 0.99902 & 0.76115 & 0.89452 & 0.10060 & 0.00120 \\
nogam  & 0.05404 & 0.18443 & 0.09132 & 0.00977 & 0.94892 & 0.82961 & 0.91309 & 0.99163 \\
rcd    & 0.48400 & 0.13242 & 0.99912 & 0.99928 & 0.52665 & 0.87887 & 0.00108 & 0.00088 \\
lingam & 0.19171 & 0.90412 & 0.99948 & 0.99987 & 0.81557 & 0.10060 & 0.00058 & 0.00015 \\
random & 0.01316 & 0.00000 & 0.00027 & 0.25583 & 0.98777 & 1.00000 & 0.99976 & 0.75274 \\
\bottomrule
\end{tabular}
\end{table}

\begin{table}
\centering
\caption{$p$-values for stochastic ordering: dense nonlinear latent variables model}

\label{tab:nonlinear_latent_dense}
\begin{tabular}{l|llll|llll}
\toprule
{} & \multicolumn{4}{c}{less} & \multicolumn{4}{c}{greater} \\
{} &       3 &       5 &       7 &       9 &       3 &       5 &       7 &       9 \\
\midrule
camuv  & 0.48400 & 0.89940 & 0.97991 & 0.97702 & 0.53729 & 0.10548 & 0.02298 & 0.02620 \\
nogam  & 0.00052 & 0.16363 & 0.37918 & 0.10060 & 0.99953 & 0.84937 & 0.64102 & 0.90412 \\
rcd    & 0.71758 & 0.32940 & 0.98368 & 0.36903 & 0.29156 & 0.68977 & 0.01751 & 0.65099 \\
lingam & 0.00016 & 0.00108 & 0.06371 & 0.00257 & 0.99987 & 0.99902 & 0.93965 & 0.99766 \\
random & 0.00072 & 0.00177 & 0.13831 & 0.00003 & 0.99935 & 0.99854 & 0.87331 & 0.99997 \\
\bottomrule
\end{tabular}
\end{table}

\subsection{Non-additive mechanisms}
In \cref{fig:experiments-sparse} we have demonstrated the performance of our proposed method on data generated by linear SCMs and non-linear SCMs with additive noise.
But Proposition \ref{prop:score_msep} also holds for \emph{any} faithful distribution generated by an acyclic model.
Thus, we employed as mechanism a neural network-based approach similar to the non-linear mechanism described in \cref{app:exp_details}.
Instead of adding the noise term, we feed it as additional input into the neural network. Results in this setting are reported in \cref{fig:f1-experiments-sparse}.
As neither the mixed graph mode of AdaScore nor any of the baseline algorithms has theoretical guarantees for the orientation of edges in this scenario, we report the $F_1$-score (popular in classification problems) w.r.t. to the existence of an edge, regardless of orientation. Our experiments show that AdaScore can, in general, correctly recover the graph's skeleton in all the scenarios, with an $F_1$ score median between $1$ and $\sim0.75$, respectively for small and large numbers of nodes.

On the other hand, if we let AdaScore output a PAG, we can apply the ordinary orientation rules of FCI \citep{spirtes2000_cauastion}.
In \cref{fig:fci_experiments} we compare the vanilla FCI algorithm with kernel independence test \citep{zhang2012kernel} against AdaScore on dense and sparse ground truth graphs.

\begin{figure}[ht]
\centering
    \includegraphics[width=0.55\textwidth]{main_text_plots/legend.pdf}\\
\vspace{1.2em}
\subfigure[Fully observable model\label{fig:observable_non_additive}]{%
    \includegraphics[width=0.4\textwidth]{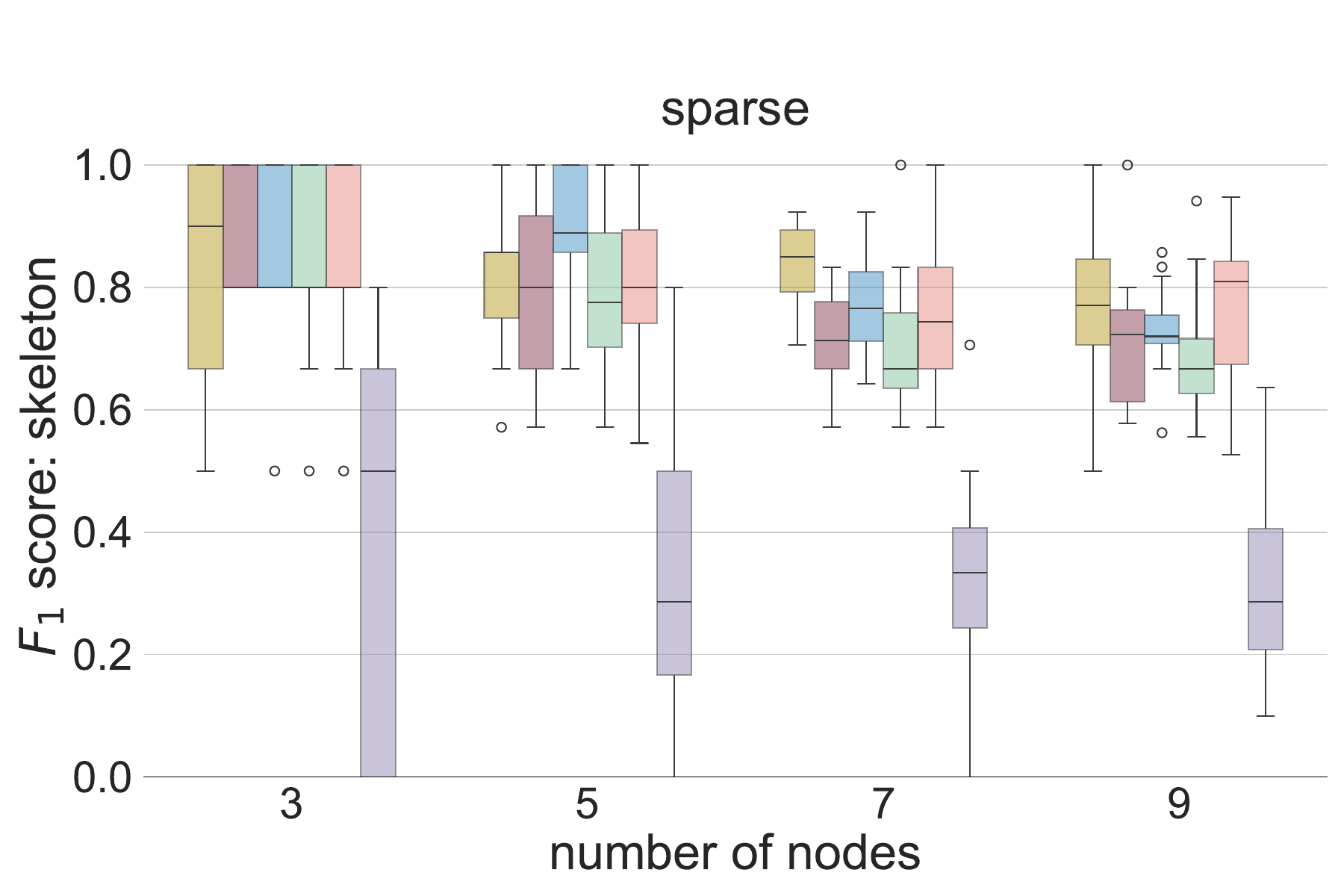}
    \includegraphics[width=0.4\textwidth]{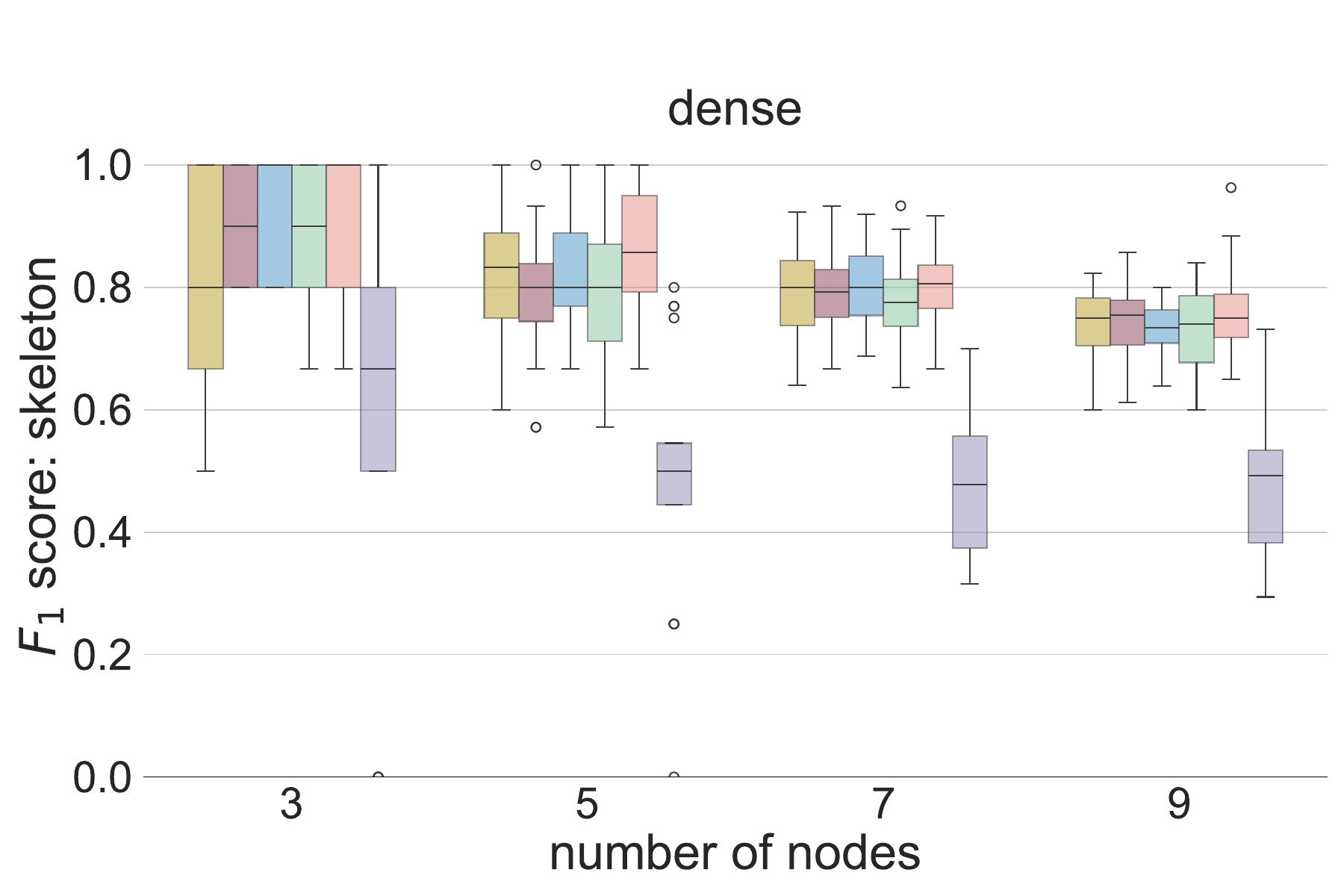}
}
\vspace{1.2em}
\subfigure[Latent variables model\label{fig:latent_non_additive}]{%
    \includegraphics[width=0.4\textwidth]{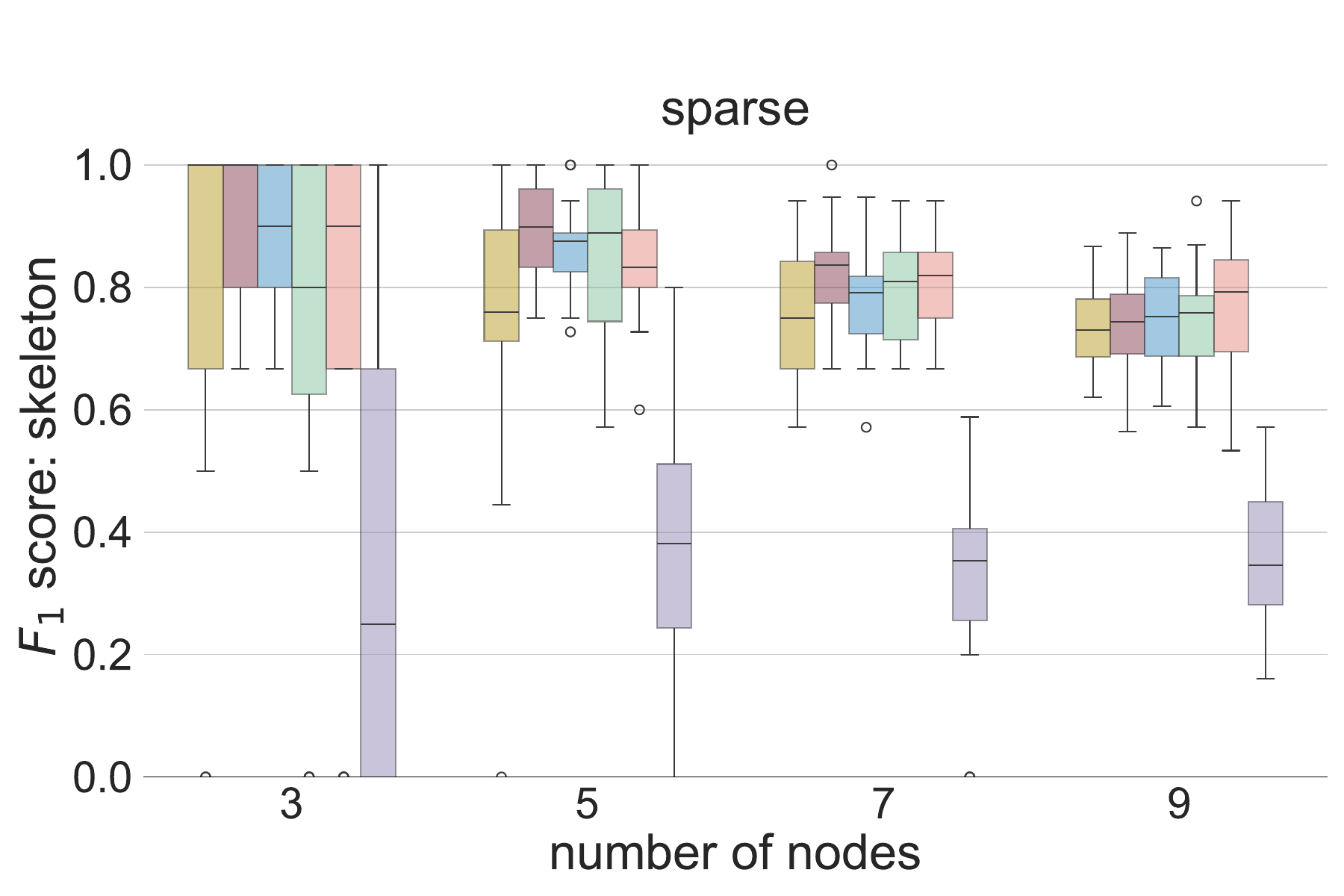}
    \includegraphics[width=0.4\textwidth]{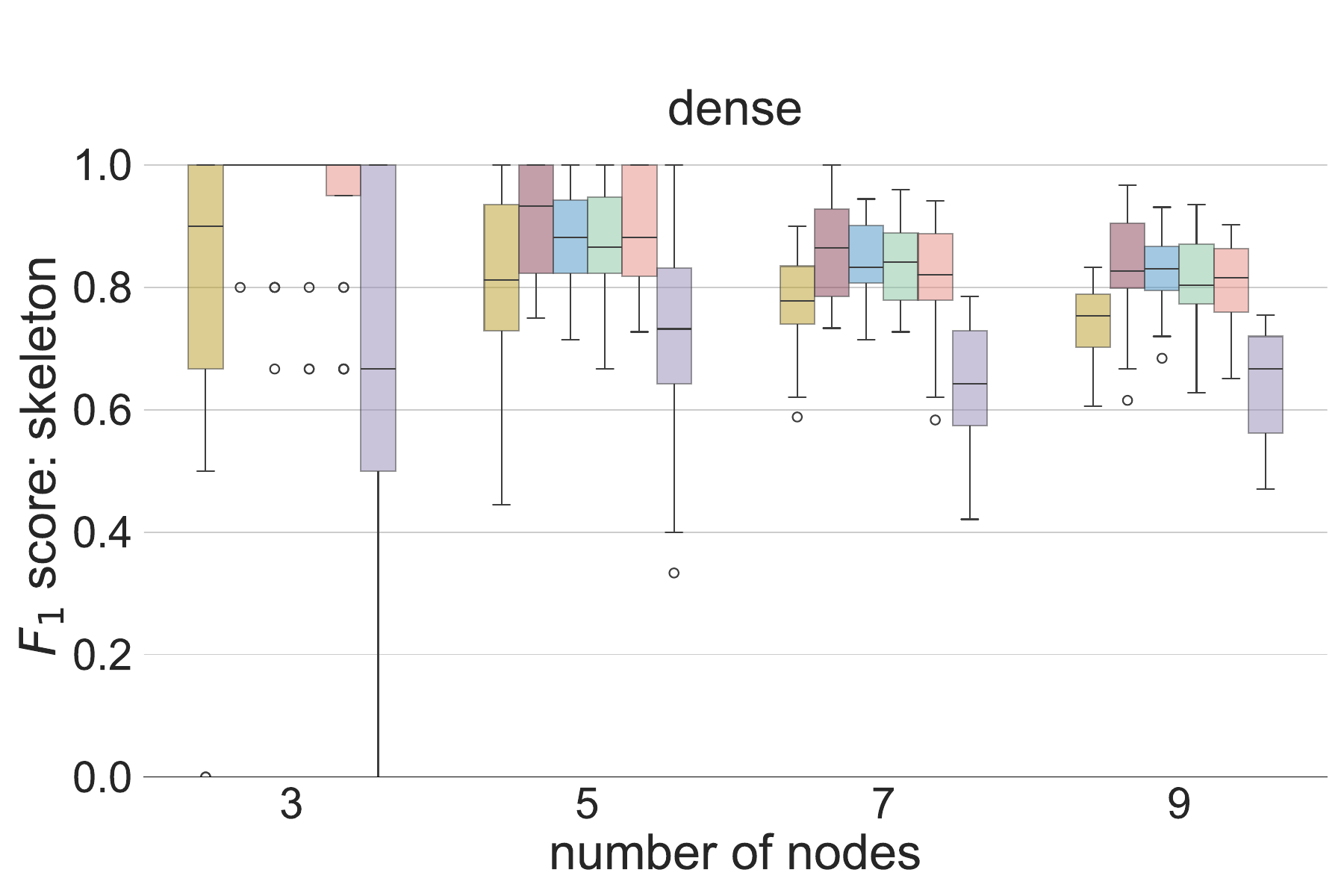}
}
\caption{Empirical results for non-additive causal mechanisms on sparse and dense graphs with different numbers of nodes, on fully observable (no hidden variables) and latent variable models. We report the $F_1$ score w.r.t. the existence of edges (the higher, the better).}
\label{fig:f1-experiments-sparse}
\end{figure}

\begin{figure}[ht]
\centering
    \includegraphics[width=.2\textwidth]{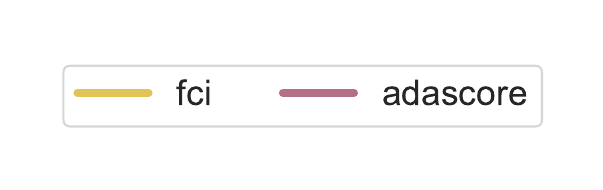}\\
    \includegraphics[width=0.4\textwidth]{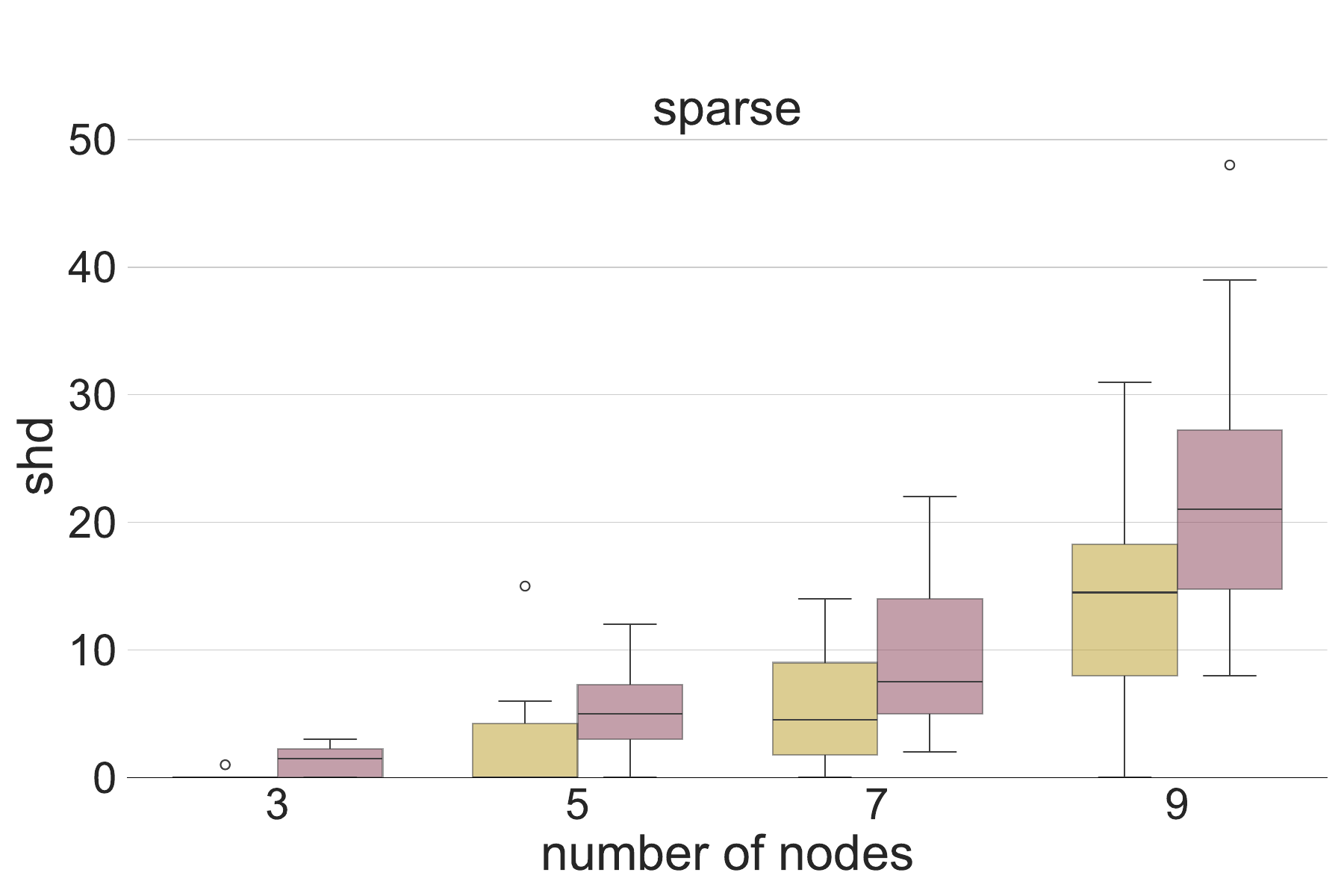}
    \includegraphics[width=0.4\textwidth]{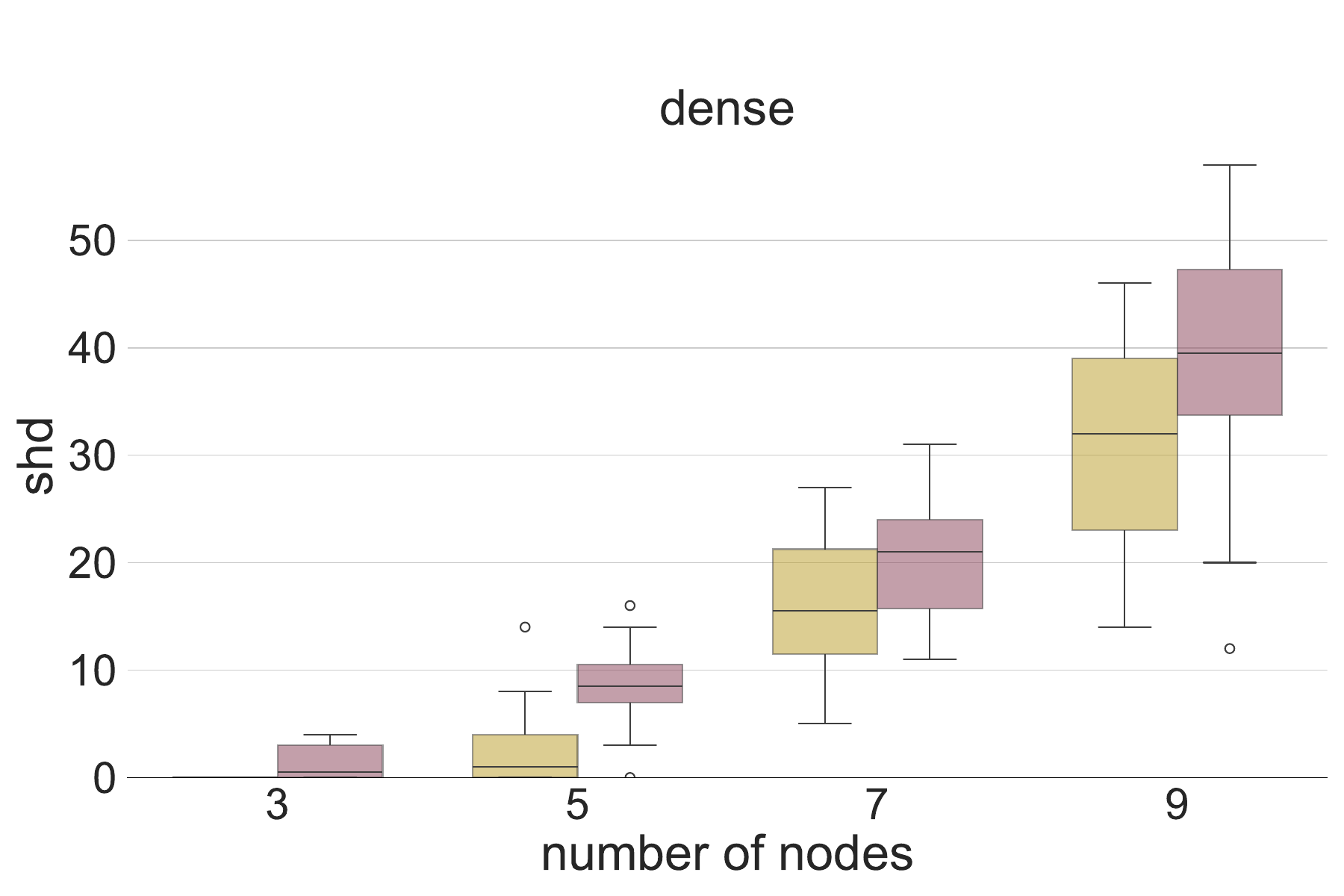}
\caption{Results for the vanilla FCI algorithm with kernel independence test versus PAG output of AdaScore. We used non-additive causal mechanisms on sparse (left) and dense graphs (right) with different numbers of nodes. We report the SHD.}
\label{fig:fci_experiments}
\end{figure}

\subsection{Dense graphs}\label{app:sparse_exp}
In this section, we present the experiments on dense Erd\"{o}s-Renyi graphs where each pair of nodes is connected by an edge with probability $0.3$. The results are illustrated in \cref{fig:experiments-dense}. For dense graphs, recovery results are similar to the sparse case, with AdaScore generally providing comparable performance to the other methods. 

\begin{figure}
\centering
    \includegraphics[width=0.55\textwidth]{main_text_plots/legend.pdf}\\
    \vspace{1.2em} 
    \subfigure[Fully observable model\label{fig:dense_observable}]{%
        \includegraphics[width=0.8\textwidth]{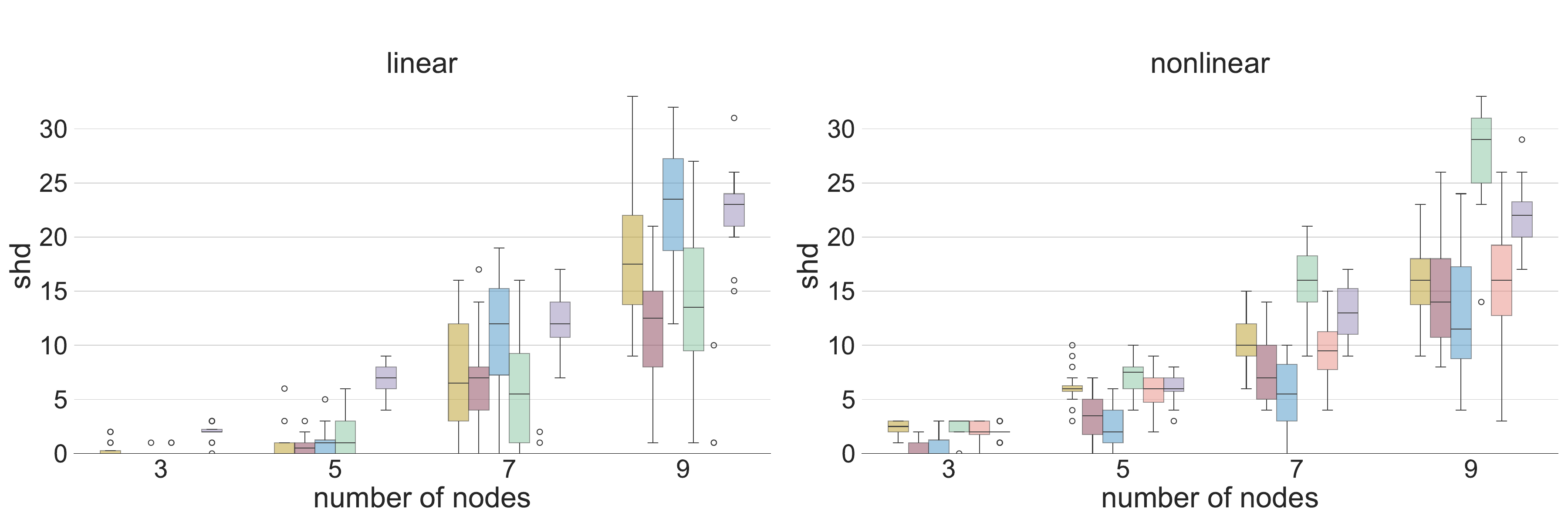}
    }
    \vspace{1.2em}
    \subfigure[Latent variables model\label{fig:dense_latent}]{%
        \includegraphics[width=0.8\textwidth]{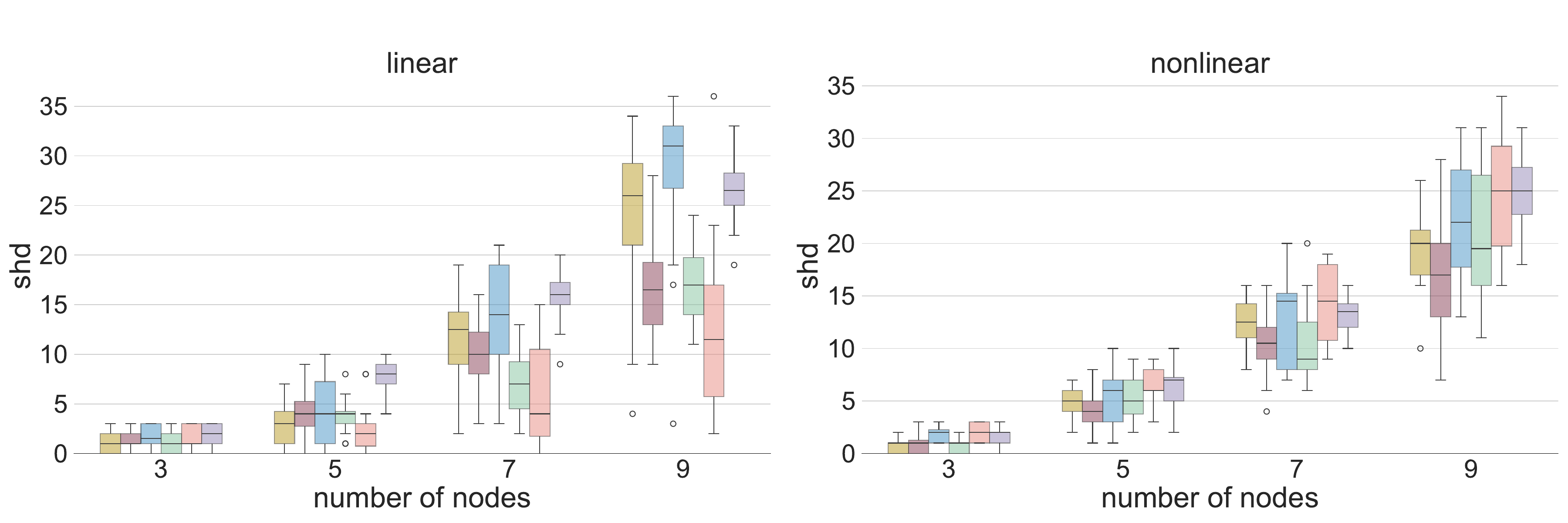}
    }
\caption{\looseness-1Empirical results on dense graphs with different numbers of nodes, on fully observable (no hidden variables) and latent variable models. We report the SHD accuracy (lower is better). 
}
\label{fig:experiments-dense}
\end{figure}

\subsection{$F_1$ scores}
The following plots show the $F_1$ score as an additional metric for the previously discussed experiments.
Since the $F_1$ score is only applicable to binary decisions, we calculate it with respect to the binary classification of whether there is an identifiable direct edge in the ground truth graph or not in \cref{fig:direct_edge_f1_dense,fig:direct_edge_f1_sparse}, or whether there is an edge that is not identifiable via Proposition \ref{prop:causal_dir_1} respectively in \cref{fig:bi_edge_f1}.
\begin{figure}[ht]
\centering
    \includegraphics[width=0.55\textwidth]{main_text_plots/legend.pdf}\\
\vspace{1.2em}
\subfigure[Fully observable model\label{fig:direct_edge_f1_dense_obs}]{%
    \includegraphics[width=0.8\textwidth]{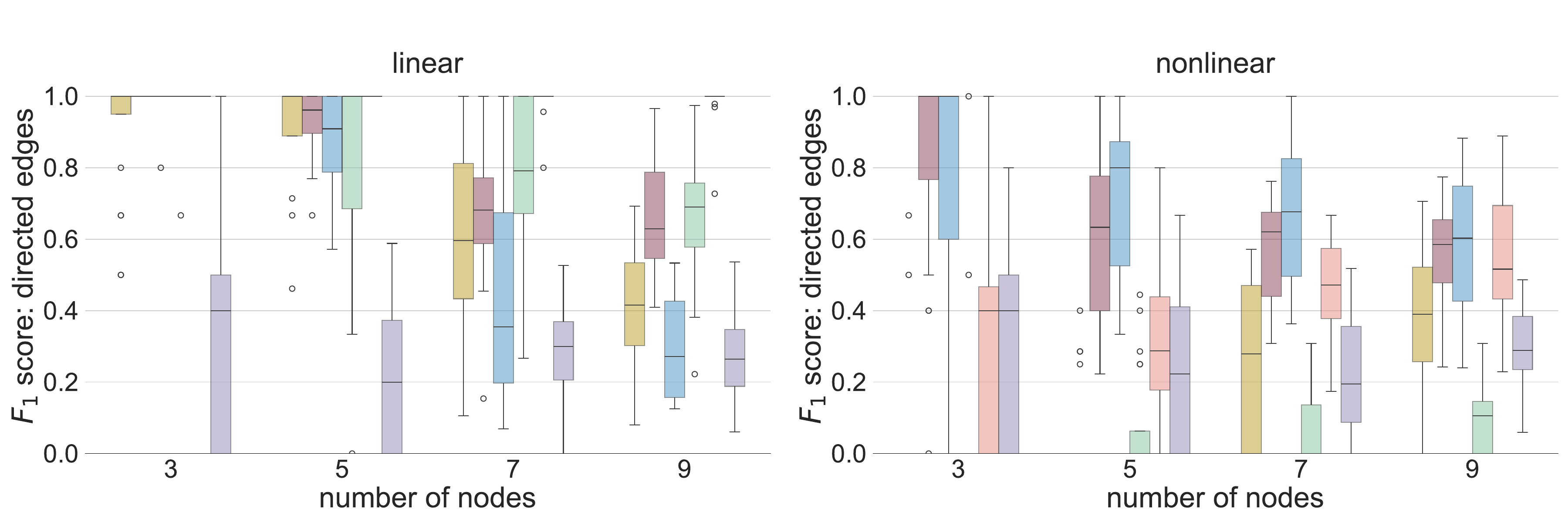}
}
\vspace{1.2em}
\subfigure[Latent variables model\label{fig:direct_edge_f1_dense_latent}]{%
    \includegraphics[width=0.8\textwidth]{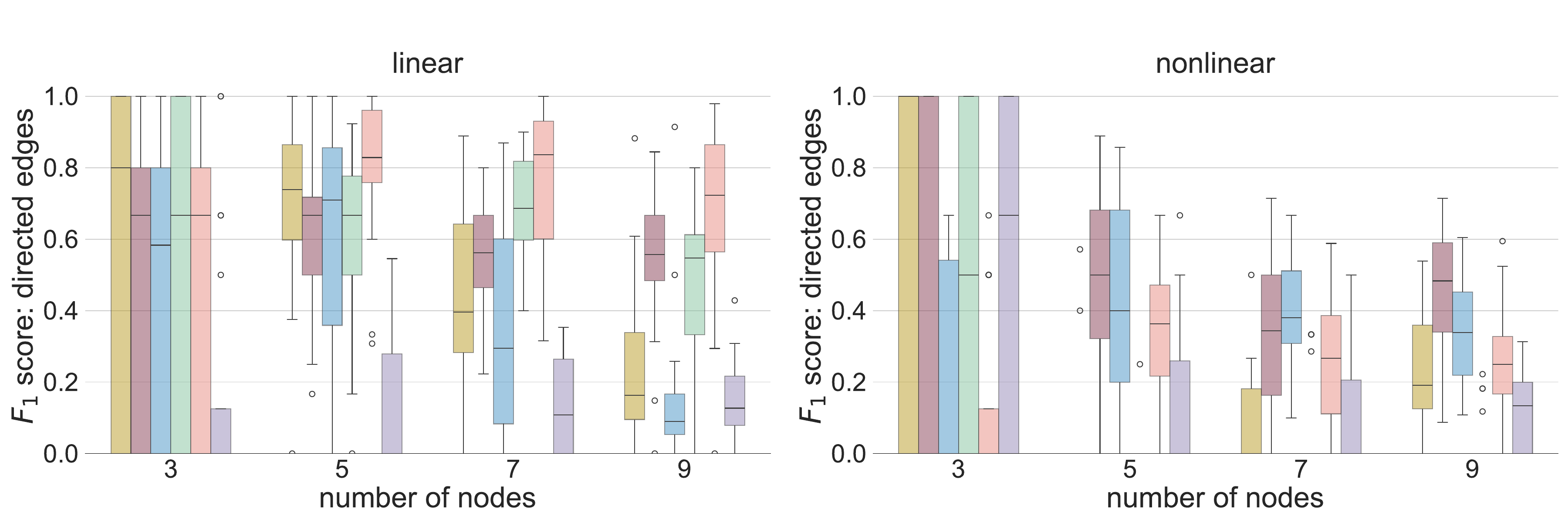}
}
\caption{Empirical results on dense graphs with different numbers of nodes, on fully observable (no hidden variables) and latent variable models. We report the $F_1$ score w.r.t. to the binary decision of whether there is an identifiable direct edge or not (the higher, the better).}
\label{fig:direct_edge_f1_dense}
\end{figure}

\begin{figure}[ht]
\centering
    \includegraphics[width=0.55\textwidth]{main_text_plots/legend.pdf}\\
\vspace{1.2em}
\subfigure[Fully observable model\label{fig:direct_edge_f1_sparse_obs}]{%
    \includegraphics[width=0.8\textwidth]{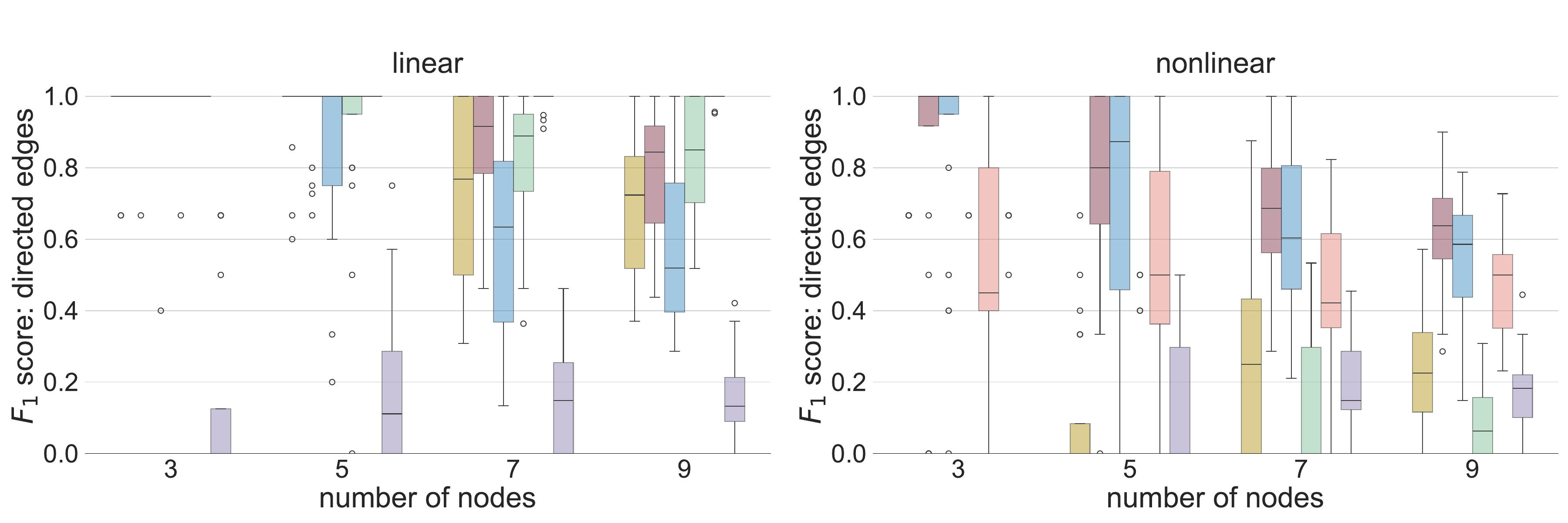}
}
\vspace{1.2em}
\subfigure[Latent variables model\label{fig:direct_edge_f1_sparse_latent}]{%
    \includegraphics[width=0.8\textwidth]{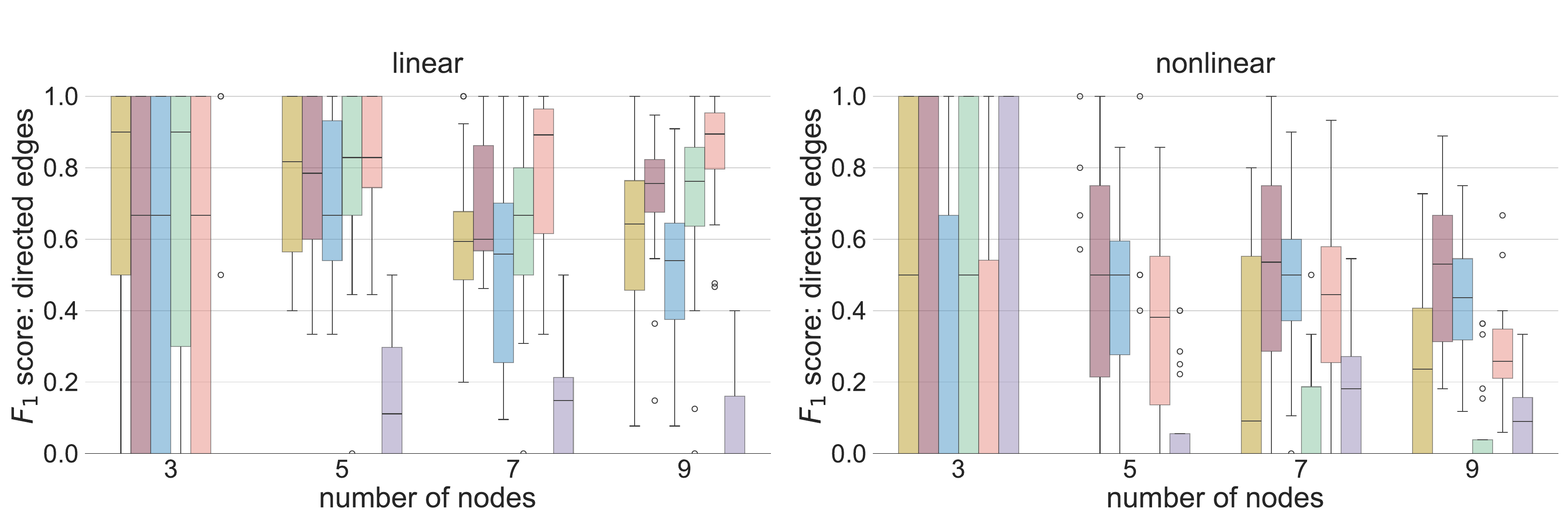}
}
\caption{Empirical results on sparse graphs with different numbers of nodes, on fully observable (no hidden variables) and latent variable models. We report the $F_1$ score w.r.t. to the binary decision of whether there is an identifiable direct edge or not (the higher, the better).}
\label{fig:direct_edge_f1_sparse}
\end{figure}

\begin{figure}[ht]
\centering
    \includegraphics[width=0.55\textwidth]{main_text_plots/legend.pdf}\\
\vspace{1.2em}
\subfigure[Dense model \label{fig:bi_edge_f1_dense}]{%
    \includegraphics[width=0.8\textwidth]{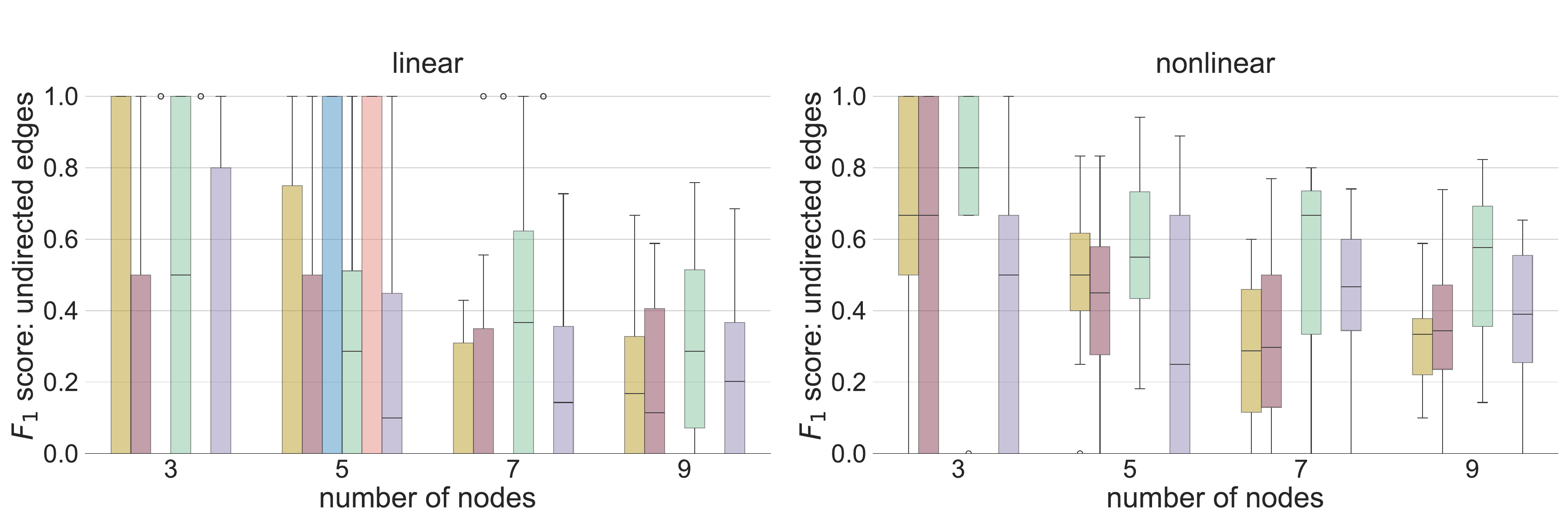}
}
\vspace{1.2em}
\subfigure[Sparse model\label{fig:bi_edge_f1_sparse_latent}]{%
    \includegraphics[width=0.8\textwidth]{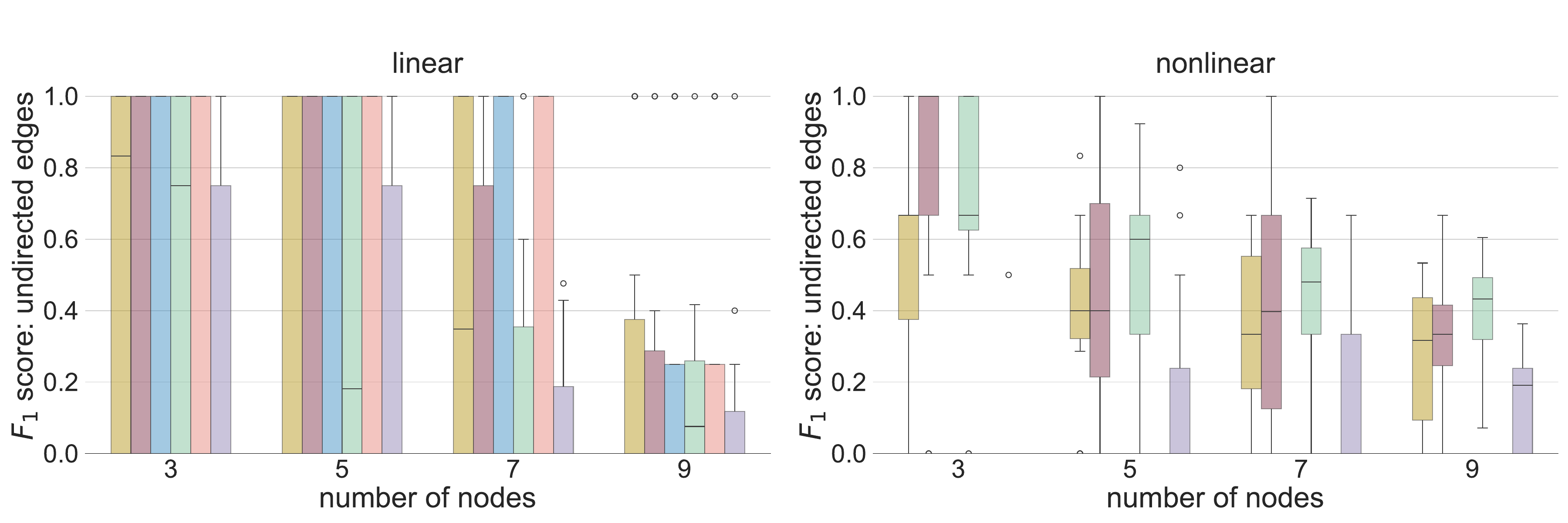}
}
\caption{Empirical results on sparse graphs with different numbers of nodes, on fully observable (no hidden variables) and latent variable models. We report the $F_1$ score w.r.t. to the binary decision of whether there is an unidentifiable edge or not (the higher, the better).}
\label{fig:bi_edge_f1}
\end{figure}

\subsection{Increasing number of samples}
In the following series of plots we demonstrate the scaling behaviour of our method w.r.t. to the number of samples.
\Cref{fig:experiments-samples-dense} shows results with edge probability 0.5 and \cref{fig:experiments-samples-sparse} with 0.3. All graphs contain seven observable nodes.
As before we observe that AdaScore performs comparably to other methods.
\begin{figure}[ht]
\centering
    \includegraphics[width=0.55\textwidth]{main_text_plots/legend.pdf}\\
\vspace{1.2em}
\subfigure[Fully observable model\label{fig:sample_sparse_observable}]{%
    \includegraphics[width=0.8\textwidth]{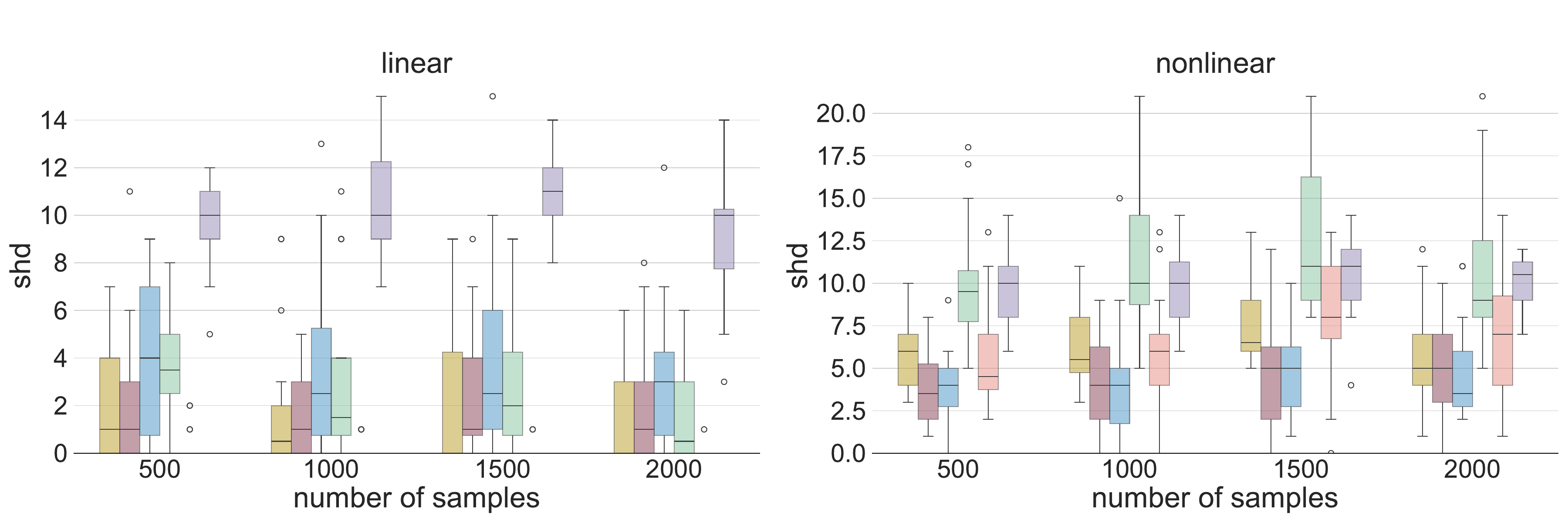}
}
\vspace{1.2em}
\subfigure[Latent variables model\label{fig:sample_sparse_latent}]{%
    \includegraphics[width=0.8\textwidth]{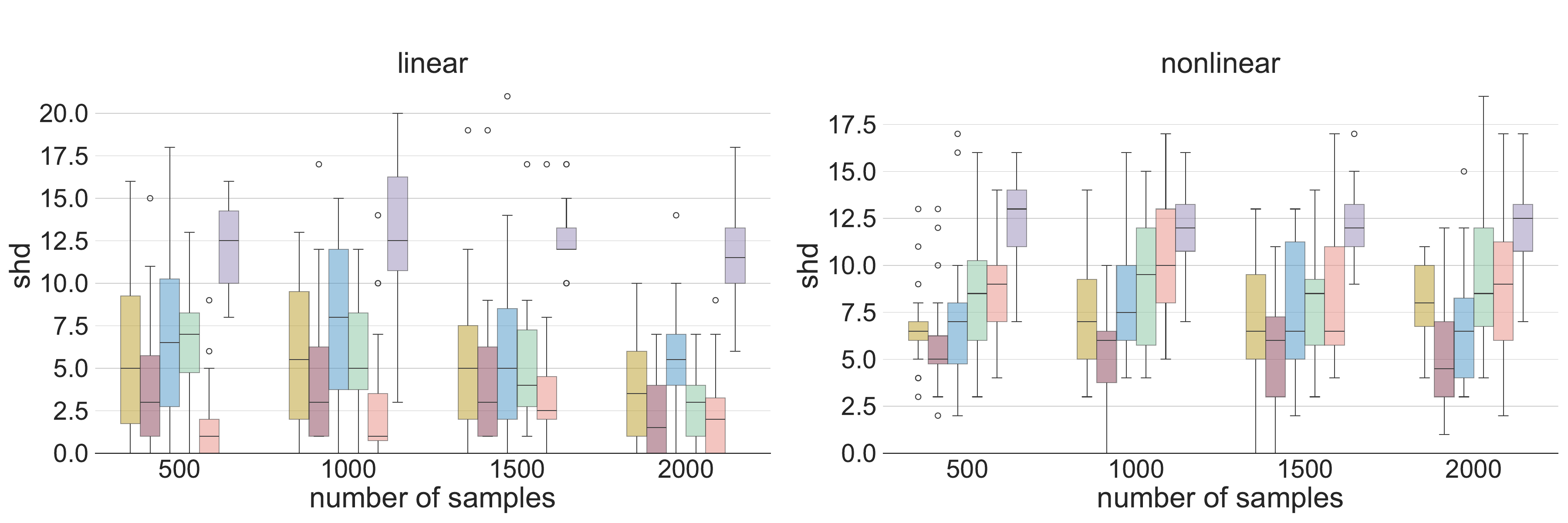}
}
\caption{Empirical results on sparse graphs with different numbers of samples and seven nodes, on fully observable (no hidden variables) and latent variable models. We report the SHD accuracy (the lower, the better).}
\label{fig:experiments-samples-sparse}
\end{figure}

\begin{figure}[ht]
\centering
    \includegraphics[width=0.55\textwidth]{main_text_plots/legend.pdf}\\
\vspace{1.2em}
\subfigure[Fully observable model\label{fig:sample_dense_observable}]{%
    \includegraphics[width=0.8\textwidth]{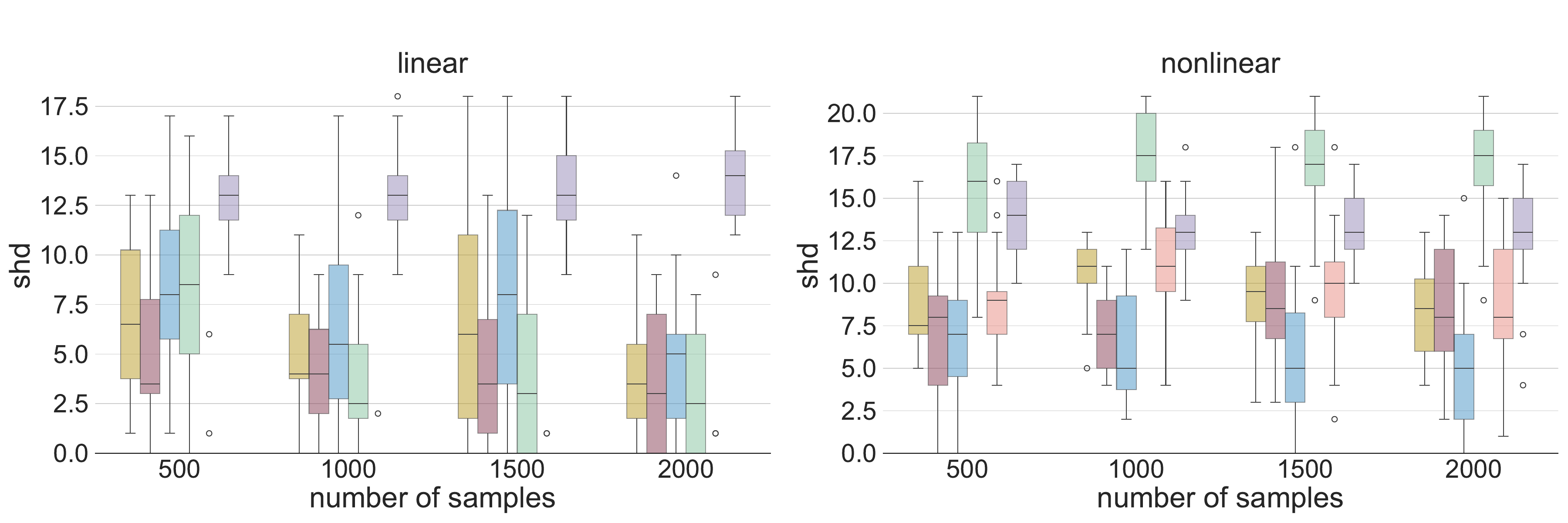}
}
\vspace{1.2em}
\subfigure[Latent variables model\label{fig:sample_dense_latent}]{%
    \includegraphics[width=0.8\textwidth]{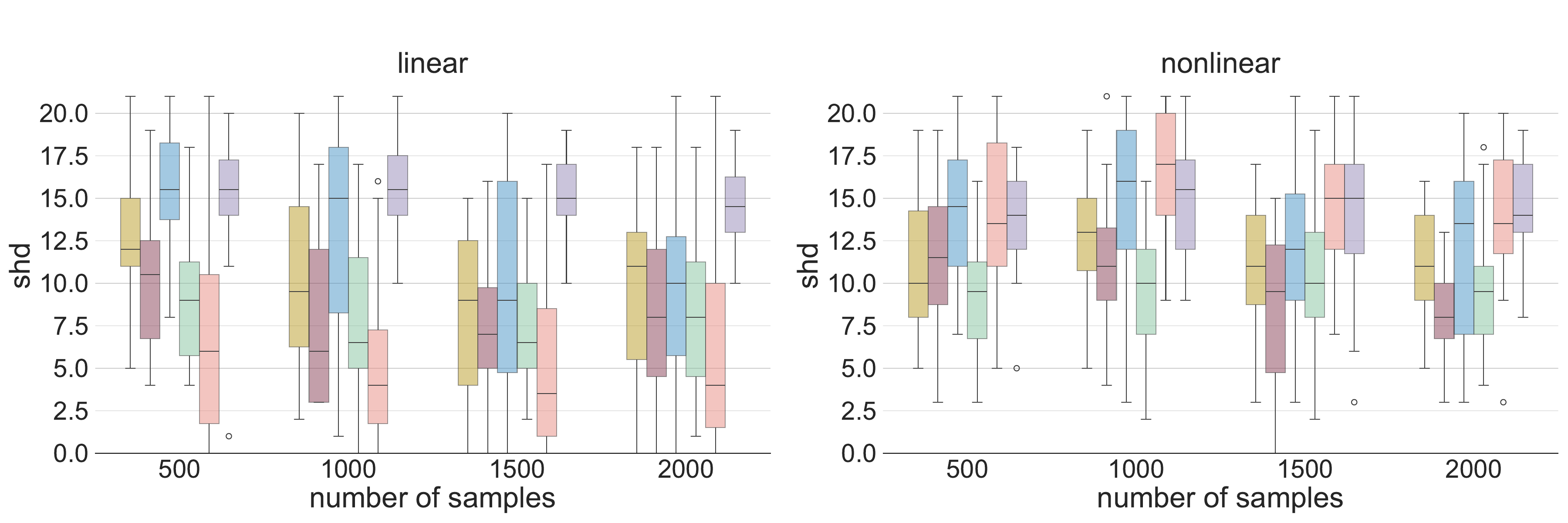}
}
\caption{Empirical results on dense graphs with different numbers of samples and seven nodes, on fully observable (no hidden variables) and latent variable models. We report the SHD accuracy (the lower, the better).}
\label{fig:experiments-samples-dense}
\end{figure}

\subsection{Runtimes}\label{app:experiments_time}
In Figures \ref{fig:time_dense} to \ref{fig:time_sparse_samples} we have plotted  the runtimes of the benchmarked methods in different settings.

\begin{figure}[ht]
\centering
    \includegraphics[width=0.55\textwidth]{main_text_plots/legend.pdf}\\
\vspace{1.2em}
\subfigure[Fully observable model\label{fig:time_observable_dense}]{%
    \includegraphics[width=0.8\textwidth]{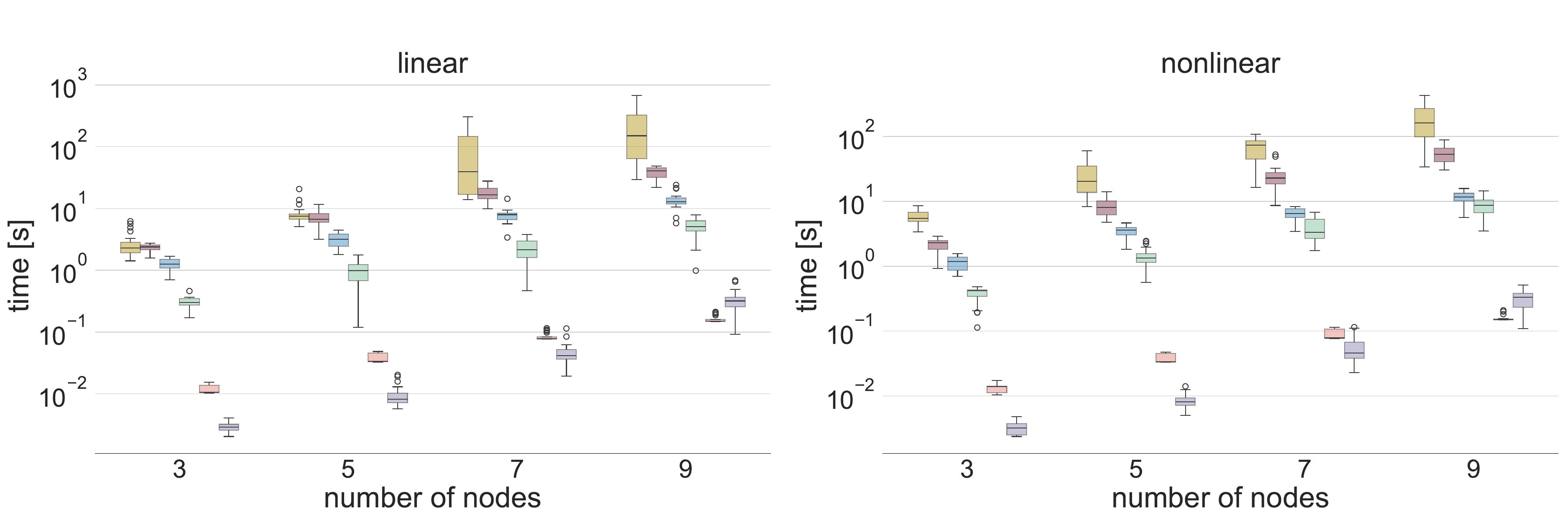}
}
\vspace{1.2em}
\subfigure[Latent variables model\label{fig:time_latent_dense}]{%
    \includegraphics[width=0.8\textwidth]{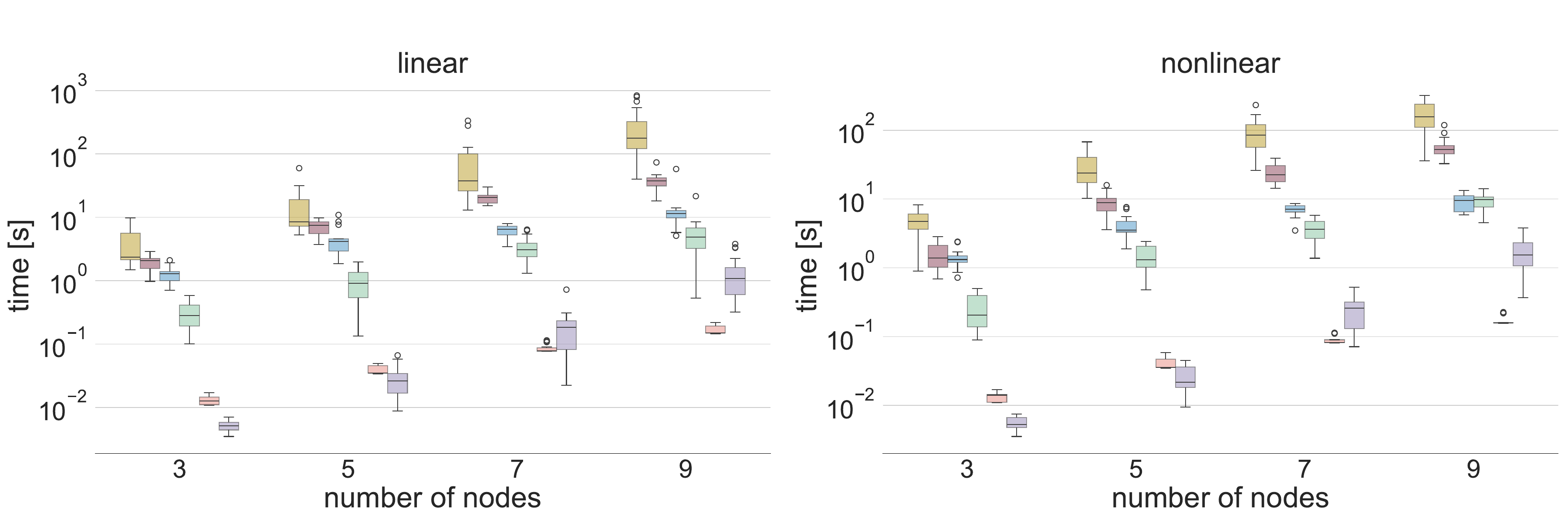}
}
\caption{Runtime in seconds on dense graphs with different numbers of nodes, on fully observable (no hidden variables) and latent variable models.}
\label{fig:time_dense}
\end{figure}

\begin{figure}[ht]
\centering
    \includegraphics[width=0.55\textwidth]{main_text_plots/legend.pdf}\\
\vspace{1.2em}
\subfigure[Fully observable model\label{fig:time_observable_sparse}]{%
    \includegraphics[width=0.8\textwidth]{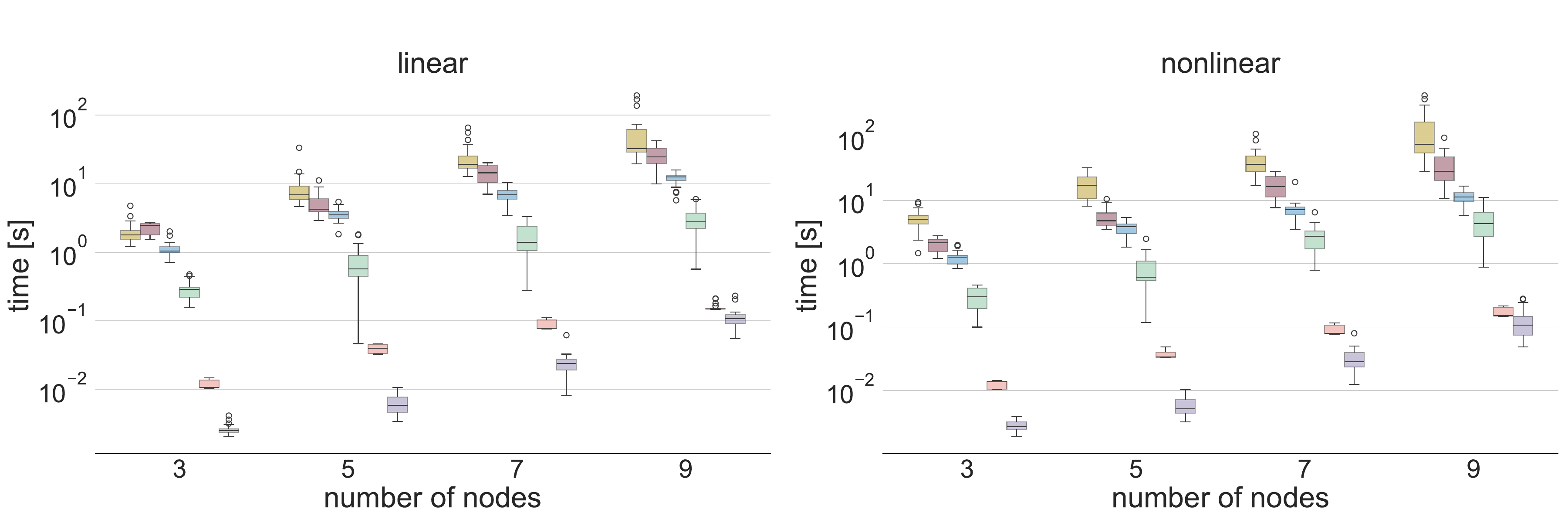}
}
\vspace{1.2em}
\subfigure[Latent variables model\label{fig:time_latent_sparse}]{%
    \includegraphics[width=0.8\textwidth]{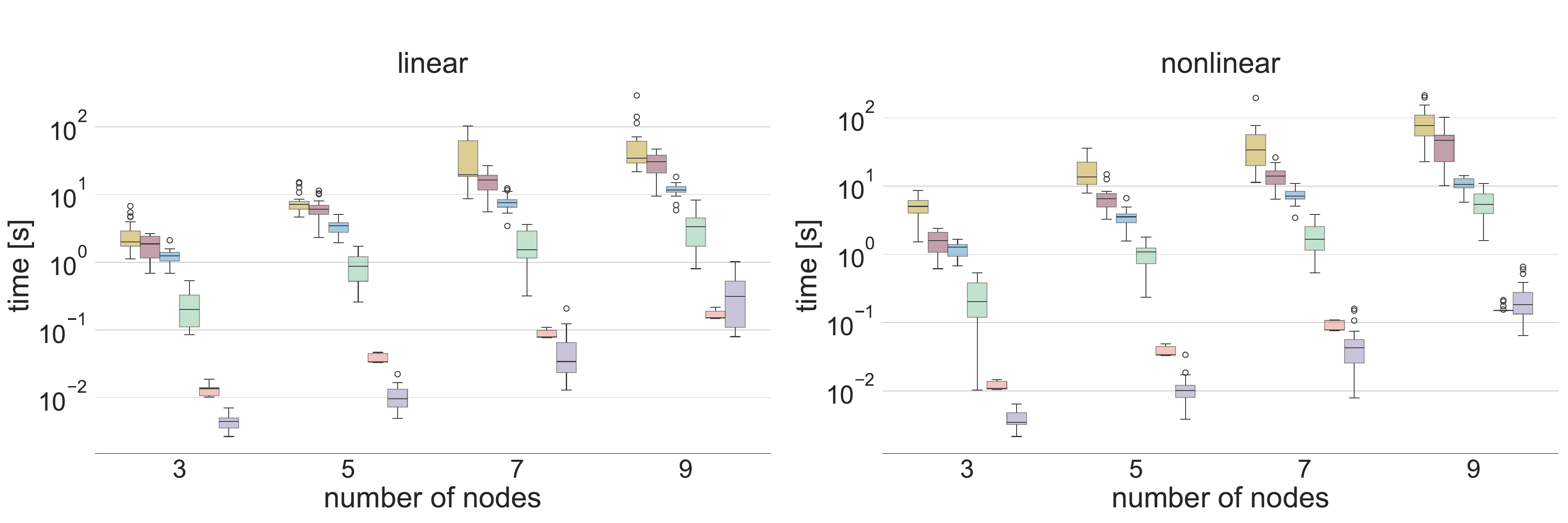}
}
\caption{Runtime in seconds on sparse graphs with different numbers of nodes, on fully observable (no hidden variables) and latent variable models.}
\label{fig:time_sparse}
\end{figure}

\begin{figure}[ht]
\centering
    \includegraphics[width=0.55\textwidth]{main_text_plots/legend.pdf}\\
\vspace{1.2em}
\subfigure[Fully observable model\label{fig:time_observable_dense_samples}]{%
    \includegraphics[width=0.8\textwidth]{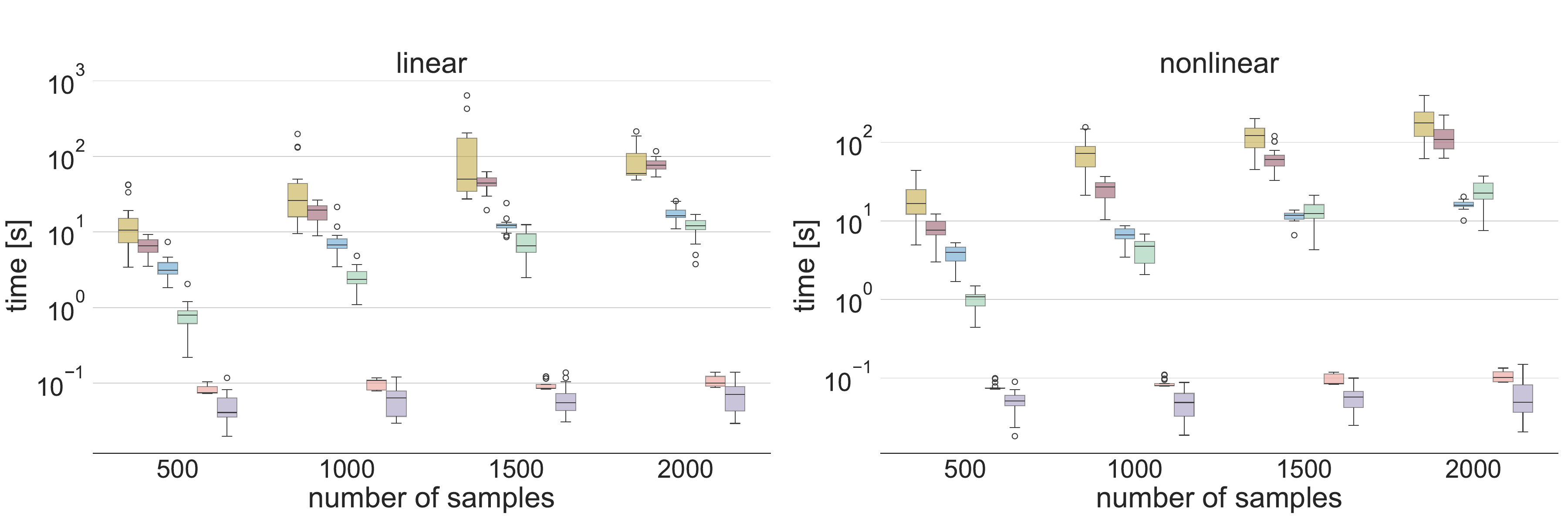}
}
\vspace{1.2em}
\subfigure[Latent variables model\label{fig:time_latent_dense_samples}]{%
    \includegraphics[width=0.8\textwidth]{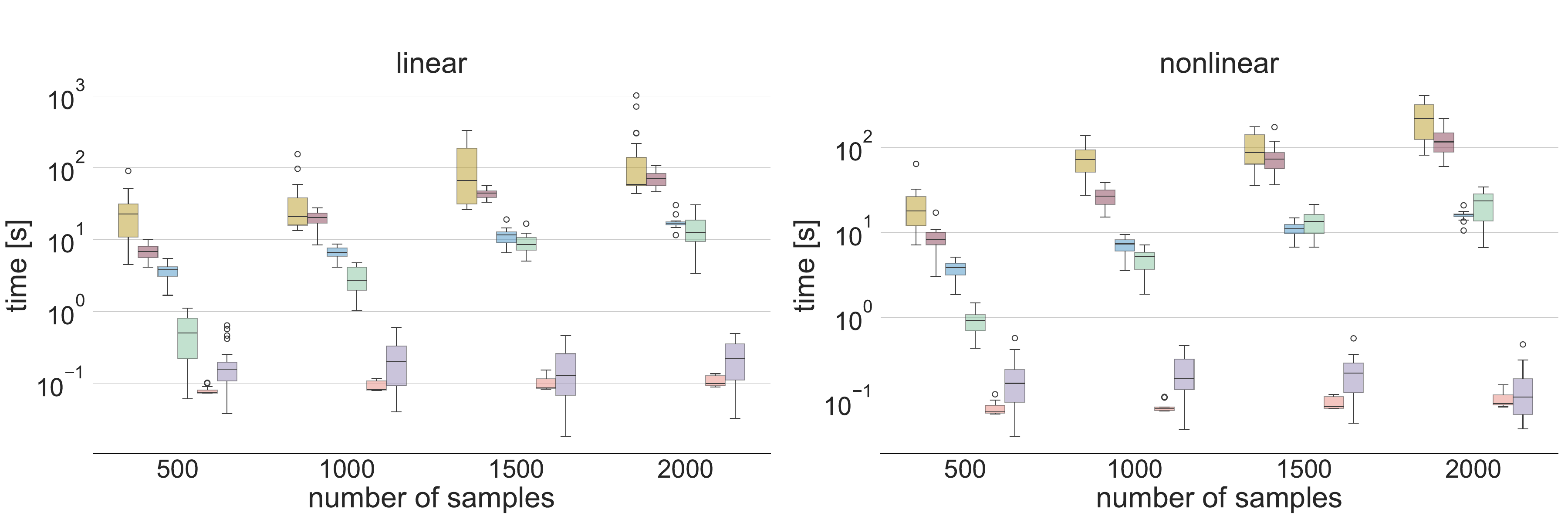}
}
\caption{Runtime in seconds on dense graphs with different numbers of samples, on fully observable (no hidden variables) and latent variable models.}
\label{fig:time_dense_samples}
\end{figure}

\begin{figure}[ht]
\centering
    \includegraphics[width=0.55\textwidth]{main_text_plots/legend.pdf}\\
\vspace{1.2em}
\subfigure[Fully observable model\label{fig:time_observable_sparse_samples}]{%
    \includegraphics[width=0.8\textwidth]{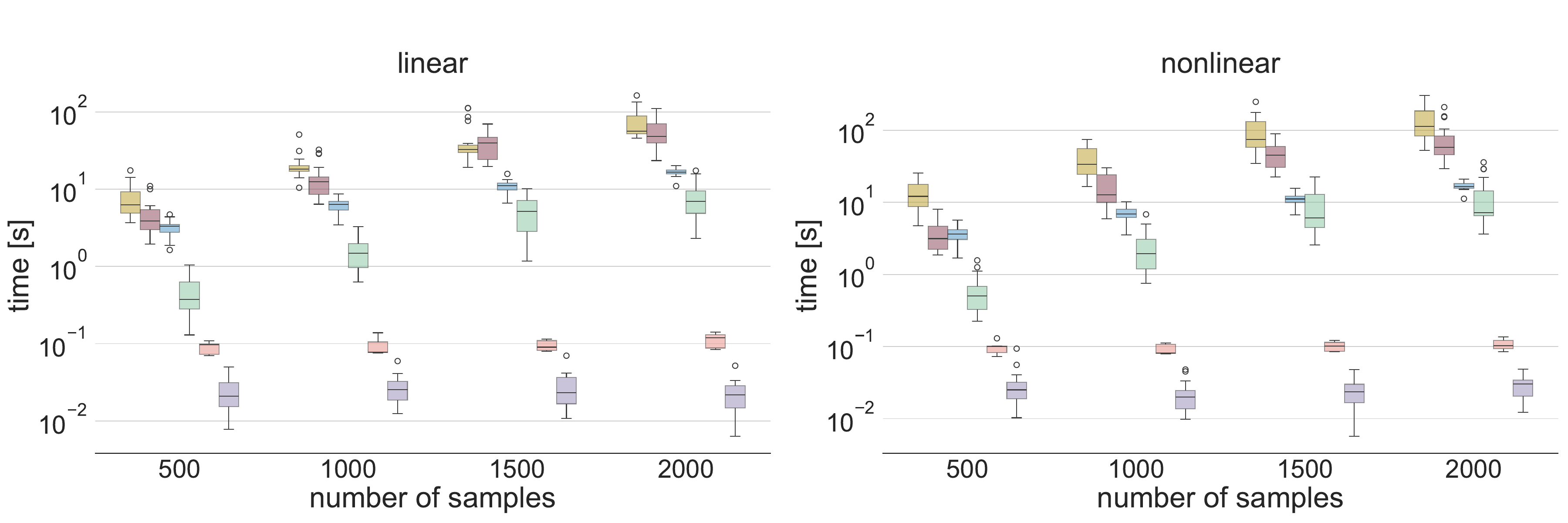}
}
\vspace{1.2em}
\subfigure[Latent variables model\label{fig:time_latent_sparse_samples}]{%
    \includegraphics[width=0.8\textwidth]{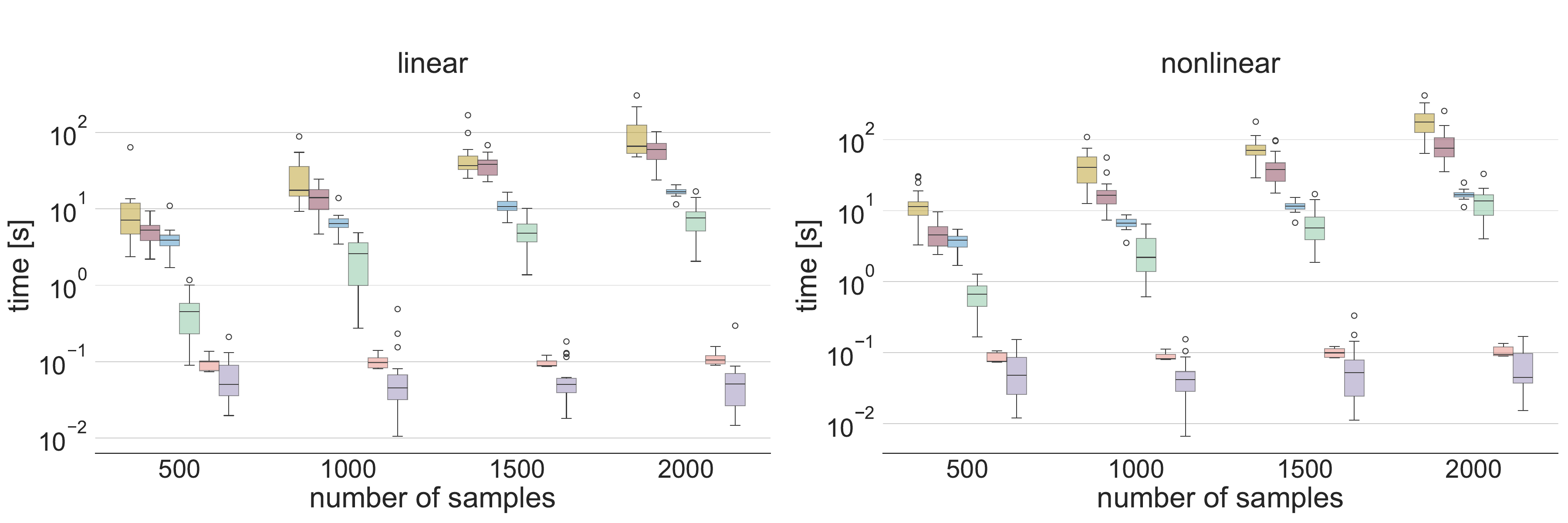}
}
\caption{Runtime in seconds on sparse graphs with different numbers of samples, on fully observable (no hidden variables) and latent variable models.}
\label{fig:time_sparse_samples}
\end{figure}

\subsection{Limitations}\label{app:limitations}
In this section, we remark the limitations of our empirical study. It is well known that causal discovery lacks meaningful, multivariate benchmark datasets with known ground truth. For this reason, it is common to rely on synthetically generated datasets. We believe that results on synthetic graphs should be taken with care, as there is no strong reason to believe that they should mirror the benchmarked algorithms' behaviors in real-world settings, where often there is no prior knowledge about the structural causal model underlying available observations.

\end{document}